\newcommand{\blockcomment}[1]{}
\renewcommand{\l}{\left}
\renewcommand{\r}{\right}
\DeclareMathOperator*{\argmax}{argmax}
\newcommand{\nn}{\nonumber}
\newcommand{\T}{\top}
\newcommand{\E}{\mathbb{E}}
\newcommand{\R}{\mathbb{R}}
\renewcommand{\P}{\mathbb{P}}
\newcommand{\cA}{\mathcal{A}}
\newcommand{\cP}{\mathcal{P}}
\renewcommand{\a}{\alpha}
\renewcommand{\b}{\beta}
\renewcommand{\d}{\delta}
\newcommand{\e}{\varepsilon}
\newcommand{\g}{\gamma}
\newcommand{\n}{\eta}
\newcommand{\by}{\boldsymbol{y}}
\newcommand{\bp}{\boldsymbol{p}}
\newcommand{\bq}{\boldsymbol{q}}
\newcommand{\bb}{\boldsymbol{b}}
\newcommand{\bE}{\boldsymbol{E}}
\newcommand{\bK}{\boldsymbol{K}}
\newcommand{\bQ}{\boldsymbol{Q}}
\newcommand{\bV}{\boldsymbol{V}}
\newcommand{\bY}{\boldsymbol{Y}}
\newcommand{\bU}{\boldsymbol{U}}
\newcommand{\bB}{\boldsymbol{B}}
\newcommand{\bA}{\boldsymbol{A}}
\theoremstyle{plain}
\newtheorem{thm}{Theorem}[section]
\newtheorem{lem}[thm]{Lemma}
\theoremstyle{definition}
\newtheorem{defn}[thm]{Definition}
\theoremstyle{remark}
\titlespacing*{\section}{0pt}{6pt}{0pt}
\titlespacing*{\subsection}{0pt}{6pt}{0pt}
\title{Quantitative Bounds for Length Generalization in Transformers}
\renewcommand{\and}{\end{tabular}\hskip 1em plus .17fil\begin{tabular}[t]{c}}
\author{Zachary Izzo\thanks{Equal contribution, order determined by coin flip.}$^{~\,,1}$\!,\quad Eshaan Nichani${}^{*,2}$, \quad Jason D. Lee$^{3}$\\
\normalsize{$^{1}$NEC Labs America},\,\,\, 
\normalsize{$^{2}$Princeton University},\,\,\,
\normalsize{$^{3}$UC Berkeley}
\vspace{-5mm}}
\begin{document}

\maketitle 

\vspace{-5mm}

\begin{abstract}
We study the problem of length generalization (LG) in transformers: the ability of a model trained on shorter sequences to maintain performance when evaluated on much longer, previously unseen inputs. Prior work by \cite{huang2024formal} established that transformers eventually achieve length generalization once the training sequence length exceeds some finite threshold, but left open the question of how large it must be.
In this work, we provide the first quantitative bounds on the required training length for length generalization to occur.
Motivated by previous empirical and theoretical work, we analyze LG in several distinct problem settings:
$\ell_\infty$ error control vs. average error control over an input distribution,
infinite-precision softmax attention vs. finite-precision attention (which reduces to an argmax) in the transformer, and one- vs. two-layer transformers.
In all scenarios, we prove that LG occurs when the internal behavior of the transformer on longer sequences can be ``simulated'' by its behavior on shorter sequences seen during training.
Our bounds give qualitative estimates for the length of training data required for a transformer to generalize, and we verify these insights empirically.
These results sharpen our theoretical understanding of the mechanisms underlying extrapolation in transformers, and formalize the intuition that richer training data is required for generalization on more complex tasks.
\end{abstract}

\section{Introduction}
\label{sec: intro}

An important problem in the training of large language models (LLMs) is length generalization (LG), which is the ability of a model to generalize to input sequences longer than those encountered during training. Prior works have studied the ability of transformers to length generalize on simple testbed tasks~\citep{anil2022exploring,kazemnejad2023impact}, yet the success of LG varies widely from task to task. Recent theoretical work has thus sought to characterize which tasks admit LG. In particular, \citet{zhou2023algorithms} introduced the RASP-L conjecture, which states that transformers can length generalize on tasks which are expressible by a ``simple" RASP-L program (a variant of the RASP language introduced in \citet{weiss2021thinking}). \citet{huang2024formal} later formalized and partially proved this conjecture, showing that tasks expressible by a limiting object called a ``limit transformer," which includes tasks expressible by a C-RASP program~\citep{yang2024counting}, admit LG at some finite training length. These results, however, are asymptotic in nature and rely on ``identification in the limit"~\citep{gold1967language, angluin1980inductive} style arguments, where the inference procedure can eventually rule out all hypotheses except for the ground truth. In particular, for a fixed task $f$ on which LG is possible, it is not specified what the minimum training length is for LG to occur.

Our goal in this paper is to characterize how long training sequences need to be in order for a transformer to generalize to sequences of arbitrary length. Specifically, we adopt the limit transformer formulation from \citet{huang2024formal}, and aim to provide quantitative bounds on the minimum $N$ such that two limit transformers $f, g$ which agree on inputs of length $\le N$ approximately agree on inputs of arbitrary length. 

We study this question in two distinct regimes. In \Cref{sec: hard attn results}, we consider limit transformers operating at finite-precision, which matches the setting of \citet{huang2024formal}. This results in a hard attention pattern for sequences of a certain length. Our main results are that for one-layer limit transformers, for both worst-case error control (\Cref{thm: main}) and average error control over a distribution (\Cref{thm: dirichlet}) the minimum such $N$ scales monotonically with the parameter norms of the transformer, the positional embedding periodicity $\Delta$, ``locality'' parameter $\tau$, token vocabulary size $|\Sigma|$, and inverse error $\e^{-1}$. In \Cref{sec:infinite-precision}, we additionally study the setting where the parameters and forward pass are computed at infinite precision. This allows us to establish results independent of the model precision, and is a more suitable model for multi-layer transformers where the inputs to later layers ``mix" the first-layer inputs, and hence can be treated as continuous. In \Cref{thm:infinite-precision}, we establish a quantitative LG bound for two-layer transformers, which scales with the transformer weight norms.

The proofs of our main results in both the finite- and infinite-precision settings rely on the following high-level ``simulation argument." Given two limit transformers $f, g$ and a long input string $x$, we construct a string $z$ of length at most $N$ such that $f(x) \approx f(z)$ and $g(x) \approx g(z)$; if $f$ and $g$ agree on all inputs of length $\le N$, then they must satisfy $f(x) \approx g(x)$. The key step in this simulation argument is to construct $z$ which approximately preserves various sufficient statistics which are necessary for computing the forward pass of the model. The proof of \Cref{thm: main} does this explicitly, by ensuring that $z$ approximates the empirical frequencies of each token in the hard attention pattern, while the proof of \Cref{thm:infinite-precision} does this randomly, sampling $z$ from a specially defined distribution and invoking the probabilistic method. Nevertheless, the unifying principle in both settings is that LG is possible whenever the internal behavior of a transformer on a larger sequence can be simulated by its behavior on a shorter sequence.



Altogether, our results make progress towards both characterizing a natural hierarchy of ``difficulty" amongst length-generalizable tasks, and more practically speaking, developing a better understanding of how to scale training context length for LLMs.


\section{Related Work}
\label{sec: related work}

A number of works have empirically studied the ability of transformers to length generalize on different tasks. \citet{bhattamishra2020ability} studies the ability of transformers to length generalize on various formal language tasks. \citet{anil2022exploring} show that transformers fail to generalize on certain reasoning tasks, unless certain scratchpad prompting techniques are used. \citet{kazemnejad2023impact} study the role of various positional encoding schemes on LG. \citet{zhou2023algorithms} study LG on various algorithmic tasks, and observe that tasks with a short RASP program \citep{weiss2021thinking} have better LG, leading to their RASP-L conjecture. This is supported by works such as \citet{jelassi2024repeat}, who observe that for the string copying task, transformers can length generalize when there are no repeated tokens, but fail once the string has repeats. LG has also been studied outside the context of transformers. For instance, \citet{nerem2025graph} showed that trained graph neural networks can learn the Bellman-Ford algorithm which generalizes to shortest paths of arbitrary length. \citet{buitrago2025understanding} studied LG in the context of recurrent models such as state-space models or linear attention.

In light of these LG challenges, recent works have designed specific positional encoding schemes, such as Alibi~\citep{press2021train} or Abacus~\citep{mcleish2024transformers} to improve LG. Other works have also considered modifying the input with a scratchpad, extra positional information, or alternative training techniques to improve LG on arithmetic tasks~\citep{lee2023teaching,shen2023positional,cho2025arithmetic,lee2025selfimproving,cai2025extrapolation}. Most recently, architectural modifications such as looping~\citep{fan2024looped} or recurrence~\citep{mcleish2024transformers} have led to LG improvements.
Other approaches by \citet{li2025vanishing, anson2025scale,hashemi2025tropical} have considered making modifications to the attention mechanism to improve LG.

On the theoretical front, \citet{huang2024formal} partially resolves the RASP-L conjecture for tasks expressible by limit transformers. \citet{yang2025knee} shows the equivalence of a class of transformers to the C-RASP programming language and provide empirical evidence that their theory predicts the depth of a transformer which is required for LG to occur in practice. \citet{wang2024transformers} proves that 1-layer transformers trained with gradient descent length generalize on a sparse token selection task. \citet{ahuja2024provable} show that a model resembling a self-attention head can length generalize. \citet{golowich2025role} show that an abstraction of the self-attention head can length generalize on tasks which depend on a sparse subset of input tokens. \citet{veitsman2025born} studied transformer LG related to copy and retrieval operations, and find that theoretical limitations do indeed transfer to practice. The work of \citet{chen2025nonasymptotic} is at first glance the most similar to ours, as the authors give nonasymptotic bounds for LG. However, they focus on general models of computation with variable-length input rather than on transformers, offering complementary insights.


\section{Problem Formulation} \label{sec: lt definition}
\subsection{Limit Transformers}
We are interested in the ability of transformers to generalize to sequences of arbitrary length, but real transformer architectures are limited by a bounded context length. To address this issue, \cite{huang2024formal} introduced the concept of a \emph{limit transformer}. These objects have an infinite context length and generalized positional embeddings, allowing them to distinguish between arbitrarily many positions in their context. The computation of a limit transformer proceeds as follows:
\[
\by_i^{(0)} = \bE_{x_i} + \bp_i, \quad i=1,\ldots,|x|,
\]
\[
a^{(l,h)}_{i,j} = (\by^{(l-1)}_j)^\T \bK_{l,h}^\T \bQ_{l,h} \by^{(l-1)}_i + \phi_{l,h}(j,i),
\]
\[
\bY^{(l)}_i = \by^{(l-1)}_i + \sum_{h=1}^H \frac{\sum_{j=1}^i \exp\l(\log|x| \cdot a^{(l,h)}_{i,j} \r) \bV_{l,h}\by^{(l-1)}_j}{\sum_{j=1}^i \exp\l(\log|x| \cdot a^{(l,h)}_{i,j} \r)},
\]
\[
\by^{(l)}_i = \bY^{(l)}_i + \bB_l\cdot \psi_l(\bA_l \bY^{(l)}_i + \bb_l),
\]
\[
T(x)_i = \bU \by^{(L)}_i.
\]
Here $x$ is the input sequence with token $x_i \in \Sigma$ in the $i$-th position, $\bE_{x_i} \in \R^d$ is the embedding of the $i$-th token, and $\bp_i$ is the $i$-th (absolute) positional embedding vector. The super- and sub-scripts $(l,h)$ denote the $l$-th layer of the transformer and the $h$-th attention head. $a^{(l,h)}_{i,j}$ is the $(l,h)$ attention logit between token $i$ and $j$, $\bK_{l,h}$, $\bQ_{l,h}$, and $\bV_{l,h}$ are the the $(l,h)$ key, query, and value embedding matrices, respectively. The functions $\phi_{l,h}(j,i)$ denote allow for modifications to the attention pattern which cannot be captured by positional embedding vectors alone. $\bY^{(l)}_i$ denote the pre-activation features for layer $l$ at position $i$, and $\by^{(l)}_i$ denote the post-activation features which have been passed through a single-hidden-layer MLP with $1$-Lipschitz activation $\psi_l$, plus a residual connection; $\bA_l$ and $\bb_l$ denote the hidden layer weights and bias term for this MLP, and $\bB_l$ denotes the output layer weights. Finally, $T(x)_i$ denotes the output logits at position $i$ which are computed via the unembedding matrix $\bU$.

Without additional constraints, a limit transformer cannot be recovered without seeing arbitrarily long input sequences. Thus, \cite{huang2024formal} also make two additional assumptions. First, the limit transformers in question are assumed to be \emph{$\Delta$-periodic}, defined as $\bp_i = \bp_{i+\Delta}$
for all $i$. Second, the limit transformers are also \emph{translation-invariant}, defined as $\phi_{l,h}(j,i) = \phi_{l,h}(j+t, i+t)$ for all $t$,
and \emph{$\tau$-local}, defined as $ \phi_{l,h}(j,i)=0$ whenever $i > j+\tau$.
\subsection{Finite-Precision Attention}
\label{sec: hardmax}
\cite{huang2024formal} assume that all of the transformer parameters, as well as the softmax attention, are computed at $p$ finite bits of precision. This is motivated by \cite{merrill2023precision}, and indeed, finite precision is a real constraint when LLMs are implemented in practice.

For our analysis, the precise instantiation of this assumption is that we will assume that all quantities of absolute value $\leq 2^{-p}$ are rounded to 0 during each intermediate computation of the limit transformer. Even this definition requires further clarification, particularly for the computation of the softmax. This is because the softmax (at infinite precision) is invariant to a constant shift in all of the logits; thus, in principal, the softmax may be computed as a collection of terms each of which has absolute value less than $2^{-p}$, in which case it is unclear what to do. To avoid this problem, we take the usual step for improving the numerical stability of softmax and perform computations with the largest logit shifted to 0. Equivalently, we subtract the largest logit from every logit in the softmax. After this standardization, all terms in the softmax (post exponentiation) with absolute value at most $2^{-p}$ are rounded to 0, then the computation proceeds as usual.

The impact of this assumption is as follows. Let $f$ be a single-layer limit transformer which is $\tau$-local, $\Delta$-periodic, and translation invariant as defined above.
We can define the attention matrix $A \in \mathbb{R}^{\Delta\abs{\Sigma} \times \Delta\abs{\Sigma}}$ indexed by pairs $(y,i)$ for $y\in \Sigma$ and $i \in \mathbb{Z}/\Delta$, where $A_{(y,i),(z,j)} := (\bE_z+\bp_i)^\top K^\top Q (\bE_y+\bp_j).$
For $y\in \Sigma$ and $i\in \mathbb{Z}/\Delta$, define
\[ \cA_{(y,i)} := \{A_{(y,i),(z,i-k)} + \phi(1,k+1) \: | \: z \in \Sigma, \, k=0,\ldots,\tau\}. \]
Note that $\cA_y$ contains all of the possible attention logits that we can observe when processing a token $x_i = y$. We then define the \emph{logit margin} $\g(f)$ of $f$ by
\[ \g(f) := \min_{\substack{y\in\Sigma \\ i\in\mathbb{Z}/\Delta}} \: \: \min_{\substack{a, a' \in \cA_{(y,i)} \\ a-a' > 0}} \: a-a', \]
where the minimum over an empty set is defined as $+\infty$. The quantity $\g(f)$ is the smallest nonzero gap we can observe between a maximal attention logit and any non-maximal logit.

Now let $x$ be any input sequence and suppose that $N = |x| \geq 2^{p/\g(f)}$. Consider an individual term in the softmax, post-exponentiation but before the rounding procedure. These have the form
\begin{align*}
    s_j &= \exp(\log N \cdot [(A_{(x_N, N),(x_j,j)} + \phi(j, N)) - (A_{(x_N,N),(x_{j^*},j^*)} + \phi(j^*, N))]) \\
    &= \exp(\log N \cdot [(A_{(x_N, N),(x_j,j)} + \phi(1, N-j+1)) - (A_{(x_N,N),(x_{j^*},j^*)} + \phi(1, N-j^*+1))]) \\
    &= \exp(\log N \cdot (a-a^*)),
\end{align*}
where $j^* \in \argmax_{j'=1,\ldots,i} A_{(x_N,N),(x_{j'},j')} + \phi(j', N)$ is an index with the largest attention logit and $a, a^* \in \cA_{(x_N,N)}$ are simply a renaming of the logits to emphasize that these are quantities in $\cA_{(x_N,N)}$. The second equation follows by the translation invariance of $\phi$.

There are now two cases. If $a=a^*$ (i.e., the $j$-th position attains maximal attention for the input sequence), then $s_j = \exp(0) = 1$ and this contribution to the softmax will not be affected by the rounding procedure. On the other hand, if $a\neq a^*$ (i.e., the $j$-th position attains strictly sub-maximal attention for the input sequence), then by definition of $\g(f)$, $a-a^* \leq -\g(f)$ and we have
\[ s_j = \exp(\log N \cdot (a-a^*)) \leq \exp( -\frac{p\log 2}{\g(f)} \g(f)) = 2^{-p}. \]
Thus, this term will be rounded to 0. It follows that for sequences $x$ of length $N \geq 2^{p/\g(f)}$, \emph{softmax attention acts as a hardmax} and the computation is performed as a uniform average over the tokens with argmax attention.

As can be seen from this analysis, while these design choices may seem like minutiae, they have outsized effects on the analysis, and this fact has been observed in previous work \citep{jerad2025hardattn}. There is also empirical evidence that attention does indeed concentrate on only a few tokens \citep{bietti2023birth, rogers2021bertology} and that finite precision does have a noticeable impact on LLM behaviors \citep{he2025nondeterminism}.

\subsection{Infinite-Precision Attention}\label{sec:infinite-precision-intro}
Deviating from previous works, we also provide results when the transformer's attention computations (and indeed, all internal computations) are performed at infinite precision. In this case, we do not need to make careful assumptions about rounding. Instead, however, there is an additional subtlety about the scaling of the attention logits. In particular, given infinite precision and bounded weight matrices, the effect of the $\tau$-suffix (i.e the substring $x_{i-\tau:i}$ for the token $x_i$) on the LT's computation must \emph{always} decay to 0 as the length of the input sequence diverges to infinity. This is undesirable as it precludes important functions which transformers are empirically capable of learning, e.g., the induction head. To alleviate this shortcoming, we propose scaling \emph{only the $\tau$-suffix logits} by a logarithmic factor: 
\begin{equation}\label{eq:infinite-precision}
\begin{aligned}
a^{(l,h)}_{i,j} &= (\by^{(l-1)}_j)^\T \bK_{l,h}^\T \bQ_{l,h} \by^{(l-1)}_i + \log i \cdot \phi_{l,h}(j,i),\\
\bY^{(l)}_i &= \by^{(l-1)}_i + \sum_{h=1}^H \frac{\sum_{j=1}^i \exp\l(a^{(l,h)}_{i,j} \r) \bV_{l,h}\by^{(l-1)}_j}{\sum_{j=1}^i \exp\l(a^{(l,h)}_{i,j} \r)}.
\end{aligned}
\end{equation}
Depending on the size of the $\tau$-suffix positional embeddings, this scaling increases the expressivity of LTs to give three different possible behaviors. Consider the computation of the $h$th attention head in the first layer. For $j < i - \tau$, the contribution of the $j$th token will be proportional to $\exp\l(\bE_{x_j}^\top \bK_{1,h}^\T \bQ_{1,h} \bE_{x_i}\r)$. Therefore for any $s \in \Sigma$, the total contribution of all tokens equal to $s$ will be $\exp\l(\bE_{s}^\top \bK_{1,h}^\T \bQ_{1,h} \bE_{x_i}\r) \cdot \mu(x_{\le i})_s i$, where $\mu(x_{\le i})_s = \frac{1}{i}\sum_{j=1}^i \mathbf{1}(x_j = s)$ is the empirical frequency of $s$ in the first $i$ tokens of $x$. On the other hand, the contribution of the $j$th token for $i - \tau \le j \le i$ is $\exp\l(\bE_{x_j}^\top \bK_{1,h}^\T \bQ_{1,h} \bE_{x_i}\r) \cdot i^{\phi_{1, h}(j,i)}$. This yields the following three regimes:
\begin{enumerate}
    \item \textbf{Token Dominant} ($\max_{t \le \tau} \phi_{1, h}(i-t, i) < 1$). For typical sequences, the empirical frequences $\mu(x_{\le i})_s$ will be $\Theta(1)$ for large $i$. Therefore if $\phi$ is bounded below 1, as $i \rightarrow \infty$ the contribution of the $\tau$-suffix will grow negligible.
    \item \textbf{Balanced} ($\max_{t \le \tau} \phi_{1, h}(i-t, i) = 1$). Both the total contribution of all tokens equal to $s$, as well as tokens in the $\tau$-suffix with $\phi_{1, h}(j,i) = 1$, will be proportional to $i$, and thus affect the output of self-attention in a constant fashion as $i \rightarrow \infty$.
    \item \textbf{Position Dominant} ($\max_{t \le \tau} \phi_{1, h}(i-t, i) > 1$). The contribution of the $\tau$-suffix dominates that of the rest of the sequence, with the self-attention weights concentrating on those tokens $j$ which maximize $\phi_{1, h}(j, i)$.
\end{enumerate}
Thus, the proposed scaling allows us to consider the full range of possible relative importance for the local information (found in the $\tau$-suffix) and global information (found in the $\tau$-prefix) of the input. In spite of these three qualitatively different regimes, we are able to provide a unified analysis which addresses LG in all three scenarios simultaneously.

We will operate in the infinite precision setting for our results on multi-layer transformers (Theorem~\ref{thm:infinite-precision}). Intuitively, in the first layer, the tokens are in fact discrete and hard attention to a subset of these tokens may be desirable. Beyond the first layer, however, the token representations become continuous mixtures of these discrete objects and are in some sense more ``inherently'' continuous. This makes the infinite precision setup more suitable for this setting.

Lastly, we remark that the while the details of the finite- and infinite-precision analysis are quite different, the fundamental analysis technique is the same. Namely, we show that it is possible to \emph{simulate} the behavior of the transformer on longer sequences using strings of bounded length. 
The implications of the theory for the data requirement vs. various parameters of the target function also align qualitatively for both precision regimes, and these insights match with our empirical results.

\section{Length Generalization with Finite Precision}
\label{sec: hard attn results}
In this section, we give upper bounds on the length of training data required for a single-layer limit transformer to generalize to sequences of arbitrary length in the finite precision setting.

Let $f$ and $g$ be two single-layer limit transformers which are $\tau$-local, $\Delta$-periodic, translation invariant, and operate at $p$ finite bits of precision as described in Section~\ref{sec: lt definition}. Let $\bV_f, \bE^f_s, (\bA_f, \bB_f)$ be the value matrix, token embedding, and MLP weights for $f$ (and analogously defined for $g$), 
and define
\[L_f = \max \{ \|\bU_f\|(1 + \|\bA_f\| \|\bB_f\| \|\psi_f\|)(\|\bV_f (\bE^f_s + \bp_i)\| + \|\bE^f_s + \bp_i\| + \|b_f\|)\: : \: s \in \Sigma\},\] $L_g$ similarly for $g$, and $L = L_f+L_g$. Finally, let $\g = \min\{\g(f), \g(g)\}$, with $\g(f)$ and $\g(g)$ as defined in Section~\ref{sec: hardmax}. We first establish LG for single-layer transformers in an $\ell_\infty$ setting.

\begin{restatable}{thm}{main} \label{thm: main}
There exists an 
$
N = O\l(\max\l\{2^{p/\gamma}, \,\frac{L^2\Delta^7|\Sigma|^6\tau^2}{\e^2}\r\}\r)
$
such that $\|f(x) - g(x)\|\leq \e$ for all $|x| \leq N$ implies that $\|f(x) - g(x)\| = O(\e)$ for \emph{any} sequence $x$.
\end{restatable}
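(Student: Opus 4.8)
The plan is to implement the ``simulation argument'' sketched in the introduction. Fix a long input $x$ with $N' := |x|$ potentially much larger than $N$. Since both $f$ and $g$ are single-layer, $\tau$-local, $\Delta$-periodic, translation-invariant, and operate at $p$ bits of precision, for $N' \geq 2^{p/\g}$ the softmax in \emph{both} transformers collapses to a hardmax, by the analysis in \Cref{sec: hardmax}. So for each output position $i$, the forward pass of $f$ (resp. $g$) at position $i$ depends on $x$ only through: (i) the token $x_i$ and its residue $i \bmod \Delta$; (ii) the contents of the $\tau$-suffix $x_{i-\tau:i}$ (which enter the attention logits via $\phi_f$); and (iii) the \emph{empirical frequencies} $\mu_{(y,r)}$ of each token--residue pair $(y,r) \in \Sigma \times \mathbb{Z}/\Delta$ among those positions $j \leq i$ that attain the argmax attention logit. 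Indeed, once attention is hard, $\bY^{(1)}_i$ is the uniform average of $\bV_f(\bE^f_{x_j}+\bp_j)$ over the argmax set, which is a convex combination determined entirely by these frequencies, and then $\by^{(1)}_i, T(x)_i$ are fixed deterministic functions of $\bY^{(1)}_i$.

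Next I would construct the short string $z$. The idea is to pick a single ``query suffix'' — say we analyze the last position $i = N'$, and handle other positions analogously by translation invariance — copy its $\tau$-suffix $x_{N'-\tau:N'}$ verbatim into the tail of $z$, and then pad the prefix of $z$ with tokens chosen so that: (a) the set of argmax attention logits relative to $x_{N'}$ is preserved (this only requires that the \emph{types} $(y,r)$ appearing in $z$'s prefix be a subset of the types that can appear, which we control by choosing $z$'s length to be a multiple of $\Delta$ and placing tokens at the right residues); and (b) the empirical frequencies $\mu_{(y,r)}$ over the argmax set in $z$ approximate those in $x$ to additive error $\sim \e / (L \cdot |\text{supp}|)$. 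Since there are at most $\Delta|\Sigma|$ relevant types, and each token of $z$ shifts a frequency by $\Theta(1/|z|)$, a length $|z| = \Theta(\Delta|\Sigma| / (\text{target accuracy}))$ suffices, which after plugging the accuracy target and accounting for the fact that all $\Delta$ residues and the $\tau$ suffix slots must be covered gives $|z| = O(L \Delta^{?}|\Sigma|^{?}\tau / \e)$ with exponents to be pinned down by the Lipschitz bookkeeping below — squaring in the final bound comes from wanting to track $\ell_2$-type error across all positions and residues simultaneously. We also need $|z| \geq 2^{p/\g}$ so that $z$ itself is in the hard-attention regime; hence the $\max$ with $2^{p/\g}$ in the statement.

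Then I would bound $\|f(x)_i - f(z)_i\|$ by propagating the frequency perturbation through the forward pass. The average-of-values map is $\|\bV_f(\bE^f_s+\bp_i)\|$-bounded and linear in the frequency vector, so a perturbation $\delta$ in $\ell_1$ of the frequencies changes $\bY^{(1)}_i$ by $O(\delta \cdot \max_s\|\bV_f(\bE^f_s+\bp_i)\|)$; the residual connection adds another $O(\delta \cdot \max_s\|\bE^f_s+\bp_i\|)$; the MLP is $\|\bA_f\|\|\bB_f\|$-times-$1$-Lipschitz (the activation $\psi_f$ being $1$-Lipschitz, with its norm bound appearing since we keep the bias term), and the unembedding contributes $\|\bU_f\|$. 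Collecting these is exactly the definition of $L_f$, so $\|f(x)_i - f(z)_i\| \leq L_f \cdot \delta \leq \e$ with the right choice of $|z|$; similarly for $g$. One subtlety: the \emph{set} of argmax positions must be \emph{exactly} preserved, not approximately — a position with a logit within $\e$ of the max but not equal to it is still rounded away, so there is a discontinuity. This is handled by $\g$: any non-maximal logit is at least $\g$ below the max, and at length $\geq 2^{p/\g}$ it is rounded to $0$ in \emph{both} $x$ and $z$, so the argmax sets are combinatorially determined by the type data, which $z$ matches by construction. Finally, since $|z| \leq N$, the hypothesis gives $\|f(z)_i - g(z)_i\| \leq \e$, and the triangle inequality yields $\|f(x)_i - g(x)_i\| \leq \|f(x)_i - f(z)_i\| + \|f(z)_i - g(z)_i\| + \|g(z)_i - g(x)_i\| = O(\e)$ for every $i$, hence $\|f(x) - g(x)\| = O(\e)$.

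The main obstacle I expect is the interplay between the \emph{discrete} combinatorial constraint (the argmax set must be matched exactly, across all $\Delta$ positional residues and all tokens in the $\tau$-suffix, which forces $z$ to simultaneously realize a prescribed ``type profile'') and the \emph{continuous} approximation constraint (the argmax-set frequencies must be matched to accuracy $\sim\e/L$). Designing $z$'s prefix as an explicit multiset that meets both — and verifying the length needed is only polynomial in $\Delta, |\Sigma|, \tau$ and $L/\e$ rather than exponential — is where the real work lies; this is also where the precise exponents ($\Delta^7$, $|\Sigma|^6$, $\tau^2$) in the theorem get determined, essentially by counting how many independent frequency constraints there are and how the worst-case over output positions $i$ aggregates. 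A secondary technical point is handling output positions $i$ with $i < 2^{p/\g}$, where attention may genuinely be soft; but those $i$ are bounded by a constant depending only on $p/\g$, so $z$ can simply contain $x$ restricted to a window around each such $i$ verbatim, at an additive cost absorbed into the $\max\{2^{p/\g},\cdots\}$ term.
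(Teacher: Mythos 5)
Your high-level strategy is the same as the paper's (hardmax regime for $|x|\geq 2^{p/\g}$, a simulation string $z$ preserving attention-set frequencies, Lipschitz propagation through values/MLP/unembedding, then the triangle inequality), but there is a genuine gap at the step that constitutes the real content of the paper's proof: the \emph{single} string $z$ must simultaneously reproduce the \emph{conditional} frequencies over two different argmax sets, $A_f(x)$ and $A_g(x)$, which can have arbitrarily different sizes and arbitrary overlap. Your construction is described for one argmax set at a time ("similarly for $g$"), but every token you add to $z$ to calibrate $g$'s ratios may itself lie in $A_f$ and perturb $f$'s ratios, and vice versa; moreover, matching conditional frequencies within an attention set that occupies a vanishing fraction of $x$ cannot be done by matching global type frequencies to additive accuracy $\e/L$ — the error is amplified by the inverse of that fraction. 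The paper resolves exactly this interaction: it first lays down an ``$f$-prefix'' of length $\approx 1/\e$ calibrated to $A_f$, then extends $z$ using only tokens in $A_g\setminus A_f$, with a case analysis on whether $|P_f\cap P_g|/|P_g|\leq\e$ and, in the hard case, a rescaling anchored at the most frequent token $s^*\in A_f$ so that the already-placed counts on $A_f\cap A_g$ become consistent with $g$'s ratios. This is what forces $|z|=O(1/\e^2)$ and, after the substitution $\e\mapsto \e/((M_f+M_g)|\Sigma|^3\tau)$ and the vocabulary expansion $\Sigma\to\Sigma\times[\Delta]$, produces the exponents $L^2\Delta^7|\Sigma|^6\tau^2/\e^2$. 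Your explanation of the $1/\e^2$ as "$\ell_2$-type bookkeeping across positions and residues" is not the actual mechanism, and without the two-pattern interaction your claimed length $\Theta(\Delta|\Sigma|/\e)$ does not survive the worst case (e.g.\ $|P_g|$ up to $|P_f|/\e$ with partial overlap).

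Two smaller points. First, the paper's argument (and its formulas \eqref{eq: hard attn formula 1}--\eqref{eq: hard attn formula 2}) treats the output at the final position only, with all sequences assumed to have length at least $2^{p/\g}$; your plan to handle every position $i$, and in particular positions $i<2^{p/\g}$ by copying "a window around each such $i$," does not work as stated, since soft attention at position $i$ depends on the entire prefix $x_{1:i}$, not a local window. Second, for the argmax sets to be preserved you need the types placed in $z$'s prefix to be a subset of the types actually occurring in $x$ (not merely "types that can appear"), since an extraneous type could attain a strictly larger logit under $f$ or $g$ and change the hard attention pattern; the paper enforces this by building $z$ exclusively from tokens of $A_f\cup A_g$ together with the copied $\tau$-suffix, and by padding so that $|z|\equiv|x|\pmod{\Delta}$ with matching final token.
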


\begin{proof}[Proof sketch]
As discussed in Section~\ref{sec: hardmax}, the output of each limit transformer depends roughly on the ratios between each token type entering hardmax attention. We construct a ``simulation map'' from a string $x$ of arbitrary length to a string $z$ of length $|z|\leq N$ which preserves these ratios up to $O(\e)$ error \emph{simultaneously} for the tokens in attention in both $f$ and $g$. Since $f(z)\approx g(z)$ by assumption, this in turn implies that $f(x)\approx g(x)$. The complete proof is given in Appendix~\ref{app: proofs}.
\end{proof}

\noindent\textbf{Remarks.} \Cref{thm: main} shows that, assuming that the input sequences are sufficiently long ($N \gtrsim 2^{p/\gamma}$), the desired training length scales polynomially in the periodicity parameter $\Delta$, the parameter norms $L$, the vocabulary size $|\Sigma|$, and the inverse accuracy $\e^{-1}$. The $N \gtrsim 2^{p/\gamma}$ constraint ensures that the softmax attention behaves as a hardmax as discussed in Section~\ref{sec: hardmax}. Indeed, it is possible for this hardmax behavior to occur at smaller training lengths, implying that the training length $N$ need only scale with $\mathrm{poly}(\tau, \Delta, L, \e^{-1}, \abs{\Sigma})$. See \Cref{sec: experiments} for empirical support of this claim.

Theorem~\ref{thm: main} bounds the test error when we have an $\ell_\infty$ bound on the error, i.e., when the error on \emph{every sequence} of length at least $N$ is bounded by $\e$. In practice, it is more common to have a bound on the \emph{average} error. The following theorem establishes that we can still achieve LG with respect to average error for a certain class of sequence distributions.

\begin{restatable}{thm}{dirichlet} \label{thm: dirichlet}
For any probability distribution $\cP=(p_s)_{s\in\Sigma}$ over the token vocabulary $\Sigma$, define
\[
\|f-g\|_{n, \cP} = \sum_{|x|=n} \P_\cP(x)\|f(x)-g(x)\|, 
\]
where $\P_\cP(x) = \prod_{i=1}^{|x|} p_{x_i}$ is the probability of the sequence $x$ when the tokens are drawn i.i.d. from $\cP$.
Let $\cP = (p_s)_{s\in\Sigma}\sim\mathrm{Dir}((\a_s)_{s\in\Sigma})$ be drawn from a Dirichlet distribution, and define
\[
\|f-g\|_n = \E_{\cP\sim \mathrm{Dir}((\a_s)_{s\in\Sigma})}[\|f-g\|_{n, \cP}].
\]
Let $\a_0 = \min_{s\in\Sigma}\a_s$ and $\a^* = \sum_{s \in \Sigma}\a_s$. Then there exists 
\[
N_0 = O\l( \max\l\{2^{p/\g}, \, \frac{16^{\frac{\a^*}{\a_0}} L^{2 + 2\a_0^{-1}} |\Sigma|^{4+2\a_0^{-1}}\Delta^5}{\a_0^{2\a_0^{-1}}\e^{2+2\a_0^{-1}}}\log \frac{|\Sigma|\Delta L}{\e} \r\} \r)
= \widetilde{O}(\e^{-2-2\a_0^{-1}})
\]
such that if $\|f-g\|_N \leq \e$ for all $N\leq N_0$,
we have that $\|f-g\|_T = O(\e^{1/2})$ for \emph{any} $T$.
\end{restatable}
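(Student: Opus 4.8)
The plan is to reduce the average-error statement to the $\ell_\infty$ statement of \Cref{thm: main} via a concentration argument for the empirical token frequencies. Here is the key observation: the proof of \Cref{thm: main} shows that the forward pass of a single-layer limit transformer on an input $x$ with $N = |x| \geq 2^{p/\g}$ is, up to $O(\e)$ error, a function of the empirical frequency vector $\mu(x) = (\mu(x)_s)_{s\in\Sigma}$ of token types entering the hardmax attention, together with the (bounded-length) local suffix. More precisely, for any two sequences $x, x'$ of length $\geq N_{\mathrm{hard}} := 2^{p/\g}$ that share the same $\tau$-suffix and whose frequency vectors are close, $\|f(x) - f(x')\|$ and $\|g(x) - g(x')\|$ are both $O(L \cdot \|\mu(x) - \mu(x')\|_1 \cdot (\text{stuff}))$. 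So if we can show that for a $\mathcal{P}$-random sequence $x$ of length $T$, its frequency vector is with high probability within $\delta$ (in $\ell_1$) of the frequency vector of some sequence $z$ of length $\leq N_0$ that we also control, and on which $f(z) \approx g(z)$ holds by assumption, we are done up to a union bound over the suffix and over the bad event.

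The concrete steps I would carry out are as follows. \textbf{Step 1: Discretize the simplex.} Build a net of frequency vectors realizable by strings of some fixed length $m$ with $N_{\mathrm{hard}} \leq m \leq N_0$; such strings realize any rational frequency vector with denominator $m$, giving an $\ell_1$-net of granularity $O(|\Sigma|/m)$ over the probability simplex. Choosing $m \asymp |\Sigma| L / \e \cdot (\text{poly factors})$ makes the net fine enough that matching frequencies forces $\|f(x) - f(z)\|, \|g(x) - g(z)\| = O(\e)$ when the suffixes also agree (finitely many suffixes, handled by inflating $m$ by a $\tau|\Sigma|$ factor, still polynomial). \textbf{Step 2: Concentration under $\mathcal{P}$.} For fixed $\mathcal{P} = (p_s)$, a $\mathcal{P}$-random length-$T$ string has $\mu(x)_s$ concentrated around $p_s$: by Hoeffding/Bernstein, $\P_{\mathcal{P}}(\|\mu(x) - p\|_1 > \delta) \leq 2|\Sigma| e^{-T\delta^2/(2|\Sigma|^2)}$ roughly, so for $T$ large this is tiny; but the issue is we need this for \emph{every} $T$, including small $T$, so for small $T$ we instead note directly that $x$ itself has length $\leq N_0$ and the hypothesis applies (or the suffix argument degenerates gracefully). \textbf{Step 3: Integrate over the Dirichlet prior.} This is where the $\a_0^{-1}$ exponents enter. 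We have $\|f-g\|_T = \E_{\mathcal{P}}\E_{x\sim\mathcal{P}^T}\|f(x)-g(x)\|$. On the good event, $\|f(x)-g(x)\| \leq \|f(x)-f(z)\| + \|f(z)-g(z)\| + \|g(z)-g(x)\| = O(\e)$ where $z$ is the net point with $\mathcal{P}$-frequencies closest to $p$ rounded to the net, provided a length-$m$ string with frequency vector \emph{at the net point nearest $p$} also lies in the training regime — but crucially, we need $\|f(z)-g(z)\| \leq \e$, which holds only because the hypothesis $\|f-g\|_{N,\mathcal{P}'} \leq \e$ for the relevant distribution $\mathcal{P}'$ and Markov's inequality gives $\|f(z)-g(z)\| \leq \e/\P_{\mathcal{P}'}(z)$, and this is where we must be careful: a single string $z$ of length $m$ has $\mathcal{P}'$-probability as small as $p_{\min}^m$, which is catastrophic. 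The fix is to instead average: rather than comparing $x$ to a single $z$, compare $f(x), g(x)$ to $\E_{z\sim\mathcal{P}^m}[f(z)], \E_{z\sim\mathcal{P}^m}[g(z)]$, using that (i) by Step 2 the random $z$ concentrates on the same frequency region as $x$, so $\E_z f(z) \approx f(x)$ and likewise for $g$, and (ii) $\|\E_z f(z) - \E_z g(z)\| \leq \E_z\|f(z)-g(z)\| = \|f-g\|_{m,\mathcal{P}} \leq \e$ directly by the hypothesis. Then integrating the bad-event contribution (bounded by $2L' := O(L)$ in norm) against the Dirichlet density is what produces the $16^{\a^*/\a_0}$ and the $\e^{-2-2\a_0^{-1}}$: the probability that $\mathcal{P}$ itself is so skewed ($p_{\min} \lesssim $ something) that concentration fails at length $T$ is governed by the Dirichlet density near the boundary of the simplex, which behaves like $\prod p_s^{\a_s - 1}$, and bounding $\int_{\{p_{\min} < u\}} \prod p_s^{\a_s-1}\,dp \lesssim (|\Sigma| u)^{\a_0}$ up to normalization $\approx \Gamma$-ratios $\lesssim |\Sigma|^{\a^*}$; balancing $u$ against the required concentration radius $\delta \asymp \e/(|\Sigma|L)$ and the length $T$ forces $N_0 \gtrsim \delta^{-2-2/\a_0}$ with the stated constants, and the residual $\e^{1/2}$ (rather than $\e$) on the conclusion comes from this $\sqrt{\cdot}$-type tradeoff between the bad-event probability and its (non-small) contribution $O(L)$.

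The main obstacle is \textbf{Step 3}: controlling the contribution of atypical distributions $\mathcal{P}$ drawn from the Dirichlet prior. For $\mathcal{P}$ bounded away from the simplex boundary everything concentrates at rate $e^{-\Omega(T)}$ and the hypothesis transfers cleanly, but the Dirichlet prior places non-negligible mass near the boundary where some $p_s$ is tiny, and there a length-$T$ sample need not have all token types appear enough to pin down $\mu(x)$ — nor is it clear a priori that the comparison string $z$ (also drawn from $\mathcal{P}^m$) behaves like $x$. The resolution is the average-to-average comparison $f(x) \approx \E_z f(z)$, $g(x)\approx\E_z g(z)$ noted above, which only requires \emph{one-sided} control (that $z$ and $x$ land in overlapping frequency regions with decent probability), combined with a quantitative Dirichlet boundary-mass estimate $\P_{\mathrm{Dir}}(\min_s p_s < u) \leq C(\a)\,u^{\a_0}$ with $C(\a) = O(16^{\a^*/\a_0}|\Sigma|^{O(1/\a_0)})$ obtained from the Gamma-function normalization and a direct integration. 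Optimizing the free parameters $m$, $\delta$, $u$, and the cutoff length separating the "hypothesis applies directly" regime from the "concentration" regime yields the claimed $N_0 = \widetilde O(\e^{-2-2\a_0^{-1}})$ and the $O(\e^{1/2})$ conclusion. Everything else — the net construction of Step 1, the Hoeffding bound of Step 2, and the triangle-inequality bookkeeping — is routine given the frequency-Lipschitz property of the forward pass established in the proof of \Cref{thm: main}.
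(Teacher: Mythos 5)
Your overall strategy follows the same skeleton as the paper's: concentration of empirical token frequencies for i.i.d.\ sequences (Lemma~\ref{lem: dir bulk bound}), the fact that on this event the hardmax-regime output is an (essentially Lipschitz) function of those frequencies given the attended set (Lemmas~\ref{lem: dir post attn error}--\ref{lem: dir f bulk bound}), a Beta-marginal tail bound for $\P(\min_s p_s<\g)$ of the same $\tfrac{|\Sigma|}{\a_0}4^{\a^*}\g^{\a_0}$ shape (Lemma~\ref{lem: dir small p tail bound}), and --- most importantly --- your correct diagnosis that one cannot compare $x$ to a single short string (whose probability is exponentially small), so the hypothesis must be applied at the level of averages. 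However, the concrete fix you give in Step~3 has a genuine gap: you compare $f(x)$, $g(x)$ to $\E_{z\sim\cP^m}[f(z)]$, $\E_{z\sim\cP^m}[g(z)]$ with $z$ drawn \emph{independently} of $x$. In the finite-precision/hardmax setting the output at the last position depends on the final token, the $\tau$-suffix, and the final position mod $\Delta$, since these determine which (token, position mod $\Delta$) pairs enter hard attention (the sets $A(\cdot)$ and $A^\tau(\cdot)$ in the appendix), and the output also contains the term $\bE_{x_T}$ explicitly. An independent $z$ matches none of these, so even on the concentration event $f(z)$ is computed with a different attention set (and different final-token embedding) than $f(x)$; hence $f(x)\approx\E_z f(z)$ fails in general. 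Conditioning $z$ to share $x$'s suffix repairs the approximation but then $\E_z\|f(z)-g(z)\|$ is a \emph{conditional} average, no longer bounded by $\|f-g\|_{m,\cP}$ without either paying a $\g^{-(\tau+1)}$ factor or coupling the suffix laws. Your sketch also never addresses the $|z|\equiv|x|\pmod{\Delta}$ constraint, nor that concentration must be done per (token, residue mod $\Delta$) pair rather than per token.

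The paper resolves exactly this tension with a device your argument is circling around: the simulation is the deterministic truncation $\mathrm{sim}_N(x)=x_{T-N'+1:T}$ with $N\le N'<N+\Delta$ chosen so that $N'\equiv T\pmod{\Delta}$. Truncation automatically preserves the $\tau$-suffix, the final token, and its residue class (so the attention sets agree), and because the tokens are i.i.d.\ the pushforward of $\cP^T$ under $\mathrm{sim}_N$ is \emph{exactly} $\cP^{N'}$ --- which is precisely the average-to-average identity you wanted, $\E_{x\sim\cP^T}\|f(\mathrm{sim}_N(x))-g(\mathrm{sim}_N(x))\|=\|f-g\|_{N',\cP}$ (Lemma~\ref{lem: dir simulation bound}); it also renders your simplex net in Step~1 unnecessary. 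Finally, the $\e^{1/2}$ in the paper arises from a Markov inequality over the Dirichlet draw of $\cP$ (converting $\E_\cP\|f-g\|_{N',\cP}\le\e$ into a per-$\cP$ bound holding with probability $1-\e/\n$), not from the bad-event-versus-$O(L)$ tradeoff you describe, so your parameter balancing would need to be redone along the lines of the paper's final computation to recover the stated exponents in $N_0$.
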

\begin{proof}[Proof sketch]
We show that with high probability over the draw of $(p_s)_{s\in\Sigma}$ and the resulting sequence $x$, the fraction of (token, positional embedding) pairs is close to its mean. For such sequences, we further show that the output of the limit transformer is approximately constant. This allows us to define a simulation map $\mathrm{sim}:\Sigma^T \to \Sigma^N$ from longer sequences to shorter ones which (1) satisfies $f(x) \approx f(\mathrm{sim}(x))$ and (2) does not transfer a large probability mass of long sequences in $\Sigma^T$ to a low-probability subset of short sequences in $\Sigma^N$. These two features of the simulation map allow us to control $\|f-g\|_T$ in terms of $\|f-g\|_N$ for any $T\geq N$. The full proof is given in Appendix~\ref{app: pf of dir}.
\end{proof}

\noindent\textbf{Remarks.}
We make two remarks on this result. First, the form of the sequence distribution is meant to ensure some regularity between sequences of longer and shorter lengths. The need for some such regularity assumption is inevitable. For instance, an obvious example would be where the distribution over shorter sequences has support only on sequences with tokens in $\Sigma_{\textrm{short}}\subsetneq\Sigma$, while the distribution over longer sequences has support $\Sigma\setminus\Sigma_{\textrm{short}}$. The switch can occur at an arbitrarily large sequence length, so a bound on the required training length cannot exist in such a setting. This counterexample can also be approximated without requiring the probability of certain sequences to be exactly equal to 0. We expect a similar result to hold for sequences with some form of regularity in terms of token ratios between shorter and longer sequences; e.g., if the sequences are drawn from a Markov chain, concentration of the token ratios to the stationary distribution may be sufficient. It is an interesting direction for future work to establish minimal conditions on the sequence distribution for LG to occur in the average case.
Second, as a corollary to our proof technique, 
we can strengthen the error dependence of our bound when $\|f-g\|_{N,\cP}$ is controlled \emph{conditional} on $\min_{s\in\Sigma}p_s = \Omega(1)$.
In this case, the LG error does not suffer from the quadratic increase from $\e$ to $\e^{1/2}$ as in Theorem~\ref{thm: dirichlet}, but the required training length to achieve $O(\e)$ error is longer.
The proof for this setting can also easily be extended to the case where the tokens are drawn from a \emph{fixed} categorical distribution and the probability $p_s$ for each token is at least a constant.

\section{Soft Attention Transformers with Infinite Precision}\label{sec:infinite-precision}



In this section, we provide upper bounds on the length of training sequences required for two-layer limit transformers operating at infinite precision to generalize to sequences of arbitrary length. Recall that we have made the assumption that transformers which operate at infinite precision only have the $\tau$-suffix logits scaled by $\log(\textrm{token index})$, and thus have forward pass given by \eqref{eq:infinite-precision}. The key quantity which governs the minimum training length is the following complexity measure.

\begin{defn}[Complexity and positional margin]\label{def:complexity}
    Let $\mathcal{F}_\tau$ be the class of depth 2 transformers transformers which are $\tau$-local, translation invariant, operate at infinite precision, use no positional embeddings in either layer\footnote{For simplicity, we will assume that all positional information is captured by the $\phi_{l,h}$, i.e $\mathbf{p}_i = 0$.} and no positional information in the second layer, and have nonnegative $\phi_{l,h}$\footnote{As per the discussion in \Cref{sec:infinite-precision-intro}, all $\phi_{l,h} < 1$ yield the same ``token-dominant" regime, and hence assuming $\phi_{l,h} \ge 0$ does not affect expressivity.}. For a transformer $f \in \mathcal{F}_\tau$, with key, query and value matrices $\{(\bK_{1,h}, \bQ_{1,h}, \bV_{1, h})\}_{h \in [H]} \cup \{(\bK_{2,1}, \bQ_{2,1}, \bV_{2, 1})\}$, MLP weights $\{(\bA_l, \bB_l)\}_{l \in \{1, 2\}}$, embeddings $\norm{\bE_s} \le 1$, and unembedding $\bU$, define the \emph{complexity} $C(f)$ as 
    \begin{align*}
        C(f) &:= \exp\qty(\mathrm{poly}\qty(\left\{\norm{\bV_{1, h}}_{op}, \norm{\bK_{1, h}^\top\bQ_{1, h}}_{op}\right\}_{h \in [H]}, \norm{\bA_1}_{op}, \norm{\bB_1}_{op}, \norm{\bK_{2, 1}^\top\bQ_{2, 1}}_{op}))\\
        &\qquad \cdot \mathrm{poly}\qty(\norm{\bV_2}_{op}, \norm{\bA_2}_{op}, \norm{\bB_2}_{op}, \norm{\bU}_{op}, \tau, \abs{\Sigma})
    \end{align*}
    Moreover, define the \emph{positional margin} $\gamma(f)$ by
    \begin{align*}
        \gamma(f) := \min_{h \in H}\qty(\max \mathcal{P}_h - \max\{p \in \mathcal{P}_h : p \neq \max\mathcal{P}_h\})
    \end{align*}
    where $\mathcal{P}_h := \{\phi_{1, h}(i-t, i)\}_{0 \le t \le \tau} \cup \{1\}$ is the set of positional embedding values in the $h$th head.
\end{defn}

\begin{thm}\label{thm:infinite-precision}
    Let $f, g \in \mathcal{F}_\tau$. There exists $N \lesssim \qty(\max(C(f), C(g))\e^{-1})^{\max(\gamma(f)^{-1}, \gamma(g)^{-1}, 3)}$ such that $\norm{f(x) - g(x)} \le \e$ for all $\abs{x} \le N$ implies that $\norm{f(x) - g(x)} = O(\e)$ for \emph{any} sequence $x$.
\end{thm}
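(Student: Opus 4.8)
The plan is to run the same ``simulation argument'' used in \Cref{thm: main}, but now carried out through both layers of the transformer and, following the proof sketch of \Cref{thm:infinite-precision} described informally above, executed \emph{randomly} rather than by an explicit construction. Given a long input $x$ with $|x| = T$, I want to produce a short string $z$ with $|z| \le N$ such that $\|f(x) - f(z)\| = O(\e)$ and $\|g(x) - g(z)\| = O(\e)$ simultaneously; combined with the hypothesis $f(z) \approx g(z)$, the triangle inequality finishes the argument. The key structural fact I would exploit is the trichotomy from \Cref{sec:infinite-precision-intro}: after the $\log i \cdot \phi$ scaling, the first-layer attention output at position $i$ is, up to lower-order terms, a function only of (i) the empirical token-frequency vector $\mu(x_{\le i}) \in \Delta^{|\Sigma|}$ of the $\tau$-prefix and (ii) the exact identities of the $O(\tau)$ tokens in the $\tau$-suffix $x_{i-\tau:i}$ — in the token-dominant regime only the frequencies matter, in the position-dominant regime only the suffix matters, and in the balanced regime a fixed combination of the two. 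So the ``sufficient statistic'' I must preserve at each position is this (frequency vector, suffix) pair.

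The construction of $z$ I would use is: sample each token of $z$ i.i.d.\ from the target frequency vector $\mu(x_{\le T})$ (the global empirical distribution of $x$), except hard-code the last $\tau$ tokens of $z$ to equal the last $\tau$ tokens of $x$. With $N$ tokens sampled this way, a Chernoff/Hoeffding bound gives $\|\mu(z_{\le i}) - \mu(x_{\le T})\| = O(\sqrt{\log(|\Sigma|/\delta)/i})$ uniformly over all suffix positions $i$ in a suitable late window, with probability $1-\delta$; choosing $N = \mathrm{poly}(|\Sigma|, \e^{-1}, \dots)$ makes this $O(\e)$. I then need a Lipschitz/stability estimate: the first-layer attention output, then the first-layer MLP, then the second-layer attention (whose query/key at position $i$ now depend on the layer-1 features, which are themselves continuous mixtures — this is exactly why infinite precision is the right model here), then the second MLP and the unembedding, all depend on the input only through these sufficient statistics, with a modulus of continuity that is $\exp(\mathrm{poly}(\text{weight norms}))$ — this is precisely what the complexity measure $C(f)$ in \Cref{def:complexity} packages. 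Propagating an $O(\e)$ perturbation in the frequency vector through all of these maps yields an $O(C(f)\,\e)$ perturbation in $f(x)$ versus $f(z)$; rescaling $\e$ absorbs the constant. The role of the positional margin $\g(f)$ is to control the position-dominant and balanced heads: when $\max_t \phi_{1,h}(i-t,i) > 1$, the attention weight on a non-maximizing suffix token decays like $i^{-(\g(f))}$ relative to the maximizer, so we need $i^{\g(f)} \gtrsim C(f)\e^{-1}$, i.e.\ $i \gtrsim (C(f)\e^{-1})^{1/\g(f)}$ — this is the source of the exponent $\max(\g(f)^{-1}, \g(g)^{-1}, 3)$ in $N$ (the $3$ is a floor coming from the various $\sqrt{1/i}$ concentration terms needing $i \gtrsim (C\e^{-1})^2$-ish, and from the two-layer composition).

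Putting it together: (1) define the target frequency vector and the randomized simulation map; (2) prove uniform concentration of the running empirical frequencies of $z$ to the target, over the relevant range of positions; (3) prove a forward-pass stability lemma — the core lemma — showing $\|f(w) - f(w')\|$ is controlled by $C(f)$ times the discrepancy in the (frequency, suffix) statistics at each position, handling the three regimes separately and using the positional margin to kill sub-maximal suffix contributions in the position-dominant/balanced cases; (4) union-bound over $f$ and $g$ so that a single $z$ works for both, then invoke the probabilistic method to assert such a $z$ exists; (5) conclude via $f(z) \approx g(z)$ and the triangle inequality. The main obstacle I anticipate is step (3) at the \emph{second} layer: the layer-1 output at every position is perturbed, so the layer-2 attention logits $(\by_j^{(1)})^\top \bK_{2,1}^\top \bQ_{2,1}\by_i^{(1)}$ move, and in the position-dominant second-layer regime a small logit perturbation could in principle flip which suffix token is the argmax — so I either need a second-layer positional margin assumption (implicitly folded into $\g(f)$ via $\mathcal{P}_h$ ranging over heads, though \Cref{def:complexity}'s $\gamma(f)$ as written only references layer-1 heads, so I'd lean on the second-layer $\phi_{2,1}$ being handled by the ``no positional information in the second layer'' clause of $\mathcal{F}_\tau$) or I must show that near-ties contribute negligibly because the \emph{values} being averaged are themselves close. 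Managing this second-layer error propagation cleanly, and verifying that the polynomial inside $C(f)$ is genuinely large enough to dominate every Lipschitz constant that appears, is where the real work lies; everything else is concentration plus triangle inequalities.
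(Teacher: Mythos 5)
There is a genuine gap, and it sits exactly where you locate the ``real work'': your simulation string $z$ does not preserve the sufficient statistics you yourself identify as necessary. The output of the two-layer transformer at the final position is an attention-weighted average \emph{over all positions} $j$ of quantities depending on the pair $\bigl(x_{j-\tau:j},\, \mu(x_{\le j})\bigr)$, so what must be (approximately) preserved is the empirical \emph{joint} distribution of these pairs across $j$ — i.e.\ the $(\tau+1)$-gram statistics of the sequence together with the distribution of running histograms. Your construction — sampling the tokens of $z$ i.i.d.\ from the single global frequency vector $\mu(x_{\le T})$, hard-coding only the final $\tau$ tokens — preserves neither. It destroys the suffix statistics (take $x = 010101\cdots$: unigram frequencies are $(\tfrac12,\tfrac12)$, but every $0$ is followed by a $1$, whereas your i.i.d.\ resample has $0$ followed by $1$ only about half the time; on the in-context $k$-gram task of \Cref{def:k-gram}, which is expressible in $\mathcal{F}_\tau$, this shifts $f(z)$ away from $f(x)$ by a constant), and it also collapses all running histograms $\mu(z_{\le i})$ to the final histogram $\mu(x_{\le T})$, whereas in $x$ the running histograms $\mu(x_{\le j})$ vary along the sequence and the first-layer (token-dominant) features, hence the second-layer logits and values, depend on them position by position. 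So the stability lemma in your step (3) cannot rescue you: the discrepancy in the statistics fed into it is $\Theta(1)$, not $O(\e)$, for sequences like the ones above.

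The paper's proof avoids this precisely by \emph{not} resampling: the simulation $z$ is a subsequence $x_{\mathcal I}$ of the original string, with $\mathcal I$ chosen via a two-state Markov chain with stationary probability $n/T$ and small switching probability $q$ (set to $n^{-1/3}$), so that $\mathcal I$ consists of long contiguous blocks plus the final $\tau$-suffix. Because most retained positions have their entire window $[t-\tau:t]$ retained, the empirical distribution of $\tau$-suffixes is preserved, and because the retained set is a near-uniform thinning, $\mu(z_{\le t}) \approx \mu(x_{\le i_t})$ at the corresponding original index, so the distribution of running histograms is preserved as well; the key simulation lemma (\Cref{lem:key-simulation-lemma}) packages exactly this, with error $(G+L)(\tau+1)n^{-1/3}$, and the exponent $3$ in the theorem comes from this $n^{-1/3}$ rate (via the choice $q = n^{-1/3}$), not from a two-layer composition of $\sqrt{1/i}$ terms. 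Your remaining scaffolding — writing $\by^{(1)}_i$ as a bounded, Lipschitz-in-$\mu$ function of $(x_{i-\tau:i}, \mu(x_{\le i}), i)$ with the positional margin controlling the $i$-dependence, then propagating through the second layer with constants absorbed into $C(f)$, and union-bounding so one $z$ works for $f$ and $g$ — matches the paper's structure, but without replacing the i.i.d.\ resampling by a block-subsampling (or some other construction that preserves the joint suffix/histogram statistics), the argument does not go through.
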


\begin{proof}[Proof sketch]
Similar to before, our goal is to, given an arbitrary string $x$, construct a simulation $z$ which satisfies $f(x) \approx f(z)$ and $g(x) \approx g(z)$. Our first observation is that $\bY_i^{(1)}$, the output of the first layer of the transformer in the $i$th position, can be written as a Lipschitz and bounded function of both the $\tau$-suffix $x_{i-\tau: i}$ and the empirical histogram up to token $i$, $\mu(x_{\le i}) := \frac{1}{i}\sum_{j=1}^i \mathbf{e}_{x_j}$\footnote{Infinite precision attention is necessary here to show that this function is indeed Lipschitz.}. As such, the output of the two-layer transformer depends continuously on the empirical joint distribution of $\{(x_{i-\tau:i}, \mu(x_{\le i})\}_{i \in [\abs{x}]}$. We would thus like for the simulation $z$ to approximately preserve this distribution. To do so, we construct a \emph{random} simulation $z$ by randomly sampling a subset of the tokens in $x$, show in expectation that the outputs are preserved, and invoke the probabilistic method. In particular, the following ``key simulation lemma" shows that such a subset does indeed exist.
\begin{restatable}{lem}{simulation}
    \label{lem:key-simulation-lemma}
    Let $p : [S]^{\tau+1} \times \Delta^S \rightarrow \R^m$ be a fixed function, which is $L$ Lipschitz in its second argument and uniformly bounded by $G$. Then, there exists a subset $\mathcal{I} \subset [T]$ such that, if $z = x_{\mathcal{I}}$, then $\abs{\abs{\mathcal{I}} - n} \le \tau + 1 + n^{1/3}$ and
    \begin{align*}
        \norm{\frac{1}{T}\sum_{t=1}^T p(x_{t - \tau:t}, \mu(x_{\le t})) - \frac{1}{\abs{z}}\sum_{t=1}^{\abs{z}} p(z_{t - \tau:t}, \mu(z_{\le t}))} &\lesssim \frac{(G + L)(\tau + 1)}{n^{1/3}}.
    \end{align*}
\end{restatable}
The proof of \Cref{lem:key-simulation-lemma} proceeds as follows. In order to preserve the empirical distribution over $\tau$-suffices, we would like for the simulation $z$ to include large (i.e., $\omega(1)$ in size) contiguous blocks of $x$. To do so, we consider a \emph{Markov chain} $(i_1, \dots, i_T)$ on the state space $\{0, 1\}$, with stationary distribution $\P(i_j = 1) = n/T$ and transition $\P(i_{j+1} = 0 \mid i_j = 1) \ll 1$. Letting $\mathcal{I} = \{j : i_j = 1\}$, one can show that the choice $z = x_{\mathcal{I}}$ yields a good simulation in expectation. The proof of \Cref{lem:key-simulation-lemma}, as well as the full proof of \Cref{thm:infinite-precision}, are deferred to \Cref{sec:prove-infinite-precision}.
\end{proof}

\paragraph{Remarks.} The complexity measure in \Cref{def:complexity} scales exponentially in the first layer weight norms. This is unavoidable, as the Lipschitz constant of the first layer softmax scales exponentially in $\|\bK_{1, h}^\top\bQ_{1, h}\|_{op}$. Moreover, for certain tasks which can be naturally expressed by a two-layer transformer, the complexity is mild. Consider the following \emph{in-context k-gram} task, which is a generalization of the induction head~\citep{olsson2022context}:

\begin{defn}\label{def:k-gram}
Let $\Sigma = [S]$. We say that $f^*$ is an \emph{in-context $k$-gram} estimator if its output on a sequence $x$ is the empirical distribution of the token following all occurrences of $x_{T-k+1:T}$\footnote{If there is no such occurrence within $x$, the behavior of $f^*(x)$ can be arbitrary.} i.e
\begin{align*}
    f^*(x_{1:T}) = \frac{\sum_{t=k+1}^{T} \mathbf{1}( x_{t-k:t-1} = x_{T-k+1:T}) \cdot \mathbf{e}_{x_t}}{\sum_{t=k+1}^{T} \mathbf{1}( x_{t-k:t-1} = x_{T-k+1:T})} \in \R^S.
\end{align*}  
\end{defn}

\citet{nichani2024causal} show that $f^*$ can be approximated by a depth two transformer with $k-1$ heads in the first layer, and local, translational invariant positional embeddings with $\tau = k-1$. In \Cref{sec:k-gram-construction}, we show heuristically that $f^*$ can be approximated up to error $\e$ by a transformer $f$ with complexity $C(f) = \e^{-\Theta(k^2)}$ and $\gamma(f) \ge 1$, so a training length of $\e^{-\Theta(k^2)}$ suffices for LG.

We also remark that in \Cref{thm:infinite-precision}, the training length $N$ scales exponentially with the inverse margin $1/\gamma$. This mimics the bound in \Cref{thm: main}, which contains an $\exp(\gamma^{-1})$ dependence on the \emph{logit}-margin. Whether these margins matter for LG empirically or are simply an artifact of our analysis is an interesting question for future work.

\section{Experiments}
\label{sec: experiments}
\paragraph{Single-layer Transformers.} We next provide empirical support for the conclusions of \Cref{thm: main} and \ref{thm: dirichlet}. We consider the following two synthetic tasks:

\begin{figure*}[t]
    \centering
    \includegraphics[width=0.33\textwidth]{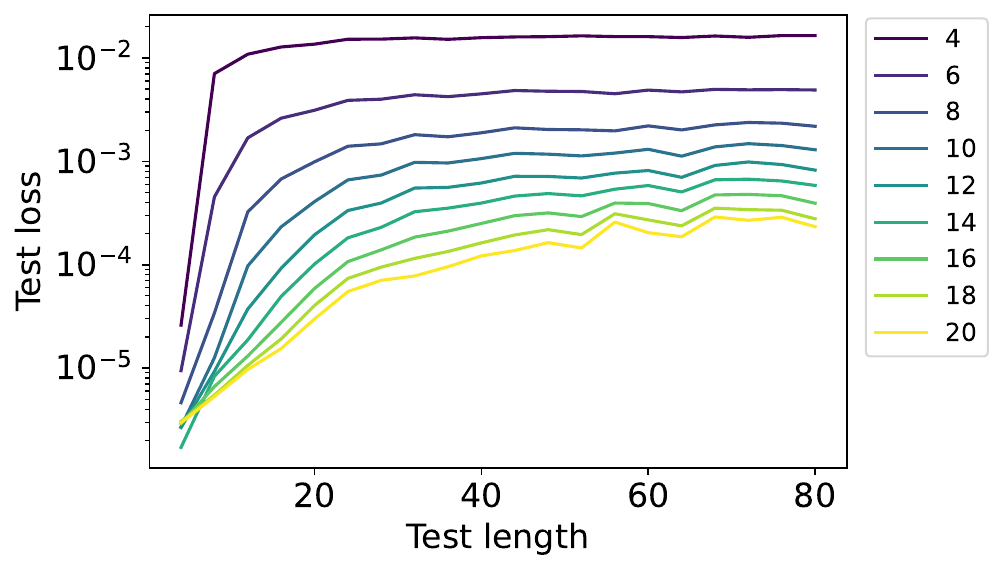}
    \includegraphics[width=0.35\textwidth]{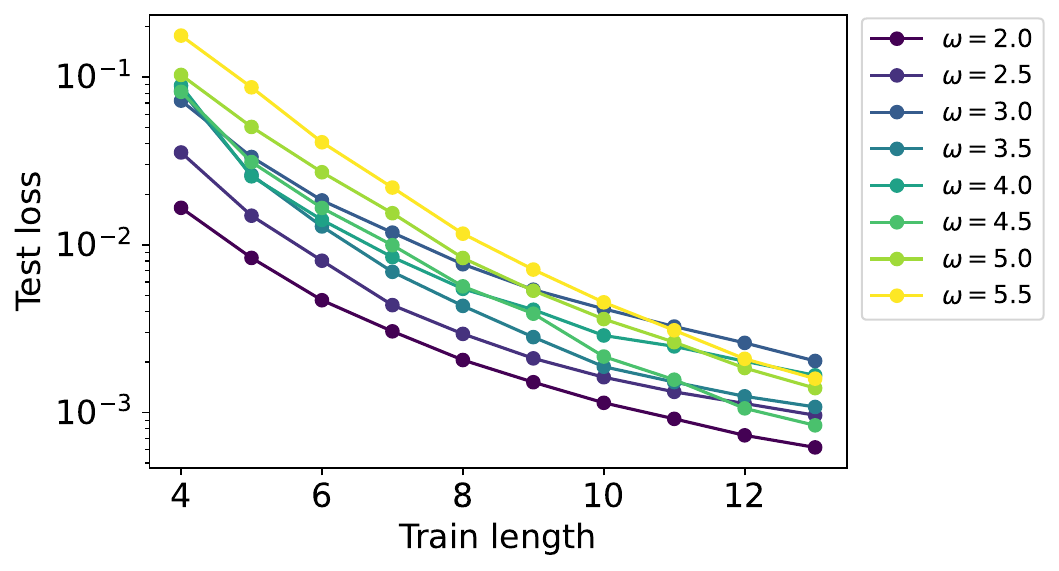}
    \caption{Experiments on $\mathrm{SimpleTask}$, for varying values of $\omega$. \textbf{Left:} For fixed training length, as test length increases, the test loss plateaus at a finite value.
    \textbf{Right:} The value the test loss plateaus at decreases monotonically with training length, and increases monotonically with $\omega$.}
    \label{fig:simple-task}
\end{figure*}

\begin{figure*}[t]
    \centering
    \includegraphics[width=0.33\textwidth]{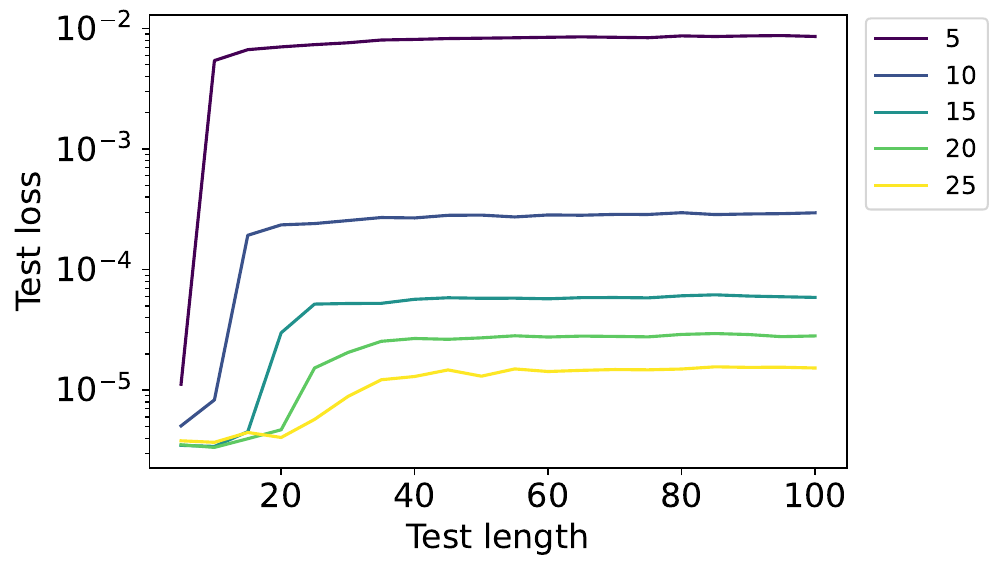}
    \includegraphics[width=0.35
    \textwidth]{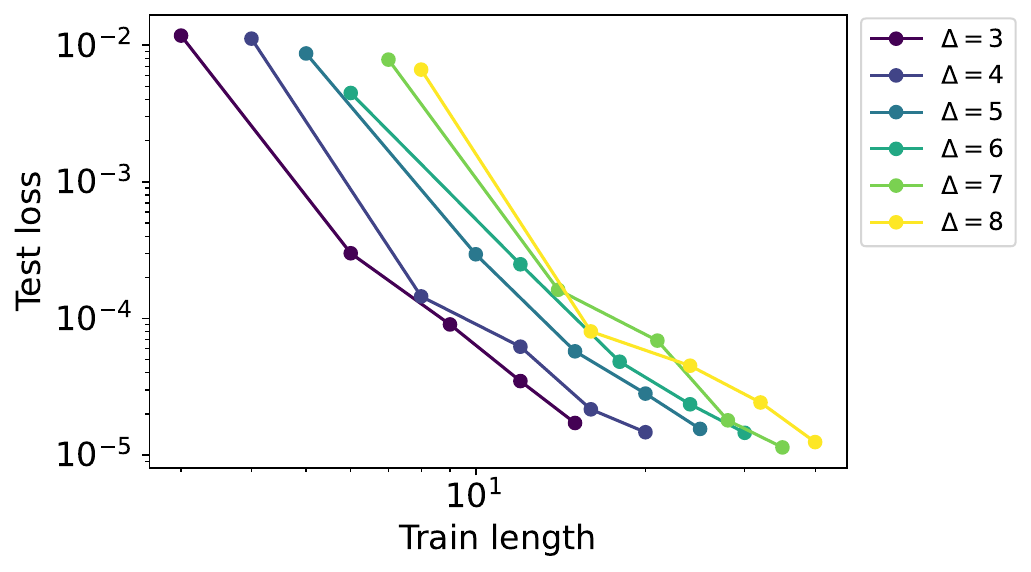}
    \caption{Experiments on $\mathrm{ModPTask}$, for varying values of $\Delta = p$. \textbf{Left:} For fixed training length and $\Delta$, as test length increases, the test loss plateaus at a finite value. \textbf{Right:} The value the test loss plateaus at decreases monotonically with training length, and increases monotonically with $\Delta$.}
    \label{fig:mod_p_task}
\end{figure*}

$\mathrm{SimpleTask}$: The vocabulary is $\Sigma = \{0, 1, 2\}$. Given an input sequence $x_{1:T} = (x_1, \dots, x_T) \in \Sigma^T$, define $c_s(x) = \sum_{t=1}^T \mathbf{1}(x_t = s)$ to count the number of tokens equal to $s$. The output $f^*$ is given by $f^*(x_{1:T}) = \sigma\left(\frac{c_0(x) - c_1(x)}{c_0(x) + c_1(x)}\right)$, where $\sigma(z) = \sin(\omega z)$ for some $\omega \in \mathbb{R}$. One observes that $f^*$ is expressible by a one-layer limit transformer with no positional embeddings and $L = \Theta(\omega)$.

$\mathrm{ModPTask}$: The vocabulary is $\Sigma = \{0, 1\}$. Given a period $p$ and index $k$, the output is the average of all tokens in positions which are $k \mod p$:
\begin{align*}
    f^*(x_{1:T}) = \frac{\sum_{t=1}^T \mathbf{1}(x_t = 1, t \equiv k \mod p)}{\sum_{t=1}^T\mathbf{1}(t \equiv k \mod p)}.
\end{align*}
One observes that $f^*$ is expressible by a limit transformer with $\Delta = p$ and $L = \Theta(1)$.

We train depth 1 transformers (consisting of a single self-attention layer followed by an MLP layer) on $\mathrm{SimpleTask}$ for varying frequencies $\omega$ and $\mathrm{ModPTask}$ for varying periods $p$. For a fixed training length $N$, we train models on sequences of length $T \le N$, and compute the test loss on sequences of length $T' \ge N$. More details on the experimental methodology are presented in \Cref{sec:experimental-details}; sketches for both constructions are provided in \Cref{sec:expressivity}.

Results for $\mathrm{SimpleTask}$ and $\mathrm{ModPTask}$ are presented in \Cref{fig:simple-task} and \Cref{fig:mod_p_task} respectively. In the leftmost panes of both figures, we observe that the test loss plateaus as the test length increases. In the rightmost panes of both figures, we observe that the value at which the test loss plateaus at decreases monotonically with the training length. This provides qualitative support for the conclusions of \Cref{thm: main}, in particular that (i) given a target accuracy $\e$, tasks expressible by a one-layer limit transformer have a finite $N$ such that a model which fits the task on sequences up to length $N$ acheives $\e$ error on sequences of all length and (ii) the value of this $N$ increases monotonically as $\e$ increases. Moreover, the rightmost pane in \Cref{fig:simple-task} shows that $N$ scales with the parameter norm $L$, while the rightmost pane in \Cref{fig:mod_p_task} shows that $N$ scales with the periodicity parameter $\Delta$.
\begin{wrapfigure}{r}{0.36\textwidth}
    \centering
    \includegraphics[width=0.3\textwidth]{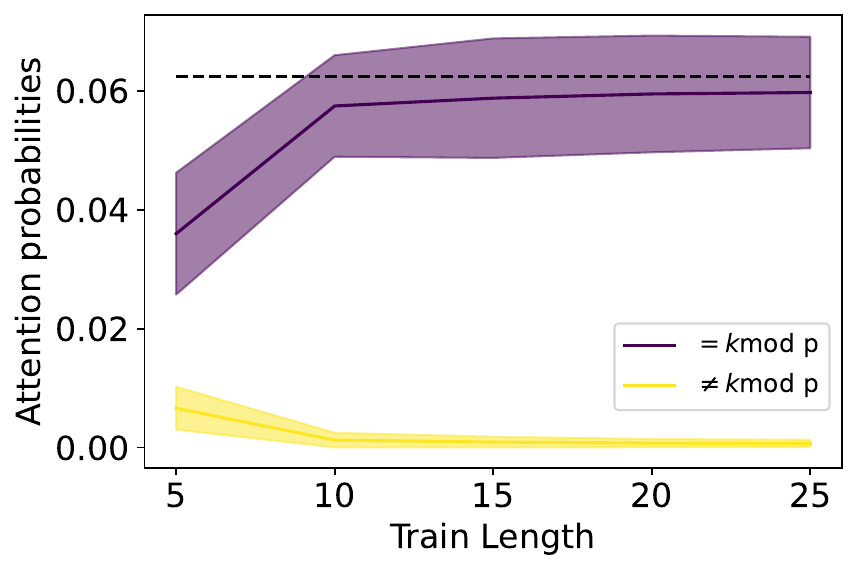}
    \caption{For the $\mathrm{ModPTask}$, the softmax attention approximates uniform attention on all positions $\equiv k \mod p$.}
    \label{fig:attn_probs}
    \vspace{-8mm}
\end{wrapfigure}

The proof of \Cref{thm: main} relies on the ``hardmax" attention behavior discussed in Section~\ref{sec: hardmax}. To check the validity of this assumption, trained on the $\mathrm{ModPTask}$ with $p = 5$ for varying training lengths, and compute the post-softmax attention probabilities on a batch of test sequences. In \Cref{fig:attn_probs}, we observe that the positions not equal to $k \mod p$ receive near zero attention probabilities while those in positions equal to $k \mod p$ receive nearly the same attention probability (the dashed black line). This provides evidence that, for large enough training length, the models are indeed operating in the hardmax regime.

\paragraph{Two-layer Transformers.} We next provide empirical support for the conclusions of \Cref{thm:infinite-precision}. We train depth 2 transformers on the \emph{in-context $k$-gram} synthetic task, as defined in \Cref{def:k-gram}. Additional experimental details are given in \Cref{sec:experimental-details}. Results are presented in \Cref{fig:in-context-k-gram}. In the leftmost pane, we again observe that test loss plateaus as test length increases. Both the middle and rightmost plots show that as the training length increases, the limiting test loss decreases. Moreover, the middle plot shows the value of this limiting test loss increases with the alphabet size $S$ (when we fix $k=2$), while the rightmost plot shows that it increases with $k$ (when we fix $S=2$). This matches the qualitative dependence of the complexity measure $C(f)$ on both $S$ and $\tau$.

\begin{figure*}[t]
    \centering
    \includegraphics[width=0.32\textwidth]{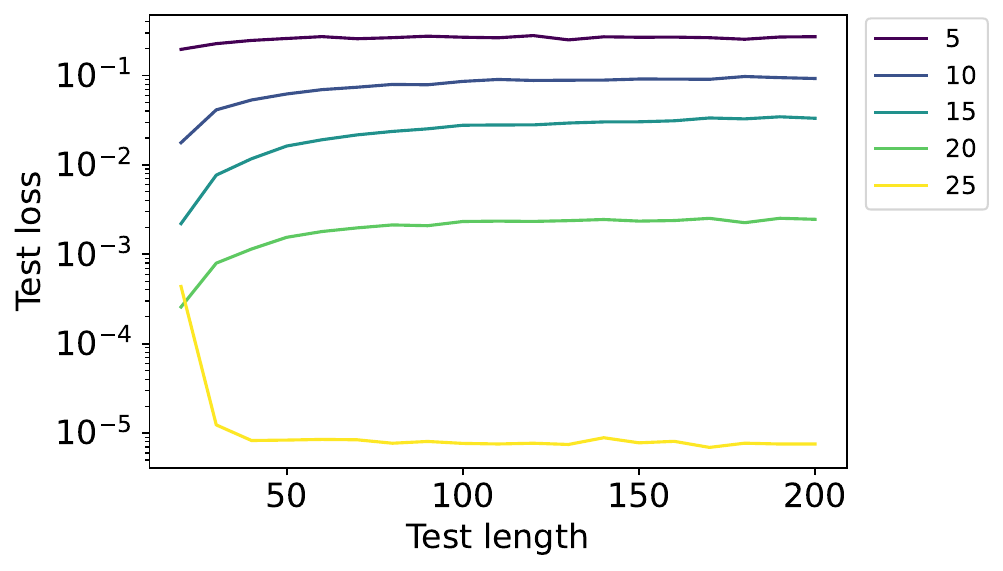}
    \includegraphics[width=0.33\textwidth]{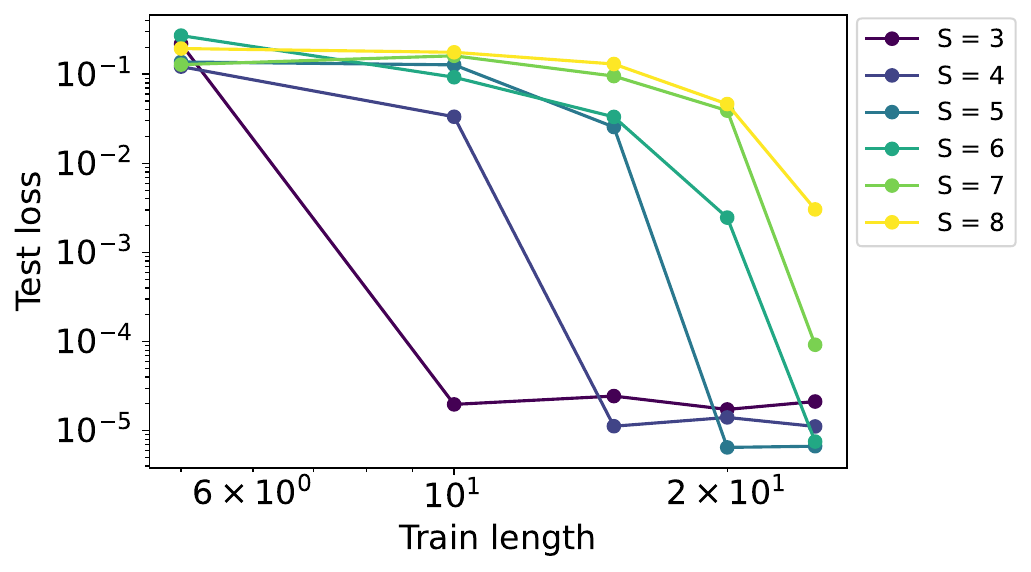}
    \includegraphics[width=0.33
    \textwidth]{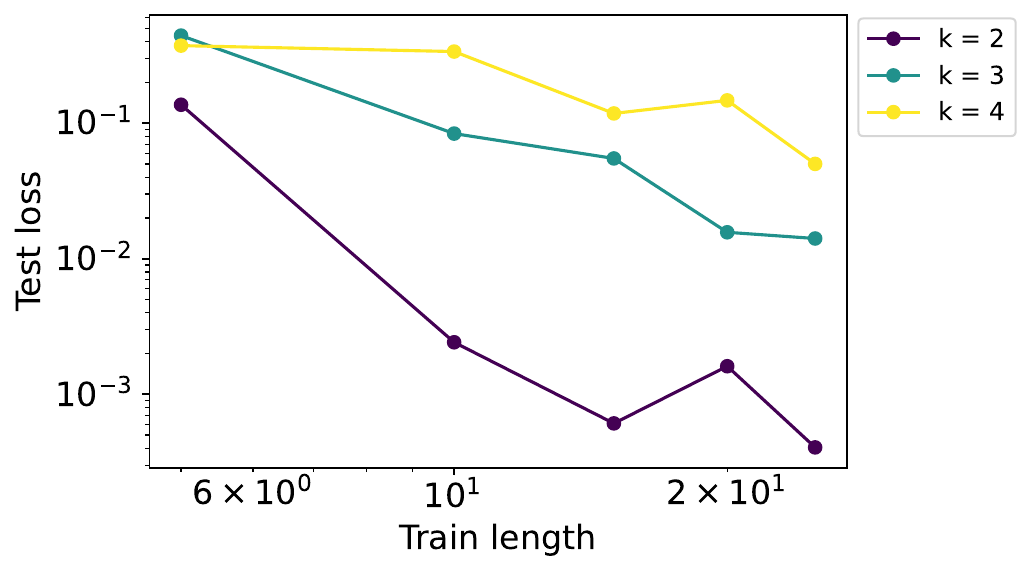}
    \caption{Experiments on the in-context $k$-gram task, for varying $k$ and vocabulary size $S$. \textbf{Left:} For fixed training length, as test length increases, the test loss plateaus at a finite value. \textbf{Middle:} The value the test loss plateaus at decreases monotonically with training length, and increases with $S$. \textbf{Right:} The value the test loss plateaus at also increases monotonically with $k$.}
    \label{fig:in-context-k-gram}
\end{figure*}

\section{Conclusion}

In this paper, we provided quantitative bounds on the training length required for LG to occur, in settings including finite- and infinite-precision attention, one- and two-layer transformers, and $\ell_\infty$ and average error control. Our results show that this minimum training length scales with the parameter norms of the transformer, the periodicity $\Delta$, locality $\tau$, alphabet size $\abs{\Sigma}$, and inverse error $\e^{-1}$. Unifying our analyses is the high level argument that LG occurs whenever the forward pass of a transformer on a longer string can be ``simulated" by that of a shorter string contained in the training set. Qualitative support for the derived scalings are presented in \Cref{sec: experiments}.


One interesting direction of future work is to extend our results to transformers with larger depth. In particular, it would be interesting to relate the minimum training length $N$ to other notions of complexity such as the length of the corresponding C-RASP program. Moreover, it would be interesting to extend our average-case analysis in \Cref{thm: dirichlet} to broader classes of distributions over sequences. Finally, it is an important question to characterize how different positional embedding schemes, which empirically improve LG, affect the minimum training length $N$.


\subsection*{Acknowledgments}
JDL acknowledges support of Open Philanthropy, NSF IIS 2107304,  NSF CCF 2212262, NSF CAREER Award 2144994, and NSF CCF 2019844.


\newpage
\bibliography{main}
\bibliographystyle{iclr2026_conference}

\newpage

\appendix
\newpage
\crefalias{section}{appendix} 

\section{Omitted Proofs from Section~\ref{sec: hard attn results}}

In all of the proofs and discussions for this section, we will assume that all sequences are of length at least $2^{p/\min\{\g(f), \g(g)\}}$.
As discussed in Section~\ref{sec: hardmax}, this means that attention will operate as an argmax over tokens with maximal logits, and therefore the computations of the transformer will be performed over a subset of tokens determined by (token, position mod $\Delta$) in the $\tau$-prefix as well as potentially some tokens in the $\tau$-suffix.
As, such, it will be useful to make the following definition.
\begin{defn}
Define the \emph{token counting function} $n: \Sigma \times \mathbb{Z} \times \Sigma^* \to \mathbb{Z}_{\geq 0}$ by 
\[n(s,i,x) = \#\{j \in 1,\ldots,|x|-\tau \:\: | \:\: x_j = s \: \textrm{and} \: j \equiv i \pmod{\Delta}\}.\]
That is, $n(s,i,x)$ counts the number of times that token $s$ appears in a position which is $i \pmod{\Delta}$ in $x$ before the $\tau$-suffix. In the case where there are no positional embedding vectors, we similarly define
\[n(s,x) = \#\{j \in 1,\ldots,|x|-\tau \:\: | \:\: x_j = s\}.\]
\end{defn}

The subset of (token, position mod $\Delta$) pairs which enter the hard attention mechanism is determined by the final token and its positional embedding vector. Thus, in the constructions which follow, it will be important that the original long sequence $x$ ($|x|=T$) is simulated by a shorter string $z$ ($|z|=N$) such that $x_T = z_T$ and $T \equiv N \pmod{\Delta}$. This will ensure that the hard attention mechanism considers the same (token, positional embedding) pairs for both strings, and therefore that the output of the limit transformer can be approximated by preserving the ratios of these quantities.

With these concerns in mind, consider a single-head, single-layer limit transformer $f$ and an input string $x$ with $|x|\geq 2^{\g(f)}$. Let $A(x)\subseteq \Sigma\times[\Delta]$ be the set of (token, position mod $\Delta$) pairs in the $\tau$-prefix attended to by $f$ when parsing the final token $x_T$, and let $A^\tau(x)$ be the set of (token, position) pairs attended to in the $\tau$-suffix. Then the internal state of $f$ immediately after the attention layer is given by
\begin{equation} \label{eq: hard attn formula 1}
\tilde{f}(x) = \frac{\sum_{(s,i)\in A(x)} n(s,i,x)\bV_f(\bE^f_s+\bp_i) + \sum_{(s,i)\in A^\tau(x)}\bV_f(\bE^f_s+\bp_i)}{\sum_{(s,i)\in A(x)} n(s,i,x) + |A^\tau(x)|},
\end{equation}
and the full computation is given by
\begin{equation} \label{eq: hard attn formula 2}
f(x) = \bU_f \l( (\bE^f_{x_T} + \tilde{f}(x)) + \bB_f \psi_f(\bA_f (\bE^f_{x_T} + \tilde{f}(x)) + \bb_f) \r).
\end{equation}

\subsection{Helper Lemmas}
For both of the finite-precision theorems, the following lemma relating $\tilde{f}$ and $f$ will be useful.
\begin{lem} \label{lem: dir post attn to final output}
Let $x$ and $z$ be two sequences with  $|x| = T$ and $|z| = N$, and suppose that the final tokens are equal: $x_T = z_N$. Then $\|f(x) - f(z)\| \leq L^{\mathrm{MLP}}_f\|\tilde{f}(x) - \tilde{f}(z)\|$, where
\[
L^{\mathrm{MLP}}_f = \|\bU_f\|(1 + \|\bA_f\| \|\bB_f\| \|\psi_f\|)
\]
is a bound on the Lipschitz constant of the transformer MLP.
\end{lem}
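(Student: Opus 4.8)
The plan is to simply unfold the definition of the transformer output in \eqref{eq: hard attn formula 2} and bound the difference $f(x)-f(z)$ term by term, using the triangle inequality, submultiplicativity of the operator norm, and the Lipschitz property of the activation $\psi_f$. The only place the hypothesis is used is to cancel the token-embedding contributions.

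First I would note that since $x_T = z_N$, the embeddings coincide: $\bE^f_{x_T} = \bE^f_{z_N}$. Writing $u := \bE^f_{x_T} + \tilde f(x)$ and $v := \bE^f_{x_T} + \tilde f(z)$, we have $u - v = \tilde f(x) - \tilde f(z)$, and \eqref{eq: hard attn formula 2} gives
\[
f(x) - f(z) = \bU_f\Bigl[(u-v) + \bB_f\bigl(\psi_f(\bA_f u + \bb_f) - \psi_f(\bA_f v + \bb_f)\bigr)\Bigr].
\]
Taking norms and using submultiplicativity,
\[
\|f(x)-f(z)\| \le \|\bU_f\|\Bigl(\|u-v\| + \|\bB_f\|\,\bigl\|\psi_f(\bA_f u + \bb_f) - \psi_f(\bA_f v + \bb_f)\bigr\|\Bigr).
\]
Since $\psi_f$ acts coordinatewise and is $\|\psi_f\|$-Lipschitz (hence $\|\psi_f\|$-Lipschitz in the Euclidean norm), the composition with the affine map $y \mapsto \bA_f y + \bb_f$ is $\|\psi_f\|\|\bA_f\|$-Lipschitz, so the second term is at most $\|\psi_f\|\|\bA_f\|\|u-v\|$. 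Combining, $\|f(x)-f(z)\| \le \|\bU_f\|\bigl(1 + \|\bA_f\|\|\bB_f\|\|\psi_f\|\bigr)\|u-v\| = L^{\mathrm{MLP}}_f\,\|\tilde f(x) - \tilde f(z)\|$, as claimed.

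There is no real obstacle here; the statement is essentially a Lipschitz estimate for the residual-plus-MLP map. The two points worth stating carefully are (i) that the hypothesis $x_T = z_N$ is exactly what makes the token-embedding terms cancel, so only $\tilde f(x) - \tilde f(z)$ survives, and (ii) that $\|\psi_f\|$ should be read as the Lipschitz constant of the (coordinatewise) activation, so that post-composing it with the linear map $\bA_f$ contributes the factor $\|\psi_f\|\|\bA_f\|$ rather than anything larger.
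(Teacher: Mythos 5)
Your proof is correct and follows essentially the same route as the paper's: unfold equation~\eqref{eq: dir f tilde to f}, use $x_T = z_N$ to cancel the embedding terms, and then apply the triangle inequality, submultiplicativity, and the Lipschitz property of $\psi_f$ (the paper simply leaves these last steps as "straightforward"). Your write-up just makes explicit the details the paper omits.
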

\begin{proof}
We can write
\begin{equation} \label{eq: dir f tilde to f}
f(x) = \bU_f \l( (\bE^f_{x_T} + \tilde{f}(x)) + \bB_f \psi_f(\bA_f (\bE^f_{x_T} + \tilde{f}(x)) + \bb_f) \r).
\end{equation}
Because $x_T = z_N$, we have $\bE^f_{x_T} = \bE^f_{z_N}$. Straightforward applications of the triangle inequality and the submultiplicative inequality for operator norms then yield the desired result.
\end{proof}

The following lemma bounds the norm of the output of a limit transformer in terms of the norms of the weight matrices and activation function.
\begin{lem} \label{lem: dir max f bound}
Define the following quantities:
\[L^{\mathrm{MLP}}_f = \|\bU_f\|(1 + \|\bA_f\| \|\bB_f\| \|\psi_f\|),\]
\[M^V_f = \max_{s\in \Sigma, \: i\in [\Delta]}\|\bV_f (\bE^f_s + \bp_i)\|,\]
\[M^E_f = \max_{s\in\Sigma, i\in[\Delta]}\|\bE^f_s + \bp_i\|.\] Then setting 
\[M_f = L^{\mathrm{MLP}}_f(M^E_f + M^V_f + \|b_f\|),\] we have $\|f(x)\|\leq M_f$ for all $x$. 
\end{lem}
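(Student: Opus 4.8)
The plan is to bound the closed form \eqref{eq: hard attn formula 2} for $f(x)$ directly: first control the post-attention state $\bE^f_{x_T} + \tilde f(x)$, then push that bound through the MLP and the unembedding $\bU_f$. The constant $L^{\mathrm{MLP}}_f$ is engineered precisely to absorb the latter step, so the only part with genuine content is bounding the attention output $\tilde f(x)$; everything else is the triangle inequality and submultiplicativity of the operator norm, exactly as in the proof of \Cref{lem: dir post attn to final output}.

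\textbf{Step 1 (the attention output is a bounded convex combination).} Inspecting \eqref{eq: hard attn formula 1}, the vector $\tilde f(x)$ is a convex combination of the value vectors $\bV_f(\bE^f_s+\bp_i)$ over pairs $(s,i)\in A(x)$ and $(s,i)\in A^\tau(x)$: the coefficients $n(s,i,x)$ and the unit coefficients coming from the $\tau$-suffix are nonnegative, and after normalizing by the denominator they sum to one (the denominator is strictly positive since the final token $x_T$ at position $T$ always lies in $A^\tau(x)$). By $\Delta$-periodicity, $\bp_j=\bp_{j\bmod\Delta}$ for every position $j$, so each such value vector equals $\bV_f(\bE^f_s+\bp_i)$ for some $s\in\Sigma$ and $i\in[\Delta]$, hence has norm at most $M^V_f$. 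Therefore $\|\tilde f(x)\|\le M^V_f$. The residual term in \eqref{eq: hard attn formula 2} is $\by^{(0)}_T=\bE^f_{x_T}+\bp_T$, whose norm is at most $M^E_f$ by definition, so the triangle inequality gives $\|\bE^f_{x_T}+\tilde f(x)\|\le M^E_f+M^V_f$.

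\textbf{Step 2 (push through the MLP and unembedding).} Write $v=\bE^f_{x_T}+\tilde f(x)$, so $\|v\|\le M^E_f+M^V_f$. From \eqref{eq: hard attn formula 2} and submultiplicativity,
\[
\|f(x)\| \le \|\bU_f\|\Big(\|v\| + \|\bB_f\|\,\|\psi_f(\bA_f v + \bb_f)\|\Big).
\]
Assuming without loss of generality that $\psi_f(0)=0$ (otherwise fold its value into $\bb_f$), the Lipschitz bound on $\psi_f$ gives $\|\psi_f(\bA_f v+\bb_f)\|\le \|\psi_f\|\big(\|\bA_f\|\,\|v\|+\|\bb_f\|\big)$. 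Substituting, replacing $\|v\|$ by $M^E_f+M^V_f$, and collecting coefficients into $L^{\mathrm{MLP}}_f=\|\bU_f\|(1+\|\bA_f\|\|\bB_f\|\|\psi_f\|)$ bounds the whole expression by $L^{\mathrm{MLP}}_f(M^E_f+M^V_f+\|b_f\|)=M_f$, by the same manipulation used for \Cref{lem: dir post attn to final output}.

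\textbf{Expected obstacle.} There is no serious obstacle: this is a uniform a priori bound and the argument is bookkeeping. The one point requiring (mild) care is Step 1 — recognizing that in the hardmax regime the attention output is genuinely a convex combination over a family of value vectors which, by $\Delta$-periodicity, lives in the finite bounded set $\{\bV_f(\bE^f_s+\bp_i): s\in\Sigma,\ i\in[\Delta]\}$. Once that is in hand, the remaining estimates are the same operator-norm manipulations already used for \Cref{lem: dir post attn to final output}.
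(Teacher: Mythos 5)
Your proposal is correct and matches the paper's argument: the paper likewise observes that $\tilde f(x)$ is a convex combination of vectors $\bV_f(\bE^f_s+\bp_i)$, hence bounded by $M^V_f$, and then pushes the bound through \eqref{eq: dir f tilde to f} via the triangle inequality and submultiplicativity of the operator norm. Your extra remarks (nonempty denominator, $\Delta$-periodicity, $\psi_f(0)=0$ WLOG) only make explicit details the paper leaves implicit.
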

\begin{proof}
The proof is nearly identical to that of Lemma~\ref{lem: dir post attn to final output}. We begin by observing that $\tilde{f}(x)$ is a convex combination of terms of the form $\bV_f(\bE^f_s + \bp_i)$, so in particular $\|\tilde{f}(x)\| \leq M^V_f$. The results follows by using the expression for $f(x)$ in terms of $\tilde{f}(x)$ given in equation~\eqref{eq: dir f tilde to f}, the triangle inequality, and submultiplicativity of the operator norm.
\end{proof}

\subsection{Proof of Theorem~\ref{thm: main}} \label{app: proofs}
In this section, we give the proof of Theorem~\ref{thm: main}. We will first prove another helper lemma which will aid in our simulation string constructions.
\begin{lem} \label{lem: linf ratio approx construction}
Let $\{p_i\}_{i=1}^n$ be an arbitrary finite probability distribution. For any integer $N\geq 1$, there exist nonnegative integers $\{m_i\}_{i=1}^n$ such that $\sum_{i=1}^n m_i = N$ and $|m_i/N - p_i| \leq 1/N$ for all $i=1,\ldots,n$. In particular, this also implies that the $m_i$ satisfy $|p_i N - m_i| \leq 1$ for all $i$.
\end{lem}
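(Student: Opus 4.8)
This is a standard rounding/apportionment argument, and the plan is to round each $p_i$ down and then distribute the leftover mass one unit at a time. Concretely, first set $m_i^{0} = \lfloor N p_i \rfloor$ for each $i$. Since $\sum_i p_i = 1$ we have $\sum_i m_i^{0} \le \sum_i N p_i = N$, and the deficit $r := N - \sum_i m_i^{0}$ is a nonnegative integer. Moreover $r = \sum_i (N p_i - \lfloor N p_i\rfloor) = \sum_i \{N p_i\} < n$, since each fractional part lies in $[0,1)$; in particular $0 \le r \le n$.

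Next, choose any subset $S \subseteq \{1,\ldots,n\}$ with $|S| = r$ (for instance, the $r$ indices with the largest fractional parts $\{N p_i\}$, though the choice is irrelevant), and define
\[
m_i = \begin{cases} m_i^{0} + 1 & i \in S,\\ m_i^{0} & i \notin S.\end{cases}
\]
Then $\sum_i m_i = \sum_i m_i^{0} + r = N$, and each $m_i$ is a nonnegative integer, as required.

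It remains to check the approximation bound. For $i \notin S$ we have $m_i = \lfloor N p_i\rfloor$, so $N p_i - m_i = \{N p_i\} \in [0,1)$, giving $|N p_i - m_i| < 1$. For $i \in S$ we have $m_i = \lfloor N p_i \rfloor + 1$, so $m_i - N p_i = 1 - \{N p_i\} \in (0,1]$, giving $|N p_i - m_i| \le 1$. In either case $|p_i N - m_i| \le 1$, and dividing by $N$ yields $|m_i/N - p_i| \le 1/N$ for all $i$. I do not anticipate any real obstacle here; the only point requiring a line of care is verifying $r \le n$ so that a subset $S$ of size $r$ exists, which follows immediately from the fractional-part bound above.
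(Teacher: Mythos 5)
Your proposal is correct and is essentially the same argument as the paper's: round each $p_iN$ down, note the residual $R = N - \sum_i \lfloor p_i N\rfloor$ satisfies $0 \le R \le n$, and increment $R$ of the rounded values by one, which immediately gives $|p_iN - m_i| \le 1$ and hence $|m_i/N - p_i| \le 1/N$. The only (immaterial) difference is that the paper increments the first $R$ indices while you allow an arbitrary subset of size $R$.
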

\begin{proof}
Define $\tilde{m}_i = \lfloor p_i N \rfloor$ and let $R = N - \sum_{i=1}^n \tilde{m}_i$. Note that $0\leq R \leq n$. For $i=1,\ldots,R$, define $m_i = \tilde{m}_i+1$ and for $i=R+1,\ldots,n$, define $m_i = \tilde{m}_i$. This choice of $m_i$ can easily be seen to have the desired properties.
\end{proof}

We are now ready to prove Theorem~\ref{thm: main}, which we restate for convenience.
\main*
\begin{proof}
We will first give a proof assuming there are no positional embedding vectors $\bp_i$. We will then show how to easily adapt the result to the case of positional embedding vectors.

Consider two limit transformers $f$ and $g$ and an input string $x$. Let $P_f$ be the positions attended to by $f$ and $P_g$ be the positions attended to by $g$ \emph{in the $\tau$-prefix of $x$} and assume WLOG that $|P_f| \leq |P_g|$. Let $A_f = A_f(x) = \{x_i \: | \: i \in P_f\}$ be the set of tokens which $f$ attends to and define $A_g = A_g(x)$ similarly.

We construct the auxiliary string $z$ as follows. The $\tau$-suffix of $z$ is always equal to the $\tau$-suffix of $x$; in particular, this ensures that the final tokens of $x$ and $z$ are equal.

If $|P_f|, |P_g| \leq 1/\e$, then the attention pattern in the $\tau$-prefix of $x$ can be directly recreated simultaneously for $f$ and $g$ using at most $2/\e$ tokens by just copying the union of the tokens in attention for $f$ and $g$ into $z$. In this case, by formulas~\eqref{eq: hard attn formula 1} and \eqref{eq: hard attn formula 2}, we will have $f(x) = f(z)$ and $g(x) = g(z)$ and therefore $\|f(x)-g(x)\| = \|f(z)-g(z)\|\leq \e$. Thus, we will assume that at least $|P_g| \geq 1/\e$.

We first recreate the attention pattern of $f$. To simplify notation, let $n_s = n(s,x)$ and $m_s = n(s,z)$.
If $|P_f| \leq 1/\e$, then we simply set $z_{1:|P_f|} = x_{P_f}$ (i.e., we set the first $|P_f|$ tokens of $z$ equal to the attention pattern of $f$ on the $\tau$-prefix of $x$). The tokens which we will add later do not belong to $A_f$; thus, we will clearly have $f(x) = f(z)$. Thus, in the following construction, we will assume that $|P_f| \geq 1/\e$.

We first proceed with a slightly more fine-grained construction of Lemma~\ref{lem: linf ratio approx construction}. For each $s \in A_f$, define 
\[
\tilde{m}_s = \l\lfloor \frac{\lceil 1/\e\rceil}{|P_f|}n_s\r\rfloor
\]
and let
\[
R = \l\lfloor \sum_{s\in A_f\cap A_g} \frac{\lceil 1/\e\rceil}{|P_f|}n_s - \sum_{s\in A_f\cap A_g} \tilde{m}_s \r\rfloor.
\]
We must have $R \leq |A_f\cap A_g|$, so choose $I\subseteq A_f\cap A_g$ with $|I|=R$ and define $m_s = \tilde{m}_s + 1$ for $s\in I$ and $m_s = \tilde{m}_s$ for $s\in (A_f \cap A_g) \setminus I$. This defines $m_s$ for all $s\in A_f \cap A_g$ with the following two properties:
\begin{equation} \label{eq: linf intersection ms}
\l|m_s - \frac{\lceil 1/\e \rceil}{|P_f|} n_s \r| \leq 1 \quad \forall s\in A_f\cap A_g, \hspace{.25in}
0 \leq \frac{\lceil 1/\e\rceil}{|P_f|}|P_f\cap P_g| - \sum_{s\in A_f \cap A_g} m_s < 1.
\end{equation}
We then turn to $A_f \setminus A_g$ and define
\[
R' = \l\lfloor\sum_{s\in A_f} \frac{\lceil 1/\e\rceil}{|P_f|}n_s - \sum_{s\in A_f\setminus A_g} \tilde{m}_s - \sum_{s\in A_f\cap A_g} m_s\r\rfloor.
\]
It is clear that $0 \leq R' \leq |A_f\setminus A_g|$, so we can similarly choose $J\subseteq A_f\setminus A_g$ with $|J| = R'$ and define $m_s = \tilde{m}_s + 1$ for $s\in J$ and $m_s = \tilde{m}_s$ for $s\in (A_f \setminus A_g)\setminus J$. The $m_s$ defined in this way have the property that
\begin{equation} \label{eq: linf difference ms}
\l|m_s - \frac{\lceil 1/\e \rceil}{|P_f|} n_s \r| \leq 1 \quad \forall s\in A_f\setminus A_g, \hspace{.25in}
\l|\frac{\lceil 1/\e\rceil}{|P_f|}|P_f\setminus P_g| - \sum_{s\in A_f\setminus A_g} m_s\r| \leq 1.
\end{equation}
Combining results~\eqref{eq: linf intersection ms} and \eqref{eq: linf difference ms}, we have the additional result that
\begin{equation} \label{eq: Af ms}
\l|\lceil 1/\e \rceil - \sum_{s\in A_f} m_s \r| \leq \l|\frac{\lceil 1/\e\rceil}{|P_f|}|P_f\cap P_g| - \sum_{s\in A_f \cap A_g} m_s\r| + \l|\frac{\lceil 1/\e\rceil}{|P_f|}|P_f\setminus P_g| - \sum_{s\in A_f\setminus A_g} m_s\r| \leq 2.
\end{equation}

In particular, combining the results of inequalities~\eqref{eq: linf intersection ms}, \eqref{eq: linf difference ms}, and \eqref{eq: Af ms}, we can conclude that the $m_s$ satisfy
\begin{equation} \label{eq: ratio error}
\l|\frac{m_s}{\sum_{s'\in A_f} m_{s'}} - \frac{n_s}{|P_f|}\r| = O(\e).
\end{equation}
We can use these inequalities to bound the difference between $f(x)$ and $f(z)$.
Define
\[
\tilde{f}^{\setminus\tau}(x) = \frac{\sum_{s\in A_f} n_s \bV_f\bE^f_s}{\sum_{s'\in A_f}n_{s'}}
\]
to be the internal state of $f$ immediately after the attention layer, ignoring the $\tau$-suffix.
Letting $P^\tau_f$ be the set of positions $f$ attends to in the $\tau$-suffix of $x$, observe that we can write
\[
\tilde{f}(x) = \tilde{f}^{\setminus\tau}(x)\cdot\frac{\sum_{s'\in A_f} n_{s'}}{\sum_{s'\in A_f} n_{s'} + |P^{\tau}_f|} + \frac{\sum_{i\in P^\tau_f} \bV_f\bE^f_{x_i}}{\sum_{s'\in A_f} n_{s'} + |P^\tau_f|}.
\]
It therefore follows that
\begin{align}
\|\tilde{f}(x) - \tilde{f}^{\setminus\tau}(x)\| &\leq \l|1 - \frac{1}{1+\frac{|P^\tau_f|}{\sum_{s'\in A_f} n_{s'}}} \r| \|\tilde{f}^{\setminus\tau}(x)\| + \frac{\tau M^V_f}{\sum_{s'\in A_f} n_{s'}} \label{eq: linf suffix 1} \\[10pt]
&\leq \frac{2\tau}{\sum_{s'\in A_f} n_{s'}} \cdot M^V_f + \frac{\tau M^V_f}{\sum_{s'\in A_f} n_{s'}} \label{eq: linf suffix 2} \\[10pt]
&\leq 3 M^V_f \tau \e. \label{eq: linf suffix 3}
\end{align}
Inequalities~\eqref{eq: linf suffix 1} and \eqref{eq: linf suffix 2} both use the fact that $|P^\tau_f| \leq \tau$ and $\|\bV_f\bE^f_s \| \leq M^V_f$.
Inequality~\eqref{eq: linf suffix 2} additionally uses that 
\[
\frac{1}{1+\frac{|P^\tau_f|}{\sum_{s'\in A_f} n_{s'}}} \geq 1-\frac{2|P^\tau_f|}{\sum_{s'\in A_f} n_{s'}}
\]
provided that $|P^\tau_f|/\sum_{s'\in A_f} n_{s'} \leq 1/2$. Since $|P_f| = \sum_{s\in A_f} n_s > 1/\e$ and $|P^\tau_f|\leq \tau$, this inequality holds for $\e$ small enough. 
The final inequality~\eqref{eq: dir suffix 3} simply uses the fact that $\sum_{s\in A_f} n_s = |P_f| > 1/\e$.

If we then define
\[
\tilde{f}^{\setminus\tau}(z) = \frac{\sum_{s\in A_f} m_s \bV_f\bE^f_s}{\sum_{s'\in A_f}m_{s'}},
\]
a similar calculation will then yield
\[
\|\tilde{f}(z)-\tilde{f}^{\setminus \tau}(z)\| \leq \frac{3\tau M^V_f}{\sum_{s\in A_f}m_s} \leq \frac{3\tau M^V_f}{\lceil 1/\e\rceil - 2} \leq 4M^V_f\tau\e
\]
for $\e$ small enough. By the triangle inequality, we then have
\begin{align}
\|\tilde{f}(x)-\tilde{f}(z)\| &\leq \|\tilde{f}(x) - \tilde{f}^{\setminus\tau}(x)\| + \|\tilde{f}(z) - \tilde{f}^{\setminus\tau}(z)\| + \|\tilde{f}^{\setminus\tau}(x) - \tilde{f}^{\setminus{\tau}}(z)\| \nn \\
&= \|\tilde{f}^{\setminus\tau}(x) - \tilde{f}^{\setminus{\tau}}(z)\| + O(M^V_f\tau\e). \label{eq: linf f tilde triangle ineq}
\end{align}
We can now bound $\|\tilde{f}^{\setminus\tau}(x) - \tilde{f}^{\setminus{\tau}}(z)\|$. We have
\begin{align}
\|\tilde{f}^{\setminus\tau}(x) - \tilde{f}^{\setminus{\tau}}(z)\| &= \l\|\frac{\sum_{s\in A_f} n_s \bV_f\bE^f_s}{|P_f|} - \frac{\sum_{s\in A_f} m_s \bV_f\bE^f_s}{\sum_{s'\in A_f} m_{s'}}\r\| \nn \\[10pt]
&\leq M^V_f \sum_{s \in A_f} \l|\frac{n_s}{|P_f|} - \frac{m_s}{\sum_{s' \in A_f} m_{s'}}\r| \nn \\[10pt]
&= O(M^V_f |\Sigma|\e). \label{eq: linf no tau suffix pre mlp bound}
\end{align}
Equation~\ref{eq: linf no tau suffix pre mlp bound} uses the bound from \eqref{eq: ratio error} and the fact that $|A_f|\leq |\Sigma|$. Plugging \eqref{eq: linf no tau suffix pre mlp bound} into \eqref{eq: linf f tilde triangle ineq}, we obtain
\[
\|\tilde{f}(x) - \tilde{f}(z)\| = O(M^V_f (|\Sigma|+\tau)\e).
\]
Applying Lemma~\ref{lem: dir post attn to final output}, we then have
\begin{equation} \label{eq: linf f sim bound}
\|f(x)-f(z)\| = O(L^{\mathrm{MLP}}_f M^V_f (|\Sigma|+\tau)\e).
\end{equation}
We will refer to the portion of $z$ which has been defined up to now as the \emph{$f$-prefix} of $z$.

It remains to extend $z$ so that it can simulate the behavior of $g$ \emph{without} adding any tokens in $A_f$ so as to preserve the previous calculations. There are now two cases depending on the size of $P_f\cap P_g$. First, suppose that $|P_f \cap P_g|/|P_g| \leq \e$. By Lemma~\ref{lem: linf ratio approx construction}, there exist $\tilde{m}_s$ such that
\[
\l|\tilde{m}_s - \frac{\lceil 1/\e^2\rceil}{|P_g|}n_s \r| \leq 1 \quad \forall s \in A_g, \hspace{.25in} \sum_{s\in A_g} \tilde{m}_s = \lceil 1/\e^2\rceil. 
\]
We now define $m_s = \tilde{m}_s$ for $s\in A_g \setminus A_f$. Combined with the earlier definitions of $m_s$ for $s\in A_f\cap A_g$ above, this defines $m_s$ for all $s\in A_g$. Furthermore, we have
\begin{align}
    \sum_{s\in A_g\setminus A_f} m_s &= \lceil1/\e^2\rceil - \sum_{s\in A_f\cap A_g} \tilde{m}_s \nn \\
    &\geq \lceil1/\e^2\rceil - \sum_{s\in A_f \cap A_g} (\frac{\lceil 1/\e^2\rceil}{|P_g|}n_s + 1) \nn \\
    &= \lceil1/\e^2\rceil - \frac{|P_f \cap P_g|}{|P_g|} \lceil1/\e^2\rceil - |A_f \cap A_g| \nn \\
    &\geq \lceil1/\e^2\rceil(1-\e) - |\Sigma| \label{eq: sum ms in Ag minus Af} \\
    &\geq \lceil1/\e^2\rceil(1-2\e) \nn
\end{align}
for $\e$ small enough. (Here, \eqref{eq: sum ms in Ag minus Af} uses the assumption that $|P_f\cap P_g|/|P_g| \leq \e$.) Thus, using similar logic as the derivation for inequality~\eqref{eq: ratio error}, we obtain
\[
\l|\frac{m_s}{\sum_{s' \in S_g \setminus S_f} m_{s'}} - \frac{n_s}{|P_g|}\r| = O(\e).
\]

We now show that the terms in the $\tau$ suffix \emph{and} in $A_f\cap A_g$ do not contribute much to $g$. Define
\[
\tilde{g}^{\setminus\tau,f}(x) = \sum_{s\in A_g\setminus A_f} \frac{n_s \bV_g \bE^g_s}{|P_g|}.
\]
We have the following bound:
\begin{align}
&\|\tilde{g}(x) - \tilde{g}^{\setminus\tau,f}(x)\| = \l\| \frac{\sum_{s\in A_g} n_s \bV_g \bE^g_s + \sum_{i \in P^\tau_g} \bV_g \bE^g_{x_i}}{|P_g| + |P^\tau_g|} - \sum_{s\in A_g\setminus A_f} \frac{n_s \bV_g \bE^g_s}{|P_g|} \r\| \nn \\[10pt]
&\leq \frac{\sum_{i\in P^\tau_g} \|\bV_g \bE^g_{x_i}\|}{|P_g|+|P^\tau_g|} + \frac{\sum_{s\in A_g\cap A_f}n_s \|\bV_g \bE^g_{x_i}\|}{|P_g| + |P^\tau_g|} + \frac{\sum_{s\in A_g\setminus A_f} n_s \|\bV_g \bE^g_s\|}{|P_g|}\l|\frac{|P_g|}{|P_g|+|P^\tau_g|} - 1\r| \nn \\[10pt]
&\leq M^V_g \tau \e + \frac{M^V_g |P_f \cap P_g|}{|P_g|} + \frac{|P_g \setminus P_f| M^V_g}{|P_g|}\cdot \frac{2|P^\tau_g|}{|P_g|} \label{eq: linf g without tau or intersection 1} \\[10pt]
&\leq M^V_g \tau \e + M^V_g \e + 2M^V_g\tau\e \nn \\[10pt]
&\leq 4M^V_g \tau \e. \label{eq: linf g without tau or intersection 2}
\end{align}
Inequality~\eqref{eq: linf g without tau or intersection 1} used the fact that $|P_g|\geq 1/\e$, $|P^\tau_g|\leq \tau$, $\|\bV_g \bE^g_s\| \leq M^V_g$, and $|P_g|/(|P_g|+|P^\tau_g|)\geq 1-2|P^\tau_g|/|P_g|$ whenever $|P^\tau_g|/|P_g|$ is small enough (which it will be for small enough $\e$).

If we define
\[
\tilde{g}^{\setminus\tau, f}(z) = \sum_{s\in A_g\setminus A_f} \frac{m_s \bV_g \bE^g_s}{\sum_{s'\in A_g\setminus A_f} m_{s'}},
\]
then the same logic as used to derive \eqref{eq: linf g without tau or intersection 2} can be used to show that $\|\tilde{g}^{\setminus\tau, f}(z)-\tilde{g}(z)\| = O(M^V_g \e)$. This is because the critical facts that 
\[
\frac{\sum_{s\in A_g \cap A_f} m_s}{\sum_{s\in A_g\setminus A_f} m_s} = O(\e),
\]
analogous to $|P_f\cap P_g|/|P_g|$; and
\[
\frac{|P^\tau_g|}{\sum_{s\in A_g\setminus A_f} m_s} = O(\tau\e^2) = O(\e)
\]
analogous to $|P^\tau_g|/|P_g| = O(\tau\e)$.

We can now use inequality~\eqref{eq: linf g without tau or intersection 2}, the triangle inequality, and Lemma~\ref{lem: dir post attn to final output} to bound the error for $g$. We have
\begin{align*}
\|\tilde{g}(x) - \tilde{g}(z)\| &\leq \|\tilde{g}(x) - \tilde{g}^{\setminus\tau, f}(x)\| + \|\tilde{g}^{\setminus\tau, f}(z) - \tilde{g}(z)\| + \|\tilde{g}^{\setminus\tau, f}(x) - \tilde{g}^{\setminus\tau, f}(z)\| \\[10pt]
&\leq O(M^V_g\tau\e) + \sum_{s \in A_g \setminus A_f} \l|\frac{m_s}{\sum_{s' \in A_g \setminus A_f} m_{s'}} - \frac{n_s}{|P_g|}\r| M^V_g \\[10pt]
&= O(M^V_g(\tau+|\Sigma|)\e).
\end{align*}
Lemma~\ref{lem: dir post attn to final output} then gives $\|g(x)-g(z)\| = O(L^{\mathrm{MLP}}_g M^V_g(\tau+|\Sigma|)\e)$. This completes the case when $|P_f\cap P_g|/|P_g| \leq \e$.

Otherwise we have $|P_f\cap P_g|/|P_g|>\e$. In this case, we have $|P_g| < |P_f\cap P_g|/\e \leq |P_f|/\e$. We now consider two further subcases. Again if $|P_f| \leq 1/\e$, we then have $|P_g| \leq 1/\e^2$. Thus, there are at most $1/\e + 1/\e^2 + \tau = O(1/\e^2)$ tokens in the union of the attention patterns of $f$ and $g$ on $x$. Thus, by setting $z$ equal to the collection of all tokens in this union of attention patterns, we have $f(x)=f(z)$, $g(x)=g(z)$, $\|f(x)-g(x)\|=\|f(z)-g(z)\|=O(\e)$, and $|z|=O(1/\e^2)$ as desired. Thus, we may assume that $|P_f|>1/\e$.

Let $s^* = \argmax_{s\in A_f} n_s$. Note that since $|P_f| \geq 1/\e$ and $|A_f| \leq |\Sigma|$, we must have $n_{s^*} \geq |P_f|/|\Sigma|$.
For $s \in A_g \setminus A_f$, we first define $\tilde{m}_s$ by
\[
\tilde{m}_s = \l\lfloor \frac{m_{s^*}}{n_{s^*}} \cdot n_s \r\rfloor.
\]
Again similar to Lemma~\ref{lem: linf ratio approx construction}, we define
\[
R = \lfloor \sum_{s\in A_g\setminus A_f} \frac{m_{s^*}}{n_{s^*}} n_s - \sum_{s\in A_g\setminus A_f} m_s\rfloor.
\]
We again have $R \leq |A_g\setminus A_f|$, so we can choose $I\subseteq A_g\setminus A_f$, $|I|=R$, and define $m_s = \tilde{m}_s+1$ for $s\in I$ and $m_s=\tilde{m}_s$ for $s\in(A_g\setminus A_f)\setminus I$. In this way, we have
\begin{equation} \label{eq: linf large intersection ms bound}
\l|m_s - \frac{m_{s^*}}{n_{s^*}}n_s \r| \leq 1 \quad \forall s \in A_g\setminus A_f, \hspace{.25in} \l|\sum_{s\in A_g\setminus A_f} \frac{m_{s^*}}{n_{s^*}} n_s - \sum_{s\in A_g\setminus A_f} m_s \r| \leq 1.
\end{equation}
Note that in addition, we have
\begin{align}
\sum_{s\in A_g\setminus A_f} m_s &\leq \sum_{s\in A_g\setminus A_f} \frac{m_{s^*}}{n_{s^*}} n_s + 1 \\[10pt]
&\leq \sum_{s\in A_g\setminus A_f} \frac{\frac{\lceil 1/\e \rceil}{|P_f|} n_{s^*} + 1}{n_{s^*}} n_s + 1 \\[10pt]
&\leq \frac{\lceil 1/\e \rceil}{|P_f|} |P_g\setminus P_f| + \frac{|\Sigma|}{n_{s^*}} + 1 \\[10pt]
&\leq \lceil 1/\e \rceil \cdot (1/\e) + |\Sigma|^2/\e + 1 \\[10pt]
&= O(1/\e^2).
\end{align}
In particular, this implies that this construction can be completed by adding at most $O(1/\e^2)$ tokens to $z$, so in all cases the length of $z$ is $O(1/\e^2)$ as desired.

We now proceed to bound the approximation error. 
We first want to extend the bound in \eqref{eq: linf large intersection ms bound} to all of $A_g$. To this end, we have
\begin{align}
\l|\sum_{s\in A_g} \frac{m_{s^*}}{n_{s^*}}n_s - \sum_{s\in A_g} m_s \r| &\leq \l|\sum_{s\in A_g\setminus A_f} \frac{m_{s^*}}{n_{s^*}}n_s - \sum_{s\in A_g\setminus A_f} m_s \r| + \l|\sum_{s\in A_g\cap A_f} \frac{m_{s^*}}{n_{s^*}}n_s - \sum_{s\in A_g\cap A_f} m_s \r| \nn \\[10pt]
&\leq 1 + \frac{m_{s^*}}{n_{s^*}}|P_g\cap P_f| - \frac{\lceil 1/\e\rceil}{|P_f|} |P_g\cap P_f| + 1 \label{eq: extend to Ag 1} \\[10pt]
&\leq 2 + \l(\frac{\frac{\lceil1/\e\rceil}{|P_f|}n_{s^*}+1}{n_{s^*}} - \frac{\lceil1/\e\rceil}{|P_f|} \r)|P_g \cap P_f| \label{eq: extend to Ag 2} \\[10pt]
&\leq 2 + \frac{|P_g \cap P_f|}{n_{s^*}} \nn \\[10pt]
&\leq 2 + |\Sigma|. \label{eq: extend to Ag 3}
\end{align}
Inequality~\eqref{eq: extend to Ag 1} uses \eqref{eq: linf large intersection ms bound} and \eqref{eq: linf intersection ms};
\eqref{eq: extend to Ag 2} again uses \eqref{eq: linf intersection ms};
and \eqref{eq: extend to Ag 3} uses the fact that $n_{s^*} \geq |P_f|/|\Sigma|$.

We can now bound the error of the \emph{ratio} of $m_s$ over all of $A_g$. We have
\begin{align}
\frac{m_s}{\sum_{s'\in A_g} m_{s'}} &\geq \frac{\frac{m_{s^*}}{n_{s^*}} n_s - 1}{\frac{m_{s^*}}{n_{s^*}}\sum_{s'\in A_g} n_{s'} + 2 + |\Sigma|} \label{eq: Ag ratio 1} \\[10pt]
&\geq \frac{n_s}{\sum_{s'\in A_g} n_{s'} + \frac{2+|\Sigma|}{m_{s^*}/n_{s^*}}} - \frac{1}{\frac{m_{s^*}}{n_{s^*}}\sum_{s'\in A_g} n_{s'}} \nn \\[10pt]
&\geq \frac{n_s}{\sum_{s'\in A_g} n_{s'}} - O\l(\frac{\frac{|\Sigma|}{m_{s^*}/n_{s^*}}}{\sum_{s'\in A_g} n_{s'}} \r) - \frac{1}{m_{s^*}} \label{eq: Ag ratio 2} \\[10pt]
&\geq \frac{n_s}{\sum_{s'\in A_g} n_{s'}} - O\l(\frac{|\Sigma|}{m_{s^*}} \r) \nn \\[10pt]
&\geq \frac{n_s}{\sum_{s'\in A_g} n_{s'}} - O(|\Sigma|^2 \e). \label{eq: Ag ratio 3}
\end{align}
Inequality~\eqref{eq: Ag ratio 1} uses \eqref{eq: linf intersection ms} and \eqref{eq: extend to Ag 3};
\eqref{eq: Ag ratio 2} uses $(\sum_{s'\in A_g} n_{s'})/n_{s^*} \geq 1$;
and \eqref{eq: Ag ratio 3} again uses \eqref{eq: linf intersection ms} and $n_{s^*} \geq |P_f|/|\Sigma|$ to conclude $m_{s^*} = \Omega(|\Sigma|/\e)$. In a similar fashion, it can be shown that 
\[
\frac{m_s}{\sum_{s'\in A_g} m_{s'}} \leq \frac{n_s}{\sum_{s'\in A_g} n_{s'}} + O(|\Sigma|^2 \e).
\]

Now we compare $g(x)$ and $g(z)$. As before, the effect of the $\tau$-prefix contributes at most $O(L^{\mathrm{MLP}}_g M^V_g\tau\e)$ to $\|g(x)-g(z)\|$, so we have
\begin{align*}
\|g(x) - g(z)\| &\leq L^{\mathrm{MLP}}_g\sum_{s \in S_g} \l|\frac{m_s}{\sum_{s' \in S_g} m_{s'}} - \frac{n_s}{\sum_{s'\in S_g}n_{s'}}\r| M^V_g + O(L^{\mathrm{MLP}}_g M^V_g\tau\e)\\[10pt]
&= O\l(L^{\mathrm{MLP}}_g M^V_g(|\Sigma|^3+\tau)\e\r).
\end{align*}
Let $M_f = L^{\mathrm{MLP}}_f M^V_f$ and similarly for $g$. In every case, we have constructed $z$ such that $\|f(x)-f(z)\| = O(M_f (|\Sigma|+\tau) \e)$ and $\|g(x)-g(z)\| = O(M_g (|\Sigma|^3+\tau) \e)$, and the length of $z$ is $O(1/\e^2)$. Making the crude bound $|\Sigma|^3+\tau \leq |\Sigma|^3\tau$ for convenience, we therefore have
\[
\|f(x)-g(x)\| = O((M_f+M_g)|\Sigma|^3 \tau \e)
\]
whenever $\|f(z)-g(z)\| \leq \e$ for all inputs $|z|\leq N$, and $N=O(1/\e^2)$.
Thus, by substituting $\e \mapsto \frac{\e}{(M_f+M_g)|\Sigma|^3\tau}$, we have $\|f(x)-g(x)\|=O(\e)$ for any string $x$ provided that $f$ and $g$ differ by at most $\e$ on inputs up to a length 
\[
N = O\l(\frac{(M_f+M_g)^2|\Sigma|^6\tau^2}{\e^2}\r).
\]

\paragraph{Including positional embedding vectors}
The setting with positional embedding vectors can be reduced to the general vocabulary case at the cost of increasing $|\Sigma|\to |\Sigma|\Delta$ and an additional factor of $\Delta$ by considering each possible $(\textrm{token}, \textrm{position mod } \Delta)$ combination as its own token without positional embedding vectors. (This increases the vocabulary size from $|\Sigma|$ to $|\Sigma|\Delta$.)
The construction without positional embedding vectors can then be used considering this expanded vocabulary; however, placing a ``token'' in the expanded vocabulary $\Sigma \times [\Delta]$ may require placing up to $\Delta$ true tokens to ensure that the positional embedding is correct. Thus, this construction requires at most an additional factor of $\Delta$ tokens.
This gives a final bound
\[
N = O\l(\frac{(M_f+M_g)^2\Delta^7|\Sigma|^6\tau^2}{\e^2}\r).
\]
As discussed in the beginning of the section, it will also be critical that $|z| \equiv |x| \pmod{\Delta}$; this can always be accomplished by padding $z$ with at most $\Delta$ additional tokens, which does not change the asymptotic length bound.
\end{proof}

\subsection{Proof of Theorem~\ref{thm: dirichlet}} \label{app: pf of dir}

\begin{lem} \label{lem: dir bulk bound}
Let $x \in \Sigma^T$ and suppose that its constituent tokens $x_i$ are drawn i.i.d. from a categorical distribution, where $\P(x_i = s) = p_s$ for each $s\in \Sigma$. Then with probability at least $1-\rho$, we have that
\[
\l| \frac{n(s,i,x)}{T/\Delta} - p_s \r| \leq \d
\]
for all $(s,i)$ \emph{simultaneously} provided that $T \geq \Delta \d^{-2} \log\frac{2|\Sigma|\Delta}{\rho}$. We say that $x \in \mathrm{bulk}_T$ when the above inequality holds for all $(s,i)\in\Sigma\times[\Delta]$ simultaneously.
\end{lem}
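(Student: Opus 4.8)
The plan is a routine concentration-and-union-bound argument. First I would fix a pair $(s,i)$ and let $S_i = \{\, j \in \{1,\ldots,T-\tau\} : j \equiv i \pmod{\Delta}\,\}$, so that $n(s,i,x) = \sum_{j \in S_i} \mathbf{1}(x_j = s)$. Since the tokens $x_j$ are i.i.d., the summands $\{\mathbf{1}(x_j = s)\}_{j \in S_i}$ are i.i.d.\ $\mathrm{Bernoulli}(p_s)$, so $n(s,i,x)$ is a bounded i.i.d.\ sum with $\E[n(s,i,x)] = |S_i|\, p_s$. I would note that $S_i$ is a single residue class inside $\{1,\ldots,T-\tau\}$, so $\bigl| |S_i| - T/\Delta \bigr| \le 1 + \tau/\Delta$; consequently, replacing $|S_i|$ by $T/\Delta$ in both the normalization and the mean perturbs $n(s,i,x)/(T/\Delta) - p_s$ by at most $O\bigl((\Delta + \tau)/T\bigr)$, which is negligible compared to $\delta$ under the assumed lower bound on $T$.

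Next I would apply Hoeffding's inequality: for any $t \ge 0$,
\[
\P\bigl( \bigl| n(s,i,x) - |S_i|\,p_s \bigr| \ge t \bigr) \le 2\exp\!\bigl(-2t^2/|S_i|\bigr).
\]
Taking $t = \delta\,(T/\Delta)$ and using $|S_i| \le T/\Delta + O(1 + \tau/\Delta)$, this bounds the probability that $|n(s,i,x)/(T/\Delta) - p_s|$ exceeds $\delta$ (after folding in the $O((\Delta+\tau)/T)$ correction) by roughly $2\exp(-\delta^2 T/\Delta)$; the factor-of-two gap between the $2\delta^2$ exponent Hoeffding yields in the idealized case $|S_i| = T/\Delta$ and the $\delta^2$ appearing in the stated threshold is exactly what absorbs the rounding and $\tau$-suffix corrections. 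Finally, a union bound over the $|\Sigma|\cdot\Delta$ pairs $(s,i)$ gives total failure probability at most $2|\Sigma|\Delta \exp(-\delta^2 T/\Delta)$; setting this $\le \rho$ and solving for $T$ gives $T \ge \Delta\,\delta^{-2}\,\log\frac{2|\Sigma|\Delta}{\rho}$, and on the complementary event the defining inequality of $\mathrm{bulk}_T$ holds for all $(s,i)$ simultaneously.

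I do not expect any substantive obstacle: this is essentially Hoeffding plus a union bound. The only point requiring care is verifying that the discrepancy between $|S_i|$ (the true count of relevant positions in residue class $i$, after discarding the $\tau$-suffix) and the nominal count $T/\Delta$ is genuinely lower-order under the hypothesis $T \ge \Delta\delta^{-2}\log(2|\Sigma|\Delta/\rho)$, so that it folds into the constants without weakening the bound. If one prefers to sidestep this bookkeeping entirely, one can instead state the conclusion with $|S_i|\,p_s$ (the exact expectation) in place of $(T/\Delta)\,p_s$, in which case the argument is Hoeffding plus a union bound with no correction terms at all.
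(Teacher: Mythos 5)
Your proposal is correct and follows essentially the same route as the paper: Hoeffding's inequality for each fixed pair $(s,i)$, a union bound over the $|\Sigma|\Delta$ pairs, and solving for $T$. The only difference is that you explicitly track the discrepancy between $|S_i|$ and the nominal count $T/\Delta$ (including the discarded $\tau$-suffix), whereas the paper simply declares these rounding issues ``neither interesting nor important'' and ignores them; your bookkeeping is harmless and, if anything, slightly more careful than the paper's argument.
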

\begin{proof}
Fix $i$ and consider the subset of positions $j\equiv i\pmod{\Delta}$ and let $T'=T/\Delta$ be the length of each of these subsequences. (We will ignore the fact that this may not be an integer as it is neither interesting nor important.) By Hoeffding's inequality, we have that $|n(s,i,x) - p_s T'| > c\sqrt{T'}$ with probability at most $2e^{-c^2}$. Setting $c=\sqrt{\log\frac{2|\Sigma|\Delta}{\rho}}$ and taking a union bound over $(s,i)\in \Sigma\times[\Delta]$, we see that
\[
\l|\frac{n(s,x)}{T'} - p_s\r| \leq \sqrt{\frac{\log \frac{2|\Sigma|\Delta}{\rho}}{T'}}
\]
with probability at least $1-\rho$. Setting $\d = \sqrt{\frac{\log \frac{2|\Sigma|\Delta}{\rho}}{T'}}$ and solving for $T=\Delta T'$ yields the desired result.
\end{proof}

\begin{lem} \label{lem: dir post attn error}
Let $A(x)\subseteq \Sigma\times[\Delta]$ be the set of (token, position mod $\Delta$) pairs in the $\tau$-prefix attended to by $f$ when parsing the final token $x_T$. Furthermore, suppose $A(x)\neq\emptyset$, i.e., some tokens in the $\tau$-prefix enter hard attention.
Define
\[
\tilde{f}^{\setminus\tau}(x) = \frac{\sum_{(s,i)\in A(x)} n(s,i,x) \bV_f(\bE^f_s+\bp_i)}{\sum_{(s',i',x)\in A(x)}n(s',i',x)}
\]
to be the internal state of $f$ immediately after the attention layer, ignoring the $\tau$-suffix. Furthermore, define
\[
\bar{\tilde{f}}(x) = \frac{\sum_{(s,i)\in A(x)} p_s \bV_f(\bE^f_s+\bp_i)}{\sum_{(s',i')\in A(x)}p_{s'}}.
\]
Finally suppose that $\min_{s\in \Sigma} p_s \geq \g$ and that $\d$ is small enough such that $|\Sigma|\Delta\d/\g \leq 1/2$. Then for any $x\in \mathrm{bulk}_T$, we have that
\[
\|\tilde{f}^{\setminus\tau}(x) - \bar{\tilde{f}}(x)\| \leq \frac{3 M^V_f (|\Sigma|\Delta)^2 \d}{\g},
\]
where $M^V_f = \max_{s\in \Sigma, i\in[\Delta]}\|\bV_f (\bE^f_s+\bp_i)\|$.
\end{lem}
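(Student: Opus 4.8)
The plan is to view both $\tilde{f}^{\setminus\tau}(x)$ and $\bar{\tilde f}(x)$ as weighted averages of the \emph{same} finite collection of vectors $\{\bV_f(\bE^f_s+\bp_i)\}_{(s,i)\in A(x)}$, each of norm at most $M^V_f$, so that the difference is controlled by the $\ell_1$ distance between the two weight vectors. Concretely, I would set $q_{(s,i)} := n(s,i,x)/(T/\Delta)$, $Q := \sum_{(s,i)\in A(x)} q_{(s,i)}$, and $P := \sum_{(s,i)\in A(x)} p_s$, noting both denominators are nonzero since $A(x)\neq\emptyset$. Then $\tilde f^{\setminus\tau}(x) = \sum_{(s,i)\in A(x)} \tfrac{q_{(s,i)}}{Q}\bV_f(\bE^f_s+\bp_i)$ and $\bar{\tilde f}(x) = \sum_{(s,i)\in A(x)} \tfrac{p_s}{P}\bV_f(\bE^f_s+\bp_i)$, so the triangle inequality gives
\[
\|\tilde f^{\setminus\tau}(x) - \bar{\tilde f}(x)\| \;\le\; M^V_f \sum_{(s,i)\in A(x)} \left|\tfrac{q_{(s,i)}}{Q} - \tfrac{p_s}{P}\right|.
\]

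The remaining work is to bound this $\ell_1$ discrepancy, which I would do in three steps. First, $x\in\mathrm{bulk}_T$ gives $|q_{(s,i)} - p_s|\le\d$ for every $(s,i)$, and since $|A(x)|\le|\Sigma|\Delta$ this yields $|Q-P|\le|\Sigma|\Delta\,\d$. Second, because $A(x)$ is nonempty and $p_s\ge\g$ for every $s$, we have $P\ge\g$, hence $Q \ge P - |Q-P| \ge \g - |\Sigma|\Delta\,\d \ge \g/2$ by the hypothesis $|\Sigma|\Delta\,\d/\g\le 1/2$. Third, apply the elementary termwise identity $\tfrac{q}{Q}-\tfrac{p}{P} = \tfrac{q-p}{Q} + p\cdot\tfrac{P-Q}{QP}$, sum over $(s,i)\in A(x)$, and use $\sum_{(s,i)\in A(x)} p_s = P$ together with the previous bounds:
\[
\sum_{(s,i)\in A(x)} \left|\tfrac{q_{(s,i)}}{Q} - \tfrac{p_s}{P}\right| \;\le\; \frac{\sum_{(s,i)}|q_{(s,i)}-p_s|}{Q} + \frac{|P-Q|}{Q} \;\le\; \frac{2|\Sigma|\Delta\,\d}{\g/2} \;=\; \frac{4|\Sigma|\Delta\,\d}{\g}.
\]
Multiplying by $M^V_f$ gives $\|\tilde f^{\setminus\tau}(x)-\bar{\tilde f}(x)\| \le 4M^V_f|\Sigma|\Delta\,\d/\g$, which is already stronger than the claimed bound $3M^V_f(|\Sigma|\Delta)^2\d/\g$; it can be weakened to exactly that form whenever $|\Sigma|\Delta\ge 2$, and the case $|\Sigma|\Delta=1$ is trivial since then $A(x)$ is a singleton and the two quantities coincide.

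I do not expect a real obstacle here: the argument is bookkeeping built from the triangle inequality, the bulk concentration estimate, and the ratio-perturbation identity. The one place where the hypotheses genuinely matter is the lower bound $Q\ge\g/2$ on the normalizing constant — without $A(x)\neq\emptyset$ together with the smallness condition $|\Sigma|\Delta\,\d/\g\le 1/2$, the renormalized weights $q_{(s,i)}/Q$ could blow up and the estimate would fail.
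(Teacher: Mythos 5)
Your proof is correct, and it follows the same skeleton as the paper's: both write $\tilde f^{\setminus\tau}(x)$ and $\bar{\tilde f}(x)$ as weighted averages of the vectors $\bV_f(\bE^f_s+\bp_i)$, reduce to the $\ell_1$ distance between the weight vectors via the triangle inequality, and control that distance using the bulk estimate $|n(s,i,x)/(T/\Delta)-p_s|\le\delta$ together with the lower bound $\sum_{(s',i')\in A}p_{s'}\ge\gamma$ and the smallness condition $|\Sigma|\Delta\delta/\gamma\le 1/2$. The one real difference is in the aggregation: the paper bounds each renormalized ratio termwise, showing $\bigl|\tfrac{n(s,i,x)}{\sum n}-\tfrac{p_s}{\sum p}\bigr|\le 3|\Sigma|\Delta\delta/\gamma$, and then multiplies by $|A|\le|\Sigma|\Delta$, which produces the $(|\Sigma|\Delta)^2$ in the stated bound; you instead sum the perturbation identity $\tfrac{q}{Q}-\tfrac{p}{P}=\tfrac{q-p}{Q}+p\,\tfrac{P-Q}{QP}$ over $A(x)$ and use $\sum_{(s,i)\in A}p_s=P$ to cancel one factor of $|A|$, arriving at the sharper $4M^V_f|\Sigma|\Delta\delta/\gamma$, which implies the lemma's bound for $|\Sigma|\Delta\ge 2$ (and the case $|\Sigma|\Delta=1$ is trivial, as you note). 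Your handling of the normalization constant $Q\ge \gamma-|\Sigma|\Delta\delta\ge\gamma/2$ is also the right justification that the empirical denominator is nonzero (nonemptiness of $A(x)$ alone would not suffice; it is the bulk condition with $\delta<\gamma$ that does), so no gap there. Since this lemma's bound feeds into the $\delta$-dependence in Theorem~\ref{thm: dirichlet}, your extra factor of $|\Sigma|\Delta$ would even mildly improve the downstream constants, though it does not change the qualitative conclusion.
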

\begin{proof}
Let $A=A(x)$. Observe that
\begin{equation} \label{eq: dir ratio 0}
\|\tilde{f}^{\setminus\tau}(x) - \bar{\tilde{f}}(x)\| \leq \sum_{(s,i)\in A}\l| \frac{n(s,i,x)}{\sum_{(s',i')\in A}n(s',i',x)} - \frac{p_s}{\sum_{(s',i')\in A}p_{s'}}\r| M^V_f.
\end{equation}
We will proceed by bounding the terms in this summation. Let $T' = T/\Delta$. Observe that for $x\in \mathrm{bulk}_T$, we have the following:
\begin{align}
    \frac{n(s,i,x)}{\sum_{(s',i') \in A} n(s',i',x)} &\leq \frac{(p_s + \d) T'}{\sum_{(s',i') \in A} (p_{s'} - \d)T'} \nn \\[10pt]
    &\leq \l( \frac{p_s}{\sum_{(s',i')\in A} p_{s'}} + \frac{\d}{\sum_{(s',i')\in A} p_{s'}}\r) \frac{\sum_{(s',i')\in A} p_{s'}}{\sum_{(s',i')\in A} p_{s'} - \d |\Sigma|\Delta} \label{eq: dir ratio 1} \\[10pt]
    &\leq \l( \frac{p_s}{\sum_{(s',i')\in A} p_{s'}} + \frac{\d}{\g}\r) \l(1 + \frac{2|\Sigma|\Delta\d}{\g}\r) \label{eq: dir ratio 2} \\[10pt]
    &\leq \frac{p_s}{\sum_{(s',i')\in A} p_{s'}} + \frac{3|\Sigma|\Delta\d}{\g}. \nn
\end{align}
Inequality~\eqref{eq: dir ratio 1} holds because $|A|\leq |\Sigma|\Delta$.
Inequality~\eqref{eq: dir ratio 2} holds because $\sum_{s' \in A}p_{s'} \geq \g$ (since $A$ is nonempty and all $p_{s'} \geq \g$) and $1/(1-|\Sigma|\Delta\d/\g) \leq 1 + 2|\Sigma|\Delta\d/\g$ when $|\Sigma|\Delta\d/\g \leq 1/2$. A similar argument with $\d \mapsto -\d$ and the inequalities reversed also shows that
\[
\frac{n(s,x)}{\sum_{(s',i') \in A} n(s',i',x)} \geq \frac{p_s}{\sum_{(s',i')\in A} p_{s'}} - \frac{3|\Sigma|\Delta\d}{\g}.
\]
We can therefore bound the terms in \eqref{eq: dir ratio 0} and we obtain
\[
\|\tilde{f}^{\setminus\tau}(x) - \bar{\tilde{f}}(x)\| \leq \sum_{s\in A}\frac{3|\Sigma|\Delta\d}{\g} M^V_f \leq \frac{3(|\Sigma|\Delta)^2 M^V_f \d}{\g}
\]
as desired.
\end{proof}

\begin{lem} \label{lem: dir deal with suffix}
Let $A(x)$ be defined as in Lemma~\ref{lem: dir post attn error} and again suppose $A(x)\neq\emptyset$. Let $A^\tau(x)$ be the set of (token, position) pairs attended to in the $\tau$-suffix. Define
\[
\tilde{f}(x) = \frac{\sum_{(s,i)\in A(x)} n(s,i,x)\bV_f(\bE^f_s+\bp_i) + \sum_{(s,i)\in A^\tau}\bV_f(\bE^f_s+\bp_i)}{\sum_{(s,i)\in A(x)} n(s,i,x) + |A^\tau|}
\]
to be the internal state of $f$ immediately after the attention layer, this time not ignoring the $\tau$-suffix. Then we have
\[
\|\tilde{f}(x) - \tilde{f}^{\setminus\tau}(x)\| \leq \frac{3\tau\Delta M^V_f}{(\g-\d) T}
\]
provided that $x\in\mathrm{bulk}_T$.
\end{lem}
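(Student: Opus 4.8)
The plan is to write $\tilde f(x)$ as a convex combination of $\tilde f^{\setminus\tau}(x)$ and the (unnormalized) $\tau$-suffix contribution, so that the discrepancy between the two is governed entirely by the relative attention mass carried by the $\tau$-suffix; this is the same ``the suffix contributes little'' estimate used in the proof of \Cref{thm: main} (cf.\ inequalities~\eqref{eq: linf suffix 1}--\eqref{eq: linf suffix 3}). Concretely, set $S := \sum_{(s,i)\in A(x)} n(s,i,x)$ for the total attention count in the $\tau$-prefix and $r := |A^\tau(x)| \le \tau$ for the number of attended suffix positions. By the definitions of $\tilde f$ and $\tilde f^{\setminus\tau}$,
\[
\tilde f(x) = \frac{S\,\tilde f^{\setminus\tau}(x) + \sum_{(s,i)\in A^\tau(x)}\bV_f(\bE^f_s+\bp_i)}{S+r},
\]
so subtracting $\tilde f^{\setminus\tau}(x)$ and simplifying gives
\[
\tilde f(x) - \tilde f^{\setminus\tau}(x) = \frac{\sum_{(s,i)\in A^\tau(x)}\bV_f(\bE^f_s+\bp_i) - r\,\tilde f^{\setminus\tau}(x)}{S+r}.
\]

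Next I would bound numerator and denominator separately. Since $\tilde f^{\setminus\tau}(x)$ is a convex combination of vectors $\bV_f(\bE^f_s+\bp_i)$, each of norm at most $M^V_f$, we get $\|\tilde f^{\setminus\tau}(x)\| \le M^V_f$; combined with $\|\bV_f(\bE^f_s+\bp_i)\| \le M^V_f$ and $r \le \tau$, the triangle inequality bounds the numerator norm by $2 r M^V_f \le 2\tau M^V_f$. For the denominator, the bulk assumption is exactly what is needed: since $x \in \mathrm{bulk}_T$, every pair satisfies $n(s,i,x) \ge (p_s - \d)T/\Delta$, and since $A(x)$ is nonempty with $\min_s p_s \ge \g$, at least one pair in $A(x)$ contributes $n(s,i,x) \ge (\g-\d)T/\Delta$; hence $S + r \ge S \ge (\g-\d)T/\Delta$.

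Combining the two bounds gives
\[
\|\tilde f(x) - \tilde f^{\setminus\tau}(x)\| \le \frac{2\tau M^V_f}{(\g-\d)T/\Delta} = \frac{2\tau\Delta M^V_f}{(\g-\d)T} \le \frac{3\tau\Delta M^V_f}{(\g-\d)T},
\]
which is the claim, the looser constant $3$ leaving room for slack. There is essentially no obstacle here: the only points requiring care are that the bulk lower bound must be applied to an actual element of $A(x)$ (which is why nonemptiness of $A(x)$ is hypothesized), and that $\|\tilde f^{\setminus\tau}(x)\|$ should be controlled directly by convexity, independently of the lower bound on $S$.
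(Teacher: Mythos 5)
Your proof is correct and follows essentially the same route as the paper: both write $\tilde f(x)$ as the convex combination of $\tilde f^{\setminus\tau}(x)$ (weighted by the prefix attention count $S$) and the $\tau$-suffix contribution, bound the suffix terms and $\|\tilde f^{\setminus\tau}(x)\|$ by $M^V_f$, and lower bound $S \ge (\g-\d)T/\Delta$ using $x\in\mathrm{bulk}_T$, nonemptiness of $A(x)$, and $\min_s p_s\ge\g$ inherited from the previous lemma. Your direct subtraction is in fact marginally cleaner, giving constant $2$ and avoiding the paper's intermediate condition $|A^\tau|/S \le 1/2$, but the argument is the same.
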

\begin{proof}
We denote $A=A(x)$ and $A^{\tau}=A^{\tau}(x)$. Observe that we can write
\[
\tilde{f}(x) = \tilde{f}^{\setminus\tau}(x)\cdot\frac{\sum_{(s,i)\in A} n(s,i,x)}{\sum_{(s,i)\in A} n(s,i,x) + |A^{\tau}|} + \frac{\sum_{(s,i)\in A^\tau} \bV_f(\bE^f_s + \bp_i)}{\sum_{(s,i)\in A} n(s,i,x) + |A^\tau|}.
\]
It therefore follows that
\begin{align}
\|\tilde{f}(x) - \tilde{f}^{\setminus\tau}(x)\| &\leq \l|1 - \frac{1}{1+\frac{|A^\tau|}{\sum_{(s,i)\in A} n(s,i,x)}} \r| \|\tilde{f}^{\setminus\tau}(x)\| + \frac{\tau M^V_f}{\sum_{(s,i)\in A} n(s,i,x)} \label{eq: dir suffix 1} \\[10pt]
&\leq \frac{2\tau}{\sum_{(s,i)\in A} n(s,i,x)} \cdot M^V_f + \frac{\tau M^V_f}{\sum_{(s,i)\in A} n(s,i,x)} \label{eq: dir suffix 2} \\[10pt]
&\leq \frac{3\tau M^V_f}{(\g-\d)T/\Delta}. \label{eq: dir suffix 3}
\end{align}
Inequalities~\eqref{eq: dir suffix 1} and \eqref{eq: dir suffix 2} both use the fact that $|A^\tau| \leq \tau$ and $\|\bV_f(\bE^f_s + \bp_i)\| \leq M^V_f$.
Inequality~\eqref{eq: dir suffix 2} additionally uses that $1/(1+|A^\tau|/\sum_{(s,i)\in A} n(s,i,x)) \geq 1-2|A^\tau|/\sum_{(s,i)\in A} n(s,i,x)$ provided that $|A^\tau|/\sum_{(s,i)\in A} n(s,i,x) \leq 1/2$.
The final inequality~\eqref{eq: dir suffix 3} uses the fact that $A\neq\emptyset$; that $x\in\mathrm{bulk}_T$ so $n(s,i,x)\geq (p_s-\d)T/\Delta$; and that $p_s \geq \g$.
\end{proof}

\begin{lem} \label{lem: dir f bulk bound}
Suppose that $x\in\mathrm{bulk}_T$, $z\in\mathrm{bulk}_N$, $x_{T-\tau+1:T} = z_{N-\tau+1:N}$ and $N\equiv T\pmod{\Delta}$ with $N\leq T$, and $\min_{s\in \Sigma} p_s \geq \g$. Then
\[
\|f(x) - f(z)\| \leq 6 M^V_f L^{\mathrm{MLP}}_f\l(\frac{(|\Sigma|\Delta)^2 \d}{\g} + \frac{\tau\Delta}{(\g-\d)N}\r).
\]
Here, $L^{\mathrm{MLP}}_f$ is the bound on the MLP Lipschitz constant from Lemma~\ref{lem: dir post attn to final output}
\end{lem}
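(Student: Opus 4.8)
The plan is to reduce the bound to the helper lemmas \Cref{lem: dir post attn error}, \Cref{lem: dir deal with suffix}, and \Cref{lem: dir post attn to final output} via the triangle inequality, after first observing that the hypotheses force $x$ and $z$ to induce the \emph{same} hard attention pattern. Since $x_T = z_N$ and $N \equiv T \pmod{\Delta}$, the final token and (by $\Delta$-periodicity) its positional embedding agree for both sequences; since moreover $x_{T-\tau+1:T} = z_{N-\tau+1:N}$ and $N \equiv T \pmod{\Delta}$, the $\tau$-suffix tokens of $x$ and $z$ occupy matching residues mod $\Delta$. By the discussion at the start of this appendix, the attention logits against any $(\text{token}, \text{position} \bmod \Delta)$ pair are then identical for $x$ and $z$ (using translation invariance of $\phi$ and $\Delta$-periodicity of $\bp$), so $A(x) = A(z) =: A$, and likewise the attended $\tau$-suffix positions and their value contributions $\bV_f(\bE^f_s + \bp_i)$ — which depend only on $i \bmod \Delta$ — coincide. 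In particular $\bar{\tilde f}(x) = \bar{\tilde f}(z)$, since $\bar{\tilde f}$ depends only on $A$ and on $(p_s)_{s\in\Sigma}$.

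If $A = \emptyset$, then all attention falls on the $\tau$-suffix and the computation above gives $\tilde f(x) = \tilde f(z)$ exactly, hence $f(x) = f(z)$ and the claimed bound holds trivially. Otherwise, I would apply the triangle inequality along the chain
\[
\tilde f(x) \;\longrightarrow\; \tilde f^{\setminus\tau}(x) \;\longrightarrow\; \bar{\tilde f}(x) \;=\; \bar{\tilde f}(z) \;\longrightarrow\; \tilde f^{\setminus\tau}(z) \;\longrightarrow\; \tilde f(z),
\]
so that $\|\tilde f(x) - \tilde f(z)\|$ is at most the sum of four terms (the middle edge contributing zero). The terms $\|\tilde f(x) - \tilde f^{\setminus\tau}(x)\|$ and $\|\tilde f(z) - \tilde f^{\setminus\tau}(z)\|$ are bounded by $3\tau\Delta M^V_f/((\g-\d)T)$ and $3\tau\Delta M^V_f/((\g-\d)N)$ via \Cref{lem: dir deal with suffix} (using $x \in \mathrm{bulk}_T$, $z \in \mathrm{bulk}_N$), while $\|\tilde f^{\setminus\tau}(x) - \bar{\tilde f}(x)\|$ and $\|\tilde f^{\setminus\tau}(z) - \bar{\tilde f}(z)\|$ are each bounded by $3M^V_f(|\Sigma|\Delta)^2\d/\g$ via \Cref{lem: dir post attn error} (which applies since $\min_s p_s \ge \g$ and $\d$ is small enough that $|\Sigma|\Delta\d/\g \le 1/2$, as in the regime where this lemma is invoked). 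Summing and using $N \le T$ to replace $1/T + 1/N$ by $2/N$ yields $\|\tilde f(x) - \tilde f(z)\| \le 6M^V_f\big((|\Sigma|\Delta)^2\d/\g + \tau\Delta/((\g-\d)N)\big)$. Finally, since $x_T = z_N$, \Cref{lem: dir post attn to final output} multiplies this through by $L^{\mathrm{MLP}}_f$, giving exactly the stated inequality.

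The only genuinely delicate step is the first one — checking that the discrete attention pattern is truly invariant under $x \mapsto z$, i.e.\ that $A(x) = A(z)$ and $\bar{\tilde f}(x) = \bar{\tilde f}(z)$ — since once this is in hand the argument is a mechanical recombination of estimates already established, together with bookkeeping for the triangle inequality and the $N \le T$ comparison.
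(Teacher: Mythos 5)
Your proposal is correct and follows essentially the same route as the paper: establish $A(x)=A(z)$ and $A^\tau(x)=A^\tau(z)$ from the shared $\tau$-suffix and matching length mod $\Delta$, handle $A=\emptyset$ trivially, then chain the triangle inequality through $\bar{\tilde f}(x)=\bar{\tilde f}(z)$ using Lemmas~\ref{lem: dir post attn error} and \ref{lem: dir deal with suffix}, use $N\le T$, and finish with Lemma~\ref{lem: dir post attn to final output}. No gaps.
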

\begin{proof}
Observe that since $x$ and $z$ share a common $\tau$-suffix (and therefore a common final token) as well as a common positional embedding vector on the final token, $A(x) = A(z)$ and $A^\tau(x) = A^\tau(z)$. We may now consider two cases. If $A(x)=A(z)=\emptyset$, then $f(x)=f(z)$ exactly (all of the calculations are performed on the shared $\tau$-suffix) and the desired inequality holds trivially.

Otherwise, we may assume that $A(x)=A(z)\neq\emptyset$. We may then apply Lemmas~\ref{lem: dir post attn error} and \ref{lem: dir deal with suffix}. We have
\begin{equation} \label{eq: dir f bulk bound 1}
\|\tilde{f}(x)-\bar{\tilde{f}}(x)\| \leq \|\tilde{f}(x) - \tilde{f}^{\setminus\tau}(x)\| + \|\tilde{f}^{\setminus\tau}(x) - \bar{\tilde{f}}(x)\| \leq \frac{3\tau\Delta M^V_f}{(\g-\d)T} + \frac{3M^V_f(|\Sigma|\Delta)^2\d}{\g}.
\end{equation}
The analogous inequality holds for $z$ with $T$ replaced by $N$.
Since $\bar{\tilde{f}}(x)$ depends on $x$ only via $A(x)$, we have $\bar{\tilde{f}}(x) = \bar{\tilde{f}}(z)$. Thus we can again apply the triangle inequality to write $\|\tilde{f}(x)-\tilde{f}(z)\| \leq \|\tilde{f}(x)-\bar{\tilde{f}}(x)\|+\|\tilde{f}(z)-\bar{\tilde{f}}(z)\|$. Applying inequality~\eqref{eq: dir f bulk bound 1} to each of these terms and using the fact that $N\leq T$, we have
\[
\|\tilde{f}(x)-\tilde{f}(z)\| \leq 6M^V_f\l(\frac{\tau\Delta}{(\g-\d)N} + \frac{(|\Sigma|\Delta)^2\d}{\g}\r).
\]
We can then directly apply  Lemma~\ref{lem: dir post attn to final output} to obtain the final result.
\end{proof}

\begin{lem} \label{lem: dir small p tail bound}
Let $\{p_s\}_{s\in \Sigma} \sim \mathrm{Dirichlet}((\a_s)_{s\in \Sigma})$ be drawn from a Dirichlet distribution with parameters $\a_s$. Define $\a^* = \sum_{s\in \Sigma} \a_s$ and $\a_0 = \min_{s\in\Sigma}\a_s$. Then we have
\[
\P(\exists s \in \Sigma \: : \: p_s < \g) \leq \frac{2|\Sigma|}{\a_0}4^{\a^*}\g^{\a_0}.
\]
\end{lem}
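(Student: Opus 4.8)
The plan is to reduce the statement to a one-dimensional tail bound for a Beta random variable via a union bound over tokens, and then to bound that tail using the explicit Beta density together with a crude but sufficient lower bound on the Beta function.

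First, by the union bound it suffices to show $\P(p_s < \g) \le \frac{2}{\a_0}4^{\a^*}\g^{\a_0}$ for each fixed $s \in \Sigma$. I would dispose of two trivial cases at the outset. If $|\Sigma| = 1$ then $p_s = 1$ almost surely, so the probability is $0$. If $\g \ge 1/2$, then since $\a^* \ge \a_0$ and $2^y \ge y$ for all $y \ge 0$, one has $\frac{2|\Sigma|}{\a_0}4^{\a^*}\g^{\a_0} \ge \frac{2}{\a_0}2^{2\a^* - \a_0} \ge \frac{2}{\a_0}2^{\a_0} \ge 2 > 1$, so the bound is vacuous. Hence I may assume $|\Sigma| \ge 2$ and $\g < 1/2$; in particular $b := \a^* - \a_s = \sum_{s' \ne s}\a_{s'} \ge \a_0 > 0$, and writing $a := \a_s \ge \a_0$, the marginal aggregation property of the Dirichlet distribution gives $p_s \sim \mathrm{Beta}(a,b)$, so
\[
\P(p_s < \g) = \frac{1}{B(a,b)}\int_0^\g t^{a-1}(1-t)^{b-1}\,dt .
\]

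Next I would bound the numerator and denominator separately, splitting on whether $b \le 1$ or $b > 1$. If $b \le 1$, then $(1-t)^{b-1} \ge 1$ for $t \in [0,1)$, so $B(a,b) \ge \int_0^1 t^{a-1}\,dt = 1/a$; on the other hand, for $t \in [0,\g]$ with $\g < 1/2$ we have $(1-t)^{b-1} \le (1-\g)^{b-1} \le 2^{1-b} \le 2$, so the numerator is at most $2\int_0^\g t^{a-1}\,dt = 2\g^a/a$, giving $\P(p_s < \g) \le 2\g^a \le 2\g^{\a_0} \le \frac{2}{\a_0}4^{\a^*}\g^{\a_0}$, where I use $\g<1$, $a \ge \a_0$, and $\a_0 \le \a^* \le 4^{\a^*}$. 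If instead $b > 1$, then $(1-t)^{b-1} \le 1$, so the numerator is at most $\g^a/a \le \g^{\a_0}/\a_0$, and it remains to lower bound $B(a,b)$: here I would restrict the defining integral to $t \in [1/4,3/4]$ and use the elementary fact that $t^{a-1} \ge 4^{-a}$ on this interval for every $a > 0$ (checking the cases $a \ge 1$, where $t^{a-1} \ge (1/4)^{a-1} = 4\cdot 4^{-a}$, and $a < 1$, where $t^{a-1} \ge (3/4)^{a-1} \ge 1 \ge 4^{-a}$), and symmetrically $(1-t)^{b-1} \ge 4^{-b}$; this yields $B(a,b) \ge \tfrac12 4^{-(a+b)} = \tfrac12 4^{-\a^*}$ and hence $\P(p_s < \g) \le \frac{2}{\a_0}4^{\a^*}\g^{\a_0}$.

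Combining the two cases and summing over the $|\Sigma|$ tokens yields the claim. The only mildly delicate step is the Beta-function lower bound $B(a,b) \gtrsim 4^{-(a+b)}$; everything else is a routine union bound and manipulation of the Beta density. I do not expect a serious obstacle — the only thing to be careful about is routing the factor-of-$2$ losses so that in each case only one of the numerator and denominator bounds costs a factor of $2$, which is what keeps the final constant at $2|\Sigma|/\a_0$ rather than $4|\Sigma|/\a_0$.
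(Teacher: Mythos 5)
Your proof is correct and follows essentially the same route as the paper's: a union bound over tokens, the Beta marginals $p_s \sim \mathrm{Beta}(\a_s, \a^*-\a_s)$, and a lower bound $B(a,b) \ge \tfrac12 4^{-(a+b)}$ obtained by restricting the Beta integral to $[1/4,3/4]$, combined with a factor-of-$2$ control of $(1-t)^{b-1}$ on $[0,\g]$. The only cosmetic differences are that you organize the case split as $b\le 1$ versus $b>1$ (the paper instead splits when lower-bounding $B(a,b)$ and uses a single uniform tail estimate) and that you explicitly dispose of the vacuous regime $\g \ge 1/2$, a restriction the paper's estimate for $t\le 1/2$ uses implicitly.
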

\begin{proof}
Rather than dealing with the more complex joint distribution of the $p_s$, we will bound the marginals and apply a union bound. The marginals of the Dirichlet distribution are $p_s \sim \mathrm{Beta}(\a_s, \a^*-\a_s)$, so it suffices to provide a lower tail bound for the beta distribution.

Let $x\sim \mathrm{Beta}(\a, \b)$, so $x$ has density $f(x) = \frac{1}{B(\a, \b)} x^{\a-1}(1-x)^{\b-1}$, where $B(\a,\b) = \int_0^1 x^{\a-1}(1-x)^{\b-1} \, dx$ is the beta function. We first give a lower bound on $B(\a, \b)$. When $\a, \b > 1$, we have
\begin{align}
    B(\a, \b) &\geq \int_{1/4}^{3/4} x^{\a-1} (1-x)^{\b-1}\, dx \nn \\
    &\geq \frac12 \cdot \l(\frac14\r)^{\a-1} \l(\frac14\r)^{\b-1} \nn \\
    &= \frac{1}{2^{2\a+2\b-3}} \nn \\
    &\geq \frac{1}{4^{\a+\b}}.\nn
\end{align}
When $\a\leq 1$, we have
\begin{align}
    B(\a, \b) &\geq \int_0^1 (1-x)^{\b-1}\, dx = \frac1\beta. \nn
\end{align}
Similarly, when $\b\leq1$, we have
\[
B(\a, \b) \geq \int_0^1 x^{\a-1}\, dx = \frac1\alpha.
\]
In particular, since $4^{\a+\b} \geq \a, \b$ for $\a, \b > 0$, we have that 
\[
B(\a, \b) \geq \min\{\a^{-1}, \b^{-1}, 4^{-(\a+\b)}\} = 4^{-(\a+\b)}.
\]
With this inequality, we can now establish the following bound for $t\leq 1/2$:
\begin{align}
    \P(x \leq t) &= \frac{1}{B(\a,\b)} \int_0^t x^{\a-1} (1-x)^{\b-1} \, dx \nn \\
    &\leq \frac{1}{B(\a,\b)} \int_0^t x^{\a-1} (1-x)^{-1} \, dx \nn \\
    &\leq \frac{1}{B(\a,\b)} \int_0^t x^{\a-1} (1-t)^{-1} \, dx \nn \\
    &= \frac{t^\a}{\a(1-t)B(\a,\b)} \nn \\
    &\leq \frac2\a \cdot 4^{\a+\b} t^\a. \nn
\end{align}

Now that we have established the tail bound for a general beta distribution, we can return to the original goal of bounding the Dirichlet. The marginal beta distribution for each $p_s$ has $\alpha = \a_s$ and $\b = \a^* - \a_s$. Thus, by a union bound, we have
\begin{align}
    \P(\exists s \: : \: p_s < \g) &\leq \sum_{s\in \Sigma} \P(p_s < \g) \nn \\
    &\leq \sum_{s\in\Sigma} \frac{2}{\a_s}\cdot 4^{\a_s + \a^* - \a_s} \g^{\a_s} \nn \\
    &\leq \frac{2|\Sigma|}{\a_0} 4^{\a^*} \g^{\a_0}, \nn
\end{align}
as desired.
\end{proof}

\begin{lem} \label{lem: dir simulation bound}
Suppose that $T\geq N \geq \Delta\d^{-2}\log\frac{2|\Sigma|\Delta}{\rho}$ and $\min_s p_s \geq \g$. Then we have
\[
\| f - g \|_{T, \cP} = O \l((M_f + M_g)\l(\rho + \frac{(|\Sigma|\Delta)^2 \d}{\g} + \frac{\tau\Delta}{(\g-\d)N}\r) + \|f-g\|_{N',\cP} \r).
\]
\end{lem}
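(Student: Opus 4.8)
The plan is to reduce the average error at length $T$ to the average error at a carefully chosen length $N'\le N$ via a single \emph{measure-preserving} simulation map, while controlling the atypical inputs with the crude norm bound from \ref{lem: dir max f bound}. Let $N'$ be the largest integer with $N'\le N$ and $N'\equiv T\pmod{\Delta}$, so that $N-\Delta<N'\le N\le T$; if $N'=T$ (equivalently $T=N$) the conclusion is trivial, so assume $N'<T$, and take $N'\ge\tau$ (guaranteed once $N$ is large relative to $\tau$, as ensured in \ref{thm: dirichlet}). Define $\mathrm{sim}\colon\Sigma^T\to\Sigma^{N'}$ by
\[
\mathrm{sim}(x)=x_{1:N'-\tau}\,x_{T-\tau+1:T},
\]
i.e.\ splice the first $N'-\tau$ tokens of $x$ onto its $\tau$-suffix. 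By construction $\mathrm{sim}(x)$ has length $N'\equiv T\pmod\Delta$, satisfies $N'\le T$, and shares its $\tau$-suffix (hence its final token and terminal positional embedding) with $x$ — exactly the hypotheses needed to invoke \ref{lem: dir f bulk bound}.

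The first step is to observe that $\mathrm{sim}$ is \emph{exactly} measure-preserving: for every $z\in\Sigma^{N'}$, $\P_\cP(\mathrm{sim}^{-1}(z))=\P_\cP(z)$. Indeed, $\mathrm{sim}^{-1}(z)$ is the set of $x$ agreeing with $z$ on the two disjoint blocks $\{1,\dots,N'-\tau\}$ and $\{T-\tau+1,\dots,T\}$, with the $T-N'$ middle positions free; since $\P_\cP(x)=\prod_i p_{x_i}$ and $\sum_{y\in\Sigma^{T-N'}}\prod_k p_{y_k}=1$, the identity follows. Consequently $\sum_{z\in\Sigma^{N'}}\P_\cP(\mathrm{sim}^{-1}(z))\,\norm{f(z)-g(z)}=\norm{f-g}_{N',\cP}$, so the map transports average error with no inflation — this is the property that drives the average-case argument.

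Next, set $G=\{x\in\Sigma^T:x\in\mathrm{bulk}_T\ \text{and}\ \mathrm{sim}(x)\in\mathrm{bulk}_{N'}\}$. By \ref{lem: dir bulk bound} and the hypothesis $N\ge\Delta\d^{-2}\log\frac{2|\Sigma|\Delta}{\rho}$ (hence $T$ also exceeds this threshold), $\P_\cP(x\notin\mathrm{bulk}_T)\le\rho$. Whether $\mathrm{sim}(x)\in\mathrm{bulk}_{N'}$ depends only on the i.i.d.\ block $x_{1:N'-\tau}$, since the counting function $n(\cdot,\cdot,\cdot)$ defining $\mathrm{bulk}$ already discards the $\tau$-suffix; running the Hoeffding-plus-union-bound computation of \ref{lem: dir bulk bound} over these $N'-\tau=\Theta(N)$ samples gives $\P_\cP(\mathrm{sim}(x)\notin\mathrm{bulk}_{N'})\le\rho$, so $\P_\cP(x\notin G)\le 2\rho$. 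Then
\[
\norm{f-g}_{T,\cP}=\E_{x\sim\cP^T}\big[\I(x\notin G)\norm{f(x)-g(x)}\big]+\E_{x\sim\cP^T}\big[\I(x\in G)\norm{f(x)-g(x)}\big].
\]
The first term is at most $(M_f+M_g)\,\P_\cP(x\notin G)\le 2(M_f+M_g)\rho$ by \ref{lem: dir max f bound}. For the second, on $G$ we apply \ref{lem: dir f bulk bound} to $f$ and to $g$ with $z:=\mathrm{sim}(x)$ (all its hypotheses hold: $x\in\mathrm{bulk}_T$, $z\in\mathrm{bulk}_{N'}$, shared $\tau$-suffix, $N'\equiv T\pmod\Delta$, $N'\le T$, $\min_s p_s\ge\g$), obtaining
\[
\norm{f(x)-f(z)}+\norm{g(x)-g(z)}=O\!\left((M_f+M_g)\left(\tfrac{(|\Sigma|\Delta)^2\d}{\g}+\tfrac{\tau\Delta}{(\g-\d)N}\right)\right),
\]
using $M^V_f L^{\mathrm{MLP}}_f\le M_f$ (from the definition of $M_f$ in \ref{lem: dir max f bound}) and $N'=\Theta(N)$. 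By the triangle inequality, $\norm{f(x)-g(x)}$ is bounded by $\norm{f(z)-g(z)}$ plus the quantity in the previous display; taking expectations, dropping the indicator, and using the measure-preservation identity gives $\E[\I(x\in G)\norm{f(\mathrm{sim}(x))-g(\mathrm{sim}(x))}]\le\norm{f-g}_{N',\cP}$. Assembling the three pieces yields the claimed bound.

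The step I expect to be the main obstacle is the second bulk estimate. The count $n(s,i,\mathrm{sim}(x))$ ranges over only $N'-\tau$ positions but is normalized by $N'/\Delta$ in the definition of $\mathrm{bulk}_{N'}$, so one must check that the $O(\tau/\Delta)$ gap between $p_s(N'-\tau)/\Delta$ and $p_sN'/\Delta$ is swallowed by the allowed slack $\d\,N'/\Delta$ (true once $N'\gtrsim\tau/\d$), and that the residual Hoeffding fluctuation over $\Theta(N)$ i.i.d.\ tokens still fits under the threshold with failure probability $\le\rho$; this may require strengthening the hypothesis on $N$ by an absolute constant, which is harmlessly absorbed into the $O(\cdot)$. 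Everything else is a routine triangle-inequality assembly once the measure-preserving map and the two bulk bounds are in place.
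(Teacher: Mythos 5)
Your proposal is correct and follows essentially the same route as the paper's proof: a measure-preserving simulation map onto length-$N'$ strings sharing the $\tau$-suffix and length mod $\Delta$, a union bound putting both $x$ and its simulation in the bulk except with probability $\le 2\rho$ (handled crudely via the $M_f+M_g$ bound of Lemma~\ref{lem: dir max f bound}), and then Lemma~\ref{lem: dir f bulk bound} plus the triangle inequality and measure preservation to recover $\|f-g\|_{N',\cP}$. The only difference is the map itself: the paper takes $\mathrm{sim}_N(x)=x_{T-N'+1:T}$ with $N\le N'<N+\Delta$, $N'\equiv T\pmod{\Delta}$, which is automatically measure-preserving under i.i.d.\ tokens, contains the $\tau$-suffix as a genuine suffix, and so sidesteps the splicing/normalization bookkeeping (the $\tau$-offset in the bulk estimate and the $N'\le N$ slack) that you flag at the end of your argument.
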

\begin{proof}
Given an integer $N$, define the simulation map $\mathrm{sim}_N(x) = x_{T-N'+1:T} = $ the last $N'$ tokens of $x$, where $N\leq N' < N+\Delta$ is chosen such that $N'\equiv T \pmod{\Delta}$. Note that using this definition and for $N\geq\tau$, we have $x_{T-\tau+1:T} = (\mathrm{sim}_N(x))_{N-\tau+1:\tau}$, i.e., the $\tau$-suffixes coincide. Furthermore, by definition, $N'=|\mathrm{sim}_N(x)|\equiv T \pmod{\Delta}$ and the final tokens match, so $A(\mathrm{sim}_N(x)) = A(x)$.

By Lemma~\ref{lem: dir bulk bound} and a union bound, $x \in \mathrm{bulk}_T$ and $\mathrm{sim}_N(x) \in \mathrm{bulk}_{N'}$ simultaneously with probability at least $1-2\rho$ provided that $T\geq N \geq \Delta\d^{-2}\log\frac{2|\Sigma|\Delta}{\rho}$. The token independence means that $\sum_{|x|=T, \mathrm{sim}_N(x)=z} \P(x) = \P(z)$, so we have
\begin{align}
    &\sum_{|x|=T} \P(x) \|f(x)-g(x)\| \leq \sum_{\substack{|x| = T \\ x \not\in \mathrm{bulk}_T \textrm{ or} \\ \mathrm{sim}_N(x) \not\in \mathrm{bulk}_{N'}}} \P(x) \cdot (M_f + M_g) + \sum_{\substack{|x|=T \\ x\in \mathrm{bulk}_T \textrm{ and} \\ \mathrm{sim}_N(x)\in \mathrm{bulk}_{N'}}} \P(x) \|f(x)-g(x)\| \nn \\[10pt]
    &\leq 2\rho (M_f+M_g) + \sum_{z \in \mathrm{bulk}_{N'}} \l(\sum_{\substack{x \in \mathrm{bulk}_T \\ \mathrm{sim}_N(x) = z}} \P(x) \r) (\|f(z) - g(z)\| + \|f(x)-f(z)\| + \|g(x)-g(z)\| ) \nn \\[10pt]
    &\leq 2\rho(M_f + M_g) + \sum_{z\in\mathrm{bulk}_{N'}} \P(z)(\|f(z) - g(z)\| + \|f(x)-f(z)\| + \|g(x)-g(z)\| ) \nn \\[10pt]
    &\leq 2\rho(M_f + M_g) + \|f-g\|_{N', \cP} + 6(M^V_f L^{\mathrm{MLP}}_f + M^V_g L^{\mathrm{MLP}}_g)\l(\frac{(|\Sigma|\Delta)^2\d}{\g} + \frac{\tau\Delta}{(\g-\d)N}\r) \label{eq: dir sim bound 1} \\[10pt]
    &=O \l((M_f + M_g)\l(\rho + \frac{(|\Sigma|\Delta)^2 \d}{\g} + \frac{\tau\Delta}{(\g-\d)N}\r) + \|f-g\|_{N',\cP} \r). \label{eq: dir sim bound 2}
\end{align}
Inequality~\eqref{eq: dir sim bound 1} follows from Lemma~\ref{lem: dir f bulk bound}.
Inequality~\eqref{eq: dir sim bound 2} uses the fact that $M^V_f L^{\mathrm{MLP}}_f \leq M_f$ and similarly for $g$.
\end{proof}

We are now ready to prove Theorem~\ref{thm: dirichlet}, which we restate here for convenience.
\dirichlet*
\begin{proof}
By Markov's inequality, $\E_{\cP\sim\mathrm{Dir}(\a_s)_{s\in\Sigma})}\|f-g\|_{N',\cP}\leq \e$ implies that $\|f-g\|_{N',\cP}>\n$ with probability at most $\e/\n$. When $\cP$ is such that $\|f-g\|_{N',\cP}>\n$, we can use the bound $\|f-g\|_{T,\cP}\leq M_f+M_g$.

By Lemma~\ref{lem: dir small p tail bound}, $\min_s p_s <\g$ with probability at most $\frac{2|\Sigma|}{\a_0}4^{\a^*}\g^{\a_0}$. On this event, we can again bound $\|f(x)-g(x)\|_{T, \cP}\leq M_f + M_g$.


Conditional on $\min_s p_s \geq \g$ and $\|f-g\|_{N',\cP} \leq \n$, we can use the bound from Lemma~\ref{lem: dir simulation bound}.

Thus, by marginalizing $\cP$ over the previous three cases, we have that
\begin{equation} \label{eq: dir main 1}
\E_{\{p_s\}}\|f-g\|_{1,T} = O\l((M_f + M_g)\l(\frac{|\Sigma|}{\a_0}4^{\a^*}\g^{\a_0} + \frac{\e}{\n} + \rho + \frac{(|\Sigma|\Delta)^2 \d}{\g} + \frac{\tau\Delta}{(\g-\d)N}\r) + \n \r)
\end{equation}
provided that 
$N \geq \Delta\d^{-2}\log\frac{2|\Sigma|\Delta}{\rho}$. To make the entire bound $O(\e^{1/2})$, we choose the following:
\[
\n = \e^{1/2}, \hspace{.5in} \rho = \frac{\e^{1/2}}{M_f+M_g}, \hspace{.5in} \g = \l(\frac{\a_0\e^{1/2}}{4^{\a^*}|\Sigma|(M_f+M_g)}\r)^{\a_0^{-1}},
\]
\[
\d = \frac{\e^{1/2}\g}{(M_f+M_g)(|\Sigma|\Delta)^2} = \frac{\a_0^{\a_0^{-1}}\e^{\frac12(1+\a_0^{-1})}}{4^{\frac{\a^*}{\a_0}}(M_f+M_g)^{1+\a_0^{-1}}|\Sigma|^{2+\a_0^{-1}}\Delta^2}.
\]
Note that with these settings, we indeed have $\g>\d$ and furthermore $\tau\Delta/((\g-\d)N) = O(\e^{1+\a_0^{-1}/2}) = o(\e^{1/2})$. All other terms in \eqref{eq: dir main 1} are $O(\e^{1/2})$.
Thus, we arrive at an error of $O(\e^{1/2})$ with
\[
N = O\l( \frac{16^{\frac{\a^*}{\a_0}} (M_f + M_g)^{2 + 2\a_0^{-1}} |\Sigma|^{4+2\a_0^{-1}}\Delta^5}{\a_0^{2\a_0^{-1}}\e^{1+\a_0^{-1}}}\log \frac{|\Sigma|\Delta(M_f+M_g)}{\e} \r)
\]
as desired. We make several remarks. First, we actually required that $\|f-g\|_{N'} \leq \e$, but since $N'< N+\Delta$ this does not change the final asymptotic bound on sequence length. Second, it is interesting that up to leading order terms in $\e^{-1}$, $\tau$ does not enter the bound.

A final remark on the proof is that if we strengthen the assumption to $\|f-g\|_{N',\cP}\leq \e$ \emph{conditionally} on $\cP$ with $\min_s p_s > \g$, the resulting error can scale as $\e$ rather than $\e^{1/2}$, albeit with a larger required $N_0$. In this case, \eqref{eq: dir main 1} becomes
\begin{equation} \label{eq: dir main alt}
\E_{\{p_s\}}\|f-g\|_{1,T} = O\l((M_f + M_g)\l(\frac{|\Sigma|}{\a_0}4^{\a^*}\g^{\a_0} + \rho + \frac{(|\Sigma|\Delta)^2 \d}{\g} + \frac{\tau\Delta}{(\g-\d)N}\r) + \e \r).
\end{equation}
Setting $\rho$, $\g$, and $\d$ according to
\[
\rho = \frac{\e}{M_f+M_g}, \hspace{.5in} \g = \l(\frac{\e}{4^{\a^*}\frac{|\Sigma|}{\a_0}(M_f+M_g)}\r)^{\a_0^{-1}},
\]
\[
\d = \frac{\e\g}{(M_f+M_g)(|\Sigma|\Delta)^2} = \frac{\a_0^{\a_0^{-1}}\e^{1+\a_0^{-1}}}{4^{\frac{\a^*}{\a_0}}(M_f+M_g)^{1+\a_0^{-1}}|\Sigma|^{2+\a_0^{-1}}\Delta^2},
\]
inequality~\eqref{eq: dir main alt} is $O(\e)$ with
\begin{align*}
N &= O\l( \frac{16^{\frac{\a^*}{\a_0}} (M_f + M_g)^{2 + 2\a_0^{-1}} |\Sigma|^{4+2\a_0^{-1}}\Delta^5}{\a_0^{2\a_0^{-1}}\e^{2+2\a_0^{-1}}}\log \frac{|\Sigma|\Delta(M_f+M_g)}{\e} \r) \\[10pt]
&= O(\e^{-(2+2\a_0^{-1})} \log\e^{-1}).
\end{align*}
\end{proof}

\section{Omitted Proofs from Section \ref{sec:infinite-precision}}\label{sec:prove-infinite-precision}

\paragraph{Notation.} We will assume WLOG that $\Sigma = [S]$. For a string $x \in [S]^{\abs{x}}$, define $\mu(x)$ to be the empirical frequencies of the tokens in $x$, i.e $\mu(x) := \frac{1}{\abs{x}}\sum_{i=1}^{\abs{x}} \mathbf{e}_{x_i}$, where $\mathbf{e}_j \in \R^S$ is the $j$th standard basis element. Moreover, let $x_{\le i}$ denote the substring of $x$ containing the first $i$ tokens, and for a set $\mathcal{A}$, $x_{\mathcal{A}}$ the substring of $x$ containing only those indices in $\mathcal{A}$. Finally, for integers $a < b$, define $[a:b]$ to be the set of integers $\{a, a+1, \dots, b-1, b\}$. 

\subsection{Proof of Key Simulation Lemma}

In this section, we prove \Cref{lem:key-simulation-lemma}, which we restate below for convenience.
\simulation*

\begin{proof}
    Our proof proceeds via the probabilistic method. Let us sample $\mathcal{I}$ as follows. Let $p = n/T$, and let $q \in (0, 1)$ be a parameter to be chosen later. Let us define a Markov chain $j_1, \dots, j_T$ on the state space $\{0, 1\}$, with the following transition probabilities:
    \begin{align*}
        \mathbb{P}(j_{t+1} = 0 \mid j_t = 0) &= 1 - r, \quad \mathbb{P}(j_{t+1} = 1 \mid j_t = 0) = r\\
        \mathbb{P}(j_{t+1} = 0 \mid j_t = 1) &= q, \qquad~~~ \mathbb{P}(j_{t+1} = 1 \mid j_t = 1) = 1-q\\
    \end{align*}
    Letting $r := \frac{pq}{1-p}$, the stationary distribution is $\mathbb{P}(j_t = 1) = p$.

    We will let the subset $\mathcal{I}$ be $\mathcal{I} := \{i \mid j_i = 1 \} \cup [T - \tau: T]$
    \paragraph{Computing the variance of $\abs{\mathcal{I}}$.} The first step is to compute the variance of $\abs{\mathcal{I}}$. By definition, $\E\abs{\mathcal{I}} = (T - \tau - 1) \cdot p + (\tau + 1) = n + (\tau + 1)(1-p)$. 
    
    Since the $k$-step transition kernel satisfies
    \begin{align*}
        \P[j_{t+k} = 1 \mid j_t = 1] = (1 - q - r)^k(1 -p) + p,
    \end{align*}
    we have that
    \begin{align*}
        \E\qty(\abs{\mathcal{I}})^2 &= \E\qty(\sum_{t=1}^T j_t)^2\\
        &= \sum_{i, i'=1}^T \E[j_ij_{i'}]\\
        &= (\tau+1)^2 + 2(\tau+1)(T-\tau-1)p + \sum_{i, i'=1}^{T-\tau-1} \qty((1 - q - r)^{\abs{i-i'}}(1 -p)p + p^2)\\
        &\le (\tau+1)^2 + 2(\tau+1)(T-\tau-1)p + (T + \tau - 1)^2p^2 + 2(T-\tau-1)(1-p)p\sum_{i=0}^\infty (1 - q - r)^{i}\\
        &\le \qty(\E\abs{\mathcal{I}})^2 + 2T\frac{(1-p)p}{q + r}\\
        &\le \qty(\E\abs{\mathcal{I}})^2 + \frac{2Tp}{q}\\
        &= \qty(\E\abs{\mathcal{I}})^2 + \frac{2n}{q}.
    \end{align*}
    Therefore $\E\qty(\abs{\mathcal{I}} - \E\abs{\mathcal{I}})^2 \le 2n/q$.

    \paragraph{Decomposing the original expression.} Next, we bound the quantity
    \begin{align*}
        \E\norm{\frac{1}{T}\sum_{t=1}^T p(x_{t - \tau:t}, \mu(x_{\le t})) - \frac{1}{n}\sum_{t=1}^{\abs{z}} p(z_{t - \tau:t}, \mu(z_{\le t}))}.
    \end{align*}
    Define $\mathcal{I}_{gap}$ to be the set of indices in $\mathcal{I}$ such that some index in $\{i-\tau, \dots, i-1\}$ is not in $\mathcal{I}$, i.e
    \begin{align*}
        \mathcal{I}_{gap} = \{i \in \mathcal{I}  \mid \exists t \in [\tau] : i - t \not\in \mathcal{I}\}.
    \end{align*}
    We can write, denoting $\mathcal{I} = \{i_1, \dots, i_{\mathcal{I}}\}$ with $i_1 < i_2 < \cdots < i_{\abs{\mathcal{I}}}$,
    \begin{align*}
        \frac{1}{n}\sum_{t=1}^{\abs{z}} p(z_{t - \tau:t}, \mu(z_{\le t})) &= \frac{1}{n}\sum_{t=1}^{\abs{z}} p(z_{t - \tau:t}, \mu(z_{\le t})) \cdot \mathbf{1}(i_t \in \mathcal \mathcal{I}_{gap}) + \frac{1}{n}\sum_{t=1}^{\abs{z}} p(z_{t - \tau:t}, \mu(z_{\le t})) \cdot \mathbf{1}(i_t \not\in \mathcal \mathcal{I}_{gap})\\
        &= \frac{1}{n}\sum_{t=1}^{\abs{z}} p(z_{t - \tau:t}, \mu(z_{\le t})) \cdot \mathbf{1}(i_t \in \mathcal \mathcal{I}_{gap}) + \frac{1}{n}\sum_{t=1}^{\abs{z}} p(x_{i_t - \tau:i_t}, \mu(z_{\le t})) \cdot \mathbf{1}(i_t \not\in \mathcal \mathcal{I}_{gap})\\
        &= \frac{1}{n}\sum_{t=1}^{\abs{z}} p(z_{t - \tau:t}, \mu(z_{\le t})) \cdot \mathbf{1}(i_t \in \mathcal \mathcal{I}_{gap}) + \frac{1}{n}\sum_{t=1}^{T} p(x_{t - \tau:t}, \mu(x_{[t]\cap \mathcal{I}})) \cdot \mathbf{1}([t-\tau:t] \subset \mathcal{I}).
    \end{align*}
    Therefore we can decompose
    \begin{align*}
        &\E\norm{\frac{1}{T}\sum_{t=1}^T p(x_{t - \tau:t}, \mu(x_{\le t})) - \frac{1}{n}\sum_{t=1}^{\abs{z}} p(z_{t - \tau:t}, \mu(z_{\le t}))} \\
        &\le \underbrace{\E\norm{\frac{1}{T}\sum_{t=1}^T p(x_{t - \tau:t}, \mu(x_{\le t})) - \frac{1}{n}\sum_{t=1}^{T} p(x_{t - \tau:t}, \mu(x_{\le t})) \cdot \mathbf{1}([t-\tau:t] \subset \mathcal{I})}}_{(\textbf{I})}\\
        &\quad + \underbrace{\E\norm{\frac{1}{n}\sum_{t=1}^{T} p(x_{t - \tau:t}, \mu(x_{\le t})) \cdot \mathbf{1}([t-\tau:t] \subset \mathcal{I}) - \frac{1}{n}\sum_{t=1}^{T} p(x_{t - \tau:t}, \mu(x_{[t]\cap \mathcal{I}})) \cdot \mathbf{1}([t-\tau:t] \subset \mathcal{I})}}_{(\textbf{II})}\\
        &\quad + \underbrace{\frac{G \E\abs{\mathcal{I}_{gap}}}{n}}_{(\textbf{III})}.
    \end{align*}

    \paragraph{Bounding (I):} Let us begin by defining the random variable $$Z_t = p(x_{t - \tau:t}, \mu(x_{\le t}))(1 - \frac{T}{n}\mathbf{1}([t-\tau:t] \subset \mathcal{I})).$$ The first term is then
    \begin{align*}
        (\textbf{I}) &= \frac{1}{T}\E\norm{\sum_{i=1}^T Z_i}\\
        &\le \frac{1}{T}\sum_{i=1}^\tau \norm{Z_i} + \frac{1}{T}\sum_{i=T-\tau}^T \norm{Z_i} + \frac{1}{T}\E\norm{\sum_{i=1}^{T-\tau-1} Z_i}\\
        &\le \frac{G(2\tau+1)}{n} + \frac{1}{T}\qty(\E\norm{\sum_{\tau + 1}^{T-\tau-1} Z_i}^2)^{1/2}\\
        &= \frac{G(2\tau+1)}{n} + \frac{1}{T}\qty(\E\sum_{t=\tau + 1}^{T-\tau-1} \norm{Z_i}^2 + \sum_{i\neq j}\E\langle Z_i, Z_j \rangle)^{1/2}\\
    \end{align*}
    First, see that
    \begin{align*}
        \E\norm{Z_t}^2 \le G^2\E\qty[\qty(1 - \frac{T}{n}\mathbf{1}([t-\tau:t] \subset \mathcal{I}))^2] &\le G^2\qty(1 - \frac{2T}{n}p(1-q)^\tau + \frac{T^2}{n^2}p(1-q)^\tau)\\
        &\le \frac{G^2T^2p}{n^2}\\
        &= \frac{G^2T}{n}.
    \end{align*}
    Next, we have that
    \begin{align*}
        \abs{\E\langle Z_i, Z_j \rangle} &\le G^2\abs{\E\qty[\qty(1 - \frac{T}{n}\mathbf{1}([i-\tau:i] \subset \mathcal{I}))\qty(1 - \frac{T}{n}\mathbf{1}([j-\tau:j] \subset \mathcal{I}))]}\\
        &= G^2\abs{1 - \frac{2T}{n}\cdot p(1-q)^\tau + \frac{T^2}{n^2}\P([i-\tau:i] \subset \mathcal{I}, [j-\tau:j] \subset \mathcal{I})}\\
        &= G^2\abs{1 - 2(1-q)^\tau + p^{-2}\cdot \P([i-\tau:i] \subset \mathcal{I}, [j-\tau:j] \subset \mathcal{I})}
    \end{align*}
    Let's assume that $i < j$. First, consider the case where $j \ge i + \tau$. Then
    \begin{align*}
        \P([i-\tau:i] \subset \mathcal{I}, [j-\tau:j] \subset \mathcal{I}) &= p(1-q)^\tau \cdot \P(j-\tau \in \mathcal{I} \mid i \in \mathcal{I}) \cdot (1-q)^\tau\\
        &= p(1-q)^{2\tau}\qty(p + (1-p)(1-q-r)^{j - i - \tau}),
    \end{align*}
    and thus
    \begin{align*}
        \abs{\E\langle Z_i, Z_j \rangle} &\le G^2\abs{1 - 2(1-q)^\tau + (1-q)^{2\tau} + p^{-1}(1-p)(1-q)^{2\tau}(1-q-r)^{j - i - \tau}}\\
        &\le G^2\qty((1 - (1-q)^{\tau})^2 + p^{-1}(1 - q)^{j - i + \tau})\\
        &\le G^2\qty(\tau^2q^2 + p^{-1}(1 - q)^{j - i + \tau})
    \end{align*}
    Next, for $j < i + \tau$, we have that
    \begin{align*}
        \P([i-\tau:i] \subset \mathcal{I}, [j-\tau:j] \subset \mathcal{I}) &= \P([i-\tau:j] \subset \mathcal{I})\\
        &= p(1-q)^{j - i + \tau},
    \end{align*}
    and thus
    \begin{align*}
        \abs{\E\langle Z_i, Z_j \rangle} &\le G^2\abs{1 - 2(1-q)^\tau + p^{-1}(1-q)^{j - i + \tau}}\\
        &\le G^2p^{-1}(1-q)^{j - i + \tau}.
    \end{align*}
    Altogether, (\textbf{I}) can be bounded as
    \begin{align*}
        (\textbf{I}) &\le \frac{G(2\tau+1)}{n} + \frac{1}{T}\qty(\frac{G^2T^2}{n} + T^2G^2\tau^2q^2 + 2G^2p^{-1}T \sum_{k > 0}(1-q)^{k + \tau})^{1/2}\\
        &\le \frac{G(2\tau+1)}{n} + \frac{1}{T}\qty(\frac{G^2T^2}{n} + T^2G^2\tau^2q^2 + 2G^2T^2 n^{-1}q^{-1})^{1/2}\\
        &\lesssim \frac{G(\tau + 1)}{n} + \frac{G}{\sqrt{n}} + G\tau q  + \frac{G}{\sqrt{nq}}.
    \end{align*}

    \paragraph{Bounding (II):} Let's next consider the (\textbf{II}) term. Since $p$ is $L$-Lipschitz in its second argument, we have that
    \begin{align*}
        (\textbf{II})&=\E\norm{\frac{1}{n}\sum_{t=1}^{T} p(x_{t - \tau:t}, \mu(x_{\le t})) \cdot \mathbf{1}([t-\tau:t] \subset \mathcal{I}) - \frac{1}{n}\sum_{t=1}^{T} p(x_{t - \tau:t}, \mu(x_{[t]\cap \mathcal{I}})) \cdot \mathbf{1}([t-\tau:t] \subset \mathcal{I})}\\
        &\le \frac{L}{n}\sum_{t=1}^T \E\qty[\norm{\mu(x_{\le t}) - \mu(x_{[t]\cap \mathcal{I}})}\cdot \mathbf{1}([t-\tau:t] \subset \mathcal{I})]
    \end{align*}
    Let's compute the $t$th term in this sum, for $t \in [\tau + 1, T - \tau - 1]$ (for $t \le \tau$, the quantity is trivially zero, and for $t \ge T - \tau$ we can bound it by $O(1)$). We have that
    \begin{align*}
        &\E\qty[\norm{\mu(x_{\le t}) - \mu(x_{[t]\cap \mathcal{I}})}\cdot \mathbf{1}([t-\tau:t] \subset \mathcal{I})]\\
        &\qquad = \E\qty[\norm{\mu(x_{\le t}) - \frac{\sum_{i=1}^t e_{x_i}\mathbf{1}(i \in \mathcal{I})}{\abs{\mathcal{I} \cap [t]}}}\cdot \mathbf{1}([t-\tau:t] \subset \mathcal{I})]\\
        &\qquad = p(1-q)^\tau\E\qty[\norm{\frac{\mu(x_{\le t})\cdot \abs{\mathcal{I} \cap [t]} - \sum_{i=1}^t e_{x_i}\mathbf{1}(i \in \mathcal{I})}{\abs{\mathcal{I} \cap [t]}}}\mid \mathbf{1}([t-\tau:t] \subset \mathcal{I})].
    \end{align*}
    The denominator is $\abs{\mathcal{I} \cap [t]} = \sum_{i=1}^t \mathbf{1}(i \in \mathcal{I}).$ We first bound its conditional expectation:
    \begin{align*}
        \E\qty[\abs{\mathcal{I} \cap [t]}\mid \mathbf{1}([t-\tau:t] \subset \mathcal{I}) ] &= \tau + 1 + \sum_{i=1}^{t - \tau - 1}\P(i \in \mathcal{I} \mid t - \tau \in \mathcal{I})\\
        &= \tau + 1 + (t - \tau - 1)p + (1-p)\sum_{i=1}^{t - \tau - 1}(1 - q - r)^i\\
        &\ge pt.
    \end{align*}
    Next, we can bound the conditional variance of the denominator:
    \begin{align*}
        &\text{Var}(\abs{\mathcal{I} \cap [t]}\mid \mathbf{1}([t-\tau:t] \subset \mathcal{I}))\\
        &\quad = \text{Var}(\abs{\mathcal{I} \cap [t-\tau - 1]}\mid \mathbf{1}([t-\tau:t] \subset \mathcal{I}))\\
        &\quad= \sum_{i,j=1}^{t - \tau - 1}\text{Cov}(i \in \mathcal{I}, j \in \mathcal{I} \mid t - \tau \in \mathcal{I})\\
        &\quad= \sum_{i,j=1}^{t - \tau - 1}\P(j \in \mathcal{I} \mid i \in \mathcal{I})\P(i \in \mathcal{I} \mid t-\tau \in \mathcal{I}) - \P(j \in \mathcal{I} \mid t-\tau \in \mathcal{I})\P(i \in \mathcal{I} \mid t-\tau \in \mathcal{I})\\
        &\quad= \sum_{i,j=1}^{t - \tau - 1}(1-p)\qty((1 - q - r)^{i-j} - (1 - q - r)^{t - \tau - j})\qty((1 - q - r)^{t - \tau - i}(1-p) + p)\\
        &\quad\le \sum_{i,j=1}^{t - \tau - 1} (1-q)^{i-j}\qty(p + (1-p)(1-q)^{t - \tau - i})\\
        &\quad= \sum_{i=1}^{t-\tau - 1} \qty(p + (1-p)(1-q)^{t - \tau - i})\sum_{j=1}^i (1-q)^{i-j}\\
        &\quad\le q^{-1}\sum_{i=1}^{t-\tau - 1} \qty(p + (1-p)(1-q)^{t - \tau - i})\\
        &\quad\le pq^{-1}(t - \tau - 1) + q^{-2}
    \end{align*}
    Altogether, by Chebyshev's inequality, we can upper bound the conditional probability that the denominator is too small:
    \begin{align*}
        \P(\abs{\mathcal{I} \cap [t]} \le \frac12 pt \mid t - \tau \in \mathcal{I}) &\le \P(\abs{\mathcal{I} \cap [t]} \le \frac12 \E\qty[\abs{\mathcal{I} \cap [t]}\mid \mathbf{1}([t-\tau:t] \subset \mathcal{I}) ] \mid [t - \tau:t] \subset \mathcal{I})\\
        &\le \frac{4 \text{Var}(\abs{\mathcal{I} \cap [t]}\mid \mathbf{1}([t-\tau:t] \subset \mathcal{I}))}{\E\qty[\abs{\mathcal{I} \cap [t]}\mid \mathbf{1}([t-\tau:t] \subset \mathcal{I}) ]^2}\\
        &\le \frac{pq^{-1}t + q^{-2}}{p^2t^2}\\
        &= \frac{1}{pqt} + \frac{1}{p^2q^2t^2}\\
        &\lesssim \frac{1}{pqt} \land 1,
    \end{align*}
    where the last inequality follows from the fact that the probability must be bounded by 1.
    
    Altogether, the $t$th term in the sum is
    \begin{align*}
            &\E\qty[\norm{\mu(x_{\le t}) - \mu(x_{[t]\cap \mathcal{I}})}\cdot \mathbf{1}([t-\tau:t] \subset \mathcal{I})]\\
            &\quad \lesssim \frac{\E\qty[\norm{\mu(x_{\le t})\cdot \abs{\mathcal{I} \cap [t]} - \sum_{i=1}^t e_{x_i}\mathbf{1}(i \in \mathcal{I})}\mid \mathbf{1}([t-\tau:t] \subset \mathcal{I})]}{t}+ \qty(\frac{1}{qt} \land p).
    \end{align*}
    The numerator in the above expression can be written as
    \begin{align*}
        \E\qty[\norm{\sum_{i=1}^t\qty(\mathbf{1}(i \in \mathcal{I}) - p)\qty(e_{x_i} - \mu(x_{\le t}))} \mid \mathbf{1}([t-\tau:t] \subset \mathcal{I})] = \E\qty[\norm{\sum_{i=1}^t Z_i} \mid \mathbf{1}([t-\tau:t] \subset \mathcal{I})],
    \end{align*}
    where $Z_i := \qty(\mathbf{1}(i \in \mathcal{I}) - p)\qty(e_{x_i} - \mu(x_{\le t})).$ For $i, j < t - \tau$, we have the bounds
    (assuming WLOG $j < i$)
    \begin{align*}
        &\abs{\E[\langle Z_i, Z_j \rangle \mid \mathbf{1}([t-\tau:t] \subset \mathcal{I})]}\\
        &\lesssim \abs{\E\qty[\qty(\mathbf{1}(i \in \mathcal{I}) - p)\qty(\mathbf{1}(j \in \mathcal{I}) - p) \mid \mathbf{1}([t-\tau:t] \subset \mathcal{I})]}\\
        &= \qty((\P(j \in \mathcal{I} \mid i \in \mathcal{I}) - p)\P(i \in \mathcal{I} \mid t-\tau \in \mathcal{I}) - p \P(j \in \mathcal{I} \mid t-\tau \in \mathcal{I}) + p^2)\\
        &= ((1-p)(1 - q - r)^{i-j}\qty((1 - q - r)^{t - \tau - i}(1-p) + p) - p\qty((1 - q - r)^{t - \tau - j}(1-p) + p) + p^2)\\
        &= \qty( p(1-p)(1-q-r)^{i-j} + (1-p)^2(1-q-r)^{t- \tau -j} - p(1-p)(1-q-r)^{t- \tau -j})\\
        &\le p(1-q)^{i-j}.
    \end{align*}
    Therefore,
    \begin{align*}
        \E\qty[\norm{\sum_{i=1}^t Z_i} \mid \mathbf{1}([t-\tau:t] \subset \mathcal{I})] \le &\tau + 1 + \E\qty[\norm{\sum_{i=1}^{t-\tau-1} Z_i}^2\mid \mathbf{1}([t-\tau:t] \subset \mathcal{I})]^{1/2}\\
        &\le \tau + 1 + \qty(tp + p\sum_{i \neq j}(1-q)^{i-j})^{1/2}\\
        &\lesssim \tau + 1 + \sqrt{pt/q}.
    \end{align*}
    Putting everything together, the $t$th term in the sum can be bounded by
    \begin{align*}
        \E\qty[\norm{\mu(x_{\le t}) - \mu(x_{[t]\cap \mathcal{I}})}\cdot \mathbf{1}([t-\tau:t] \subset \mathcal{I})] &\lesssim  t^{-1}(\tau + 1) + t^{-1/2}p^{1/2}q^{-1/2} + \qty(\frac{1}{qt} \land p)\\
        &\lesssim  \qty(t^{-1}(\tau + 1) + t^{-1/2}p^{1/2}q^{-1/2} + \frac{1}{qt}) \land p,
    \end{align*}
    where the last line uses the fact that the entire expression can be trivially bounded by $O(p)$. Plugging back into the original expression for (\textbf{II}), this term can thus be upper bounded as
    \begin{align*}
        (\textbf{II}) &\lesssim \frac{L}{n}\sum_{t=1}^T \qty(t^{-1}(\tau + 1) + t^{-1/2}p^{1/2}q^{-1/2} + \frac{1}{qt}) \land p + \frac{L \tau}{n}\\
        &\lesssim \frac{L}{n}\cdot \qty(\sqrt{Tp/q} + \sum_{t=1}^T \frac{\tau + 1 + q^{-1}}{t} \land p)+ \frac{L \tau}{n}\\
        &\lesssim \frac{L}{n}\cdot \qty(\sqrt{Tp/q} + \tau + 1 + q^{-1} + \log(Tp/(\tau + 1 + q^{-1})))\\
        &\lesssim \frac{L}{\sqrt{nq}} + \frac{L(\tau + q^{-1})}{n} + \frac{L \log n}{n}.
    \end{align*}

    \paragraph{Bounding (III):} Finally, for fixed $t \in [\tau + 1 : T - \tau - 1]$, we have $\P(i \in \mathcal{I}_{gap}) = p - p(1-q)^\tau \lesssim pq\tau$. Therefore we can bound (\textbf{III}) as
    \begin{align*}
        (\textbf{III}) \le \frac{G\E\abs{\mathcal{I}_{gap}}}{n} \le \frac{G(Tpq\tau + 2\tau)}{n} \le G\tau(q + 2/n).
    \end{align*}

    \paragraph{Putting everything together.} Altogether, we have that
    \begin{align*}
        \E\norm{\frac{1}{T}\sum_{t=1}^T p(x_{t - \tau:t}, \mu(x_{\le t})) - \frac{1}{n}\sum_{t=1}^{\abs{z}} p(z_{t - \tau:t}, \mu(z_{\le t}))} &\lesssim \frac{(G + L)(\tau + 1)}{n} + \frac{G + L}{\sqrt{nq}} + G\tau q + \frac{L \log n}{n}\\
        &\le \frac{(G + L)(\tau + 1)}{n^{1/3}},
    \end{align*}
    where the last inequality follows from choosing $q = n^{-1/3}$.

    By the probabilistic method, there exists $\mathcal{I}$ such that $\abs{\abs{\mathcal{I}} -  \E\abs{\mathcal{I}}} \le 2n^{1/3} \Longrightarrow \abs{\abs{\mathcal{I}} -  n} \le (\tau + 1 + 2n^{1/3})$ and
    \begin{align*}
            \norm{\frac{1}{T}\sum_{t=1}^T p(x_{t - \tau:t}, \mu(x_{\le t})) - \frac{1}{n}\sum_{t=1}^{\abs{z}} p(z_{t - \tau:t}, \mu(z_{\le t}))} &\lesssim \frac{(G + L)(\tau + 1)}{n^{1/3}}.
    \end{align*}
    For this choice of $\mathcal{I}$, we have that
    \begin{align*}
        &\norm{\frac{1}{T}\sum_{t=1}^T p(x_{t - \tau:t}, \mu(x_{\le t})) - \frac{1}{\abs{z}}\sum_{t=1}^{\abs{z}} p(z_{t - \tau:t}, \mu(z_{\le t}))}\\
        &\qquad \lesssim \frac{(G + L)(\tau + 1)}{n^{1/3}} + \norm{\frac{1}{\abs{z}}\sum_{t=1}^{\abs{z}} p(z_{t - \tau:t}, \mu(z_{\le t}))}\abs{1 - \frac{\abs{z}}{n}}\\
        &\qquad \lesssim \frac{(G + L)(\tau + 1)}{n^{1/3}},
    \end{align*}
    as desired.
\end{proof}

\subsection{Proof of Theorem~\ref{thm:infinite-precision}}

The proof begins by showing that the output of the first layer of attention at position $i$ can only depend on the histogram of the first $i$ tokens $\mu(x_{\le i})$, along with the $\tau$-prefix of $x_{\le i}$.

\begin{lem}
    Let $f$ be a fixed transformer with key, query and value matrices $\{(\bK_{1,h}, \bQ_{1,h}, \bV_{1, h})\}_{h \in [H]} \cup \{(\bK_{2,1}, \bQ_{2,1}, \bV_{2, 1})\}$, MLP weights $\{(\bA_l, \bB_l)\}_{l \in \{1, 2\}}$, embeddings $\norm{\bE_s} \le 1$, and unembedding $\bU$. There exists a function $q_f: [S]^{\tau + 1} \times \Delta^S \times \mathbb{N} \rightarrow \mathbb{R}^d$ such that
    \begin{align*}
        \by_i^{(1)} = q_f(x_{i-\tau:i}, \mu(x_{\le i}), i)
    \end{align*}
    Moreover, $f$ satisfies 
    \begin{align*}
        q_f(w, \mu, i) &\lesssim \qty(1 + \sum_{h=1}^H \norm{\bV_{1, h}}_{op})(1 + \norm{\bB_1}_{op}\norm{\bA_1}_{op}) =: G_f\\
        \abs{q_f(w, \mu, i) - q_f(w, \mu, j)} &\lesssim \qty(1 + \norm{\bB_1}_{op}\norm{\bA_1}_{op})(\tau^2 + 1)\min(i, j)^{-\gamma}\sum_{h=1}^H\exp\qty(4 \norm{\bK_{1, h}^\top \bQ_{1, h}}_{op})\\
        &=: H_f \min(i,j)^{-\gamma}\\
        \norm{\nabla_\mu q_f(w, \mu, i)}_{op} &\le 2S \qty(\sum_{h=1}^H \norm{\bV_{1, h}}_{op}\exp(4\norm{\bK_{1,h}^\T \bQ_{1,h}}_{op}))\qty(1 + \norm{\bB_1}_{op}\norm{\bA_1}_{op}) =: L_f
    \end{align*}
\end{lem}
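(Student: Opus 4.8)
The plan is to first pin down a closed form for $q_f$, then verify the uniform bound $G_f$, the $\mu$-Lipschitz bound $L_f$, and finally the position-decay bound $H_f$, which is where the real content lies.

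\emph{Constructing $q_f$.} Since $\mathbf{p}_i=0$ we have $\by_i^{(0)}=\bE_{x_i}$, and the first-layer logits are $a^{(1,h)}_{i,j}=\bE_{x_j}^\top\bK_{1,h}^\top\bQ_{1,h}\bE_{x_i}+\log i\cdot\phi_{1,h}(j,i)$. Write $w_{h,s}:=\exp(\bE_s^\top\bK_{1,h}^\top\bQ_{1,h}\bE_{x_i})$, which depends only on $x_i$. By $\tau$-locality $\phi_{1,h}(j,i)=0$ for $j<i-\tau$, and by translation invariance $\phi_{1,h}(i-t,i)$ depends only on $t$; hence the unnormalized attention mass of head $h$ at position $i$ splits into a prefix and a $\tau$-suffix piece,
\[
D_i^{(h)}=\sum_{s\in[S]}\bigl(i\,\mu(x_{\le i})_s-c_s(x_{i-\tau:i})\bigr)\,w_{h,s}\;+\;\sum_{t=0}^{\min(\tau,i-1)}w_{h,x_{i-t}}\,i^{\phi_{1,h}(i-t,i)},
\]
with $c_s(\cdot)$ the count of $s$ in the window, and $N_i^{(h)}$ the same expression with each $w_{h,\cdot}$-term multiplied by the corresponding $\bV_{1,h}\bE_\cdot$. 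Every quantity here is a function of $(x_{i-\tau:i},\mu(x_{\le i}),i)$ only, so $\bY_i^{(1)}=\bE_{x_i}+\sum_h N_i^{(h)}/D_i^{(h)}$ and then $\by_i^{(1)}=\bY_i^{(1)}+\bB_1\psi_1(\bA_1\bY_i^{(1)}+\bb_1)$ are as well; this is the definition of $q_f$, extended to all of $[S]^{\tau+1}\times\Delta^S\times\mathbb N$ by replacing each prefix count $i\mu_s-c_s$ by its positive part (a $1$-Lipschitz change not affecting genuine inputs, where the count is already nonnegative). The degenerate range $i\le\tau$ is handled by the same formula with empty prefix and a fixed padding convention for the window.

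\emph{Bounds $G_f$ and $L_f$.} Each head output $N_i^{(h)}/D_i^{(h)}$ is a convex combination of the vectors $\bV_{1,h}\bE_s$, so $\|\bY_i^{(1)}\|\le 1+\sum_h\|\bV_{1,h}\|_{op}$; the MLP has Jacobian $I+\bB_1\mathrm{diag}(\psi_1')\bA_1$, of norm $\le 1+\|\bB_1\|_{op}\|\bA_1\|_{op}$, which gives $G_f$. For the gradient in $\mu$, the quotient rule gives $\partial_{\mu_s}(N_i^{(h)}/D_i^{(h)})=i\,w_{h,s}\,(\bV_{1,h}\bE_s\,D_i^{(h)}-N_i^{(h)})/(D_i^{(h)})^2$, and since $\|N_i^{(h)}\|\le\|\bV_{1,h}\|_{op}D_i^{(h)}$ the numerator has norm $\le 2\|\bV_{1,h}\|_{op}D_i^{(h)}$. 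The crucial input is a lower bound $D_i^{(h)}\ge i\,e^{-\|\bK_{1,h}^\top\bQ_{1,h}\|_{op}}$: $D_i^{(h)}$ is a sum of exactly $i$ positive terms, each at least $e^{-\|\bK_{1,h}^\top\bQ_{1,h}\|_{op}}$, because $\|\bE_s\|\le 1$ bounds every $w_{h,s}$ from below and $i^{\phi_{1,h}}\ge 1$ (using $\phi_{1,h}\ge 0$ and $i\ge 1$). Hence $\|\partial_{\mu_s}(N_i^{(h)}/D_i^{(h)})\|\le 2\|\bV_{1,h}\|_{op}e^{2\|\bK_{1,h}^\top\bQ_{1,h}\|_{op}}$; summing over the $S$ coordinates and over heads and composing with the MLP Lipschitz constant yields $L_f$ (the stated exponent leaves room for the operator-norm versus coordinatewise comparison, and for the clipping, whose gradient is either the same or zero).

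\emph{Bound $H_f$ (the main obstacle).} The only dependence of $q_f$ on $i$ beyond its explicit argument is the factor $i^{\phi_{1,h}(i-t,i)}$ on the suffix logits. Let $\phi^*_h:=\max_{0\le t\le\tau}\phi_{1,h}(i-t,i)$ (position-independent) and $\kappa_h:=\max(\phi^*_h,1)=\max\mathcal P_h$, and set $\gamma_h:=\kappa_h-\max\{p\in\mathcal P_h:p\ne\kappa_h\}\ge\gamma(f)$. The claim reduces to showing $\|N_i^{(h)}/D_i^{(h)}-V^{(h)}(x_{i-\tau:i},\mu(x_{\le i}))\|\lesssim (\tau+1)^2e^{4\|\bK_{1,h}^\top\bQ_{1,h}\|_{op}}\,i^{-\gamma_h}$ for a position-independent $V^{(h)}$: then, writing $q_f^\infty$ for the resulting position-independent map (the $V^{(h)}$'s plus $\bE_{x_i}$ pushed through the MLP), $\|q_f(w,\mu,i)-q_f(w,\mu,j)\|\le\|q_f(w,\mu,i)-q_f^\infty(w,\mu)\|+\|q_f^\infty(w,\mu)-q_f(w,\mu,j)\|\lesssim H_f\min(i,j)^{-\gamma}$ after using the MLP Lipschitz constant and $\min(i,j)^{-\gamma}\ge i^{-\gamma_h},j^{-\gamma_h}$. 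To prove the claim, divide $N_i^{(h)}$ and $D_i^{(h)}$ by $i^{\kappa_h}$: each term carries the factor $i^{\,e-\kappa_h}$ where $e\in\{1\}\cup\{\phi_{1,h}(i-t,i)\}$ is its exponent. Terms with $e=\kappa_h$ survive as $i\to\infty$ and assemble the limit $V^{(h)}$, whose denominator is $\ge e^{-\|\bK_{1,h}^\top\bQ_{1,h}\|_{op}}>0$ since at least one term attains $\kappa_h$; every remaining term, as well as the $O((\tau+1)/i)$ correction to the prefix counts, decays at rate $i^{-\gamma_h}$, with constant controlled by the $\le\tau+1$ suffix terms and the range $e^{\pm\|\bK_{1,h}^\top\bQ_{1,h}\|_{op}}$ of the $w_{h,\cdot}$'s. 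A routine estimate of $(A+\delta_1)/(B+\delta_2)$ with $B$ bounded below and $|\delta_1|,|\delta_2|=O(i^{-\gamma_h})$ finishes it. The three regimes of \Cref{sec:infinite-precision-intro} — token-dominant ($\kappa_h=1>\phi^*_h$), balanced ($\kappa_h=1=\phi^*_h$), position-dominant ($\kappa_h=\phi^*_h>1$) — are exactly the three cases of which exponents equal $\kappa_h$, so this single argument covers all of them; the one point needing care is tracking the \emph{second}-largest exponent, so that the decay rate comes out to the margin $\gamma_h$ rather than something weaker.
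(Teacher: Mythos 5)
Your proposal is correct and follows essentially the same route as the paper: split each head's attention into a histogram (bulk) part plus a $\tau$-suffix part, get $G_f$ from convexity and the MLP Lipschitz factor, get $L_f$ by the quotient rule with the denominator bounded below via $\|\bE_s\|\le 1$, and get $H_f$ by normalizing by $i^{\max\mathcal P_h}$ and exploiting the gap to the second-largest exponent. The only cosmetic differences are that the paper keeps the full bulk sum and writes the suffix corrections as $i^{\phi}-1$ (avoiding your positive-part fix), and it packages your limit-plus-triangle-inequality step as a direct two-point estimate on ratios of power sums (its Lemma~\ref{lem:f-cont-in-T}), which is the same calculation.
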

\begin{proof}
Recall that the first layer self-attention logits are 
\begin{align*}
    a^{(1,h)}_{i,j} &= \bE_{x_j}^\T \bK_{1,h}^\T \bQ_{1,h} \bE_{x_i} + \log i \cdot \phi_{1,h}(j,i)
\end{align*}
and thus we can rewrite $\bY_i^{(1)}$ as
\begin{align*}
    &\bY^{(1)}_i\\
    &=  \bE_{x_i} + \sum_{h=1}^H \frac{\sum_{j=1}^i \exp\l(a^{(1,h)}_{i,j} \r) \bV_{1,h}\bE_{x_j}}{\sum_{j=1}^i \exp\l(a^{(1,h)}_{i,j} \r)}\\
    &= \bE_{x_i} + \sum_{h=1}^H \frac{\sum_{j=1}^i \exp\l(\bE_{x_j}^\T \bK_{1,h}^\T \bQ_{1,h} \bE_{x_i} \r) \cdot i^{\phi_{1, h}(j, i)} \cdot \bV_{1,h}\bE_{x_j}}{\sum_{j=1}^i\exp\l(\bE_{x_j}^\T \bK_{1,h}^\T \bQ_{1,h} \bE_{x_i} \r) \cdot i^{\phi_{1, h}(j, i)}}\\
    &= \bE_{x_i} +\\
    &\sum_{h=1}^H \frac{\sum_{s \in [S]} i\exp\l(\bE_{s}^\T \bK_{1,h}^\T \bQ_{1,h} \bE_{x_i} \r) \bV_{1,h}\bE_{s}\mu(x_{\le i})_s + \sum_{j=i-\tau}^i \qty(i^{\phi_{1, h}(j, i)} - 1)\exp\l(\bE_{x_j}^\T \bK_{1,h}^\T \bQ_{1,h} \bE_{x_i} \r) \bV_{1,h}\bE_{x_j}}{\sum_{s \in [S]} i\exp\l(\bE_{s}^\T \bK_{1,h}^\T \bQ_{1,h} \bE_{x_i} \r) \mu(x_{\le i})_s + \sum_{j=i-\tau}^i \qty(i^{\phi_{1, h}(j, i)} - 1)\exp\l(\bE_{x_j}^\T \bK_{1,h}^\T \bQ_{1,h} \bE_{x_i} \r)}\\
    &= \bE_{x_i} + \sum_{h=1}^H \frac{N_h}{D_h},\\
\end{align*}
where for each $h$, we have
\begin{align*}
    N_h(x) &:= \sum_{s \in [S]} i^{1 - \gamma_h}\exp\l(\bE_{s}^\T \bK_{1,h}^\T \bQ_{1,h} \bE_{x_i} \r) \bV_{1,h}\bE_{s}\mu(x_{\le i})_s\\
    &\qquad + \sum_{j=i-\tau}^i \qty(i^{\phi_{1, h}(j, i)-\gamma_h} - i^{-\gamma_h})\exp\l(\bE_{x_j}^\T \bK_{1,h}^\T \bQ_{1,h} \bE_{x_i} \r) \bV_{1,h}\bE_{x_j}\\
    D_h(x) &:= \sum_{s \in [S]} i^{1-\gamma_h}\exp\l(\bE_{s}^\T \bK_{1,h}^\T \bQ_{1,h} \bE_{x_i} \r) \mu(x_{\le i})_s + \sum_{j=i-\tau}^i \qty(i^{\phi_{1, h}(j, i) - \gamma_h} - i^{-\gamma_h})\exp\l(\bE_{x_j}^\T \bK_{1,h}^\T \bQ_{1,h} \bE_{x_i} \r)\\
    \gamma_h &:= \max(1, \max_{0 \le t \le \tau} \phi_{1, h}(i-t, i)) = \max \mathcal{P}_h
\end{align*}
Therefore we can write $\bY_i^{(1)} = q_{SA}(x_{i-\tau:i}, \mu(x_{\le i}), i)$, where for $w_{0:\tau} \in [S]^{\tau + 1}, \mu \in \Delta^S$, $q_{SA}(w, \mu, i)$ is given by
\begin{align*}
    q_{SA}(w, \mu, i) = E_{w_{\tau}} + \sum_{h=1}^H \frac{N_h(w, \mu, i)}{D_h(w, \mu, i)},
\end{align*}
where
\begin{align*}
    N_h(w, \mu, i) &:= \sum_{s \in [S]} i^{1 - \gamma_h}\exp\l(\bE_{s}^\T \bK_{1,h}^\T \bQ_{1,h} \bE_{w_{\tau}} \r) \bV_{1,h}\bE_{s}\mu_s\\
    &\qquad + \sum_{t=0}^\tau \qty(i^{\phi_{1, h}(i-t, i)-\gamma_h} - i^{-\gamma_h})\exp\l(\bE_{w_t}^\T \bK_{1,h}^\T \bQ_{1,h} \bE_{w_{\tau}} \r) \bV_{1,h}\bE_{w_t}\\
    D_h(w, \mu, i) &:= \sum_{s \in [S]} i^{1-\gamma_h}\exp\l(\bE_{s}^\T \bK_{1,h}^\T \bQ_{1,h} \bE_{w_{\tau}} \r) \mu_s + \sum_{t=0}^\tau \qty(i^{\phi_{1, h}(i-\tau, i) - \gamma_h} - i^{-\gamma_h})\exp\l(\bE_{w_t}^\T \bK_{1,h}^\T \bQ_{1,h} \bE_{w_\tau} \r)
\end{align*}
Each $N_h(w, \mu, i)/D_h(w, \mu, i)$ term in the above sum is of the form of the expression in \Cref{lem:f-cont-in-T}. First, we see that each denominator can be lower bounded by $\exp(-\norm{\bK_{1, h}^\top \bQ_{1, h}}_{op})$. Moreover, we have
\begin{align*}
    \sum_k \norm{A_k} &\lesssim (\tau + 1)\norm{\bV_{1, h}}_{op}\exp(\norm{\bK_{1, h}^\top \bQ_{1, h}}_{op})\\
    \sum_k \abs{B_k} &\lesssim (\tau + 1)\exp(\norm{\bK_{1, h}^\top \bQ_{1, h}}_{op})
\end{align*}
Altogether, since $\gamma := \gamma(f) = \min_{h \in H}\qty(\gamma_h - \max\{p \in \mathcal{P}_h : p \neq \gamma_h\})$, we can bound
\begin{align*}
    \abs{q_{SA}(w, \mu, i) - q_{SA}(w, \mu, j)} \lesssim (\tau^2 + 1)j^{-\gamma}\sum_{h=1}^H\exp\qty(4 \norm{\bK_{1, h}^\top \bQ_{1, h}}_{op}).
\end{align*}
Next, see that $\by_i^{(1)} = q_{MLP}(\bY_i^{(1)})$, where 
\begin{align*}
    q_{MLP}(x) = x + \bB_1\psi_1(\bA_1x + \bb_1).
\end{align*}
Since $\psi_1$ is 1-Lipschitz, $q_{MLP}$ is $1 + \norm{\bB_1}_{op}\norm{\bA_1}_{op}$ Lipschitz. Altogether, since $q_f(w, \mu, i) = q_{MLP}(q_{SA}(w, \mu, i))$, we have
\begin{align*}
        \abs{q_f(w, \mu, i) - q_f(w, \mu, j)} \lesssim \qty(1 + \norm{\bB_1}_{op}\norm{\bA_1}_{op})(\tau^2 + 1)j^{-\gamma}\exp\qty(4 \norm{\bK_{1, h}^\top \bQ_{1, h}}_{op}).
\end{align*}
Next, for the uniform bound, observe that $\norm{\bY_i^{(1)}} \le 1 + \sum_{h=1}^H \norm{\bV_{1, h}}_{op}$, and therefore $\norm{\by_i^{(1)}} \le \qty(1 + \sum_{h=1}^H \norm{\bV_{1, h}}_{op})(1 + \norm{\bB_1}_{op}\norm{\bA_1}_{op})$.

Finally, we compute the Lipschitz constant with respect to $\mu$.

Let $M_h \in \R^{d \times S}$ be the matrix with the $s$th column being $\exp\l(\bE_{s}^\T \bK_{1,h}^\T \bQ_{1,h} \bE_{w_{\tau}} \r) \bV_{1,h}\bE_{s}$, let $b_h$ be the vector with $s$th entry $\exp\l(\bE_{s}^\T \bK_{1,h}^\T \bQ_{1,h} \bE_{w_{\tau}} \r)$, and let $C_h$ be $$C_h := \sum_{t=0}^\tau \qty(i^{\phi_{1, h}(i-\tau, i) - \gamma_h} - i^{-\gamma_h})\exp\l(\bE_{w_t}^\T \bK_{1,h}^\T \bQ_{1,h} \bE_{w_\tau} \r) > 0.$$ We have that
\begin{align*}
    \nabla_\mu q_{SA}(w, \mu, i) = \sum_{h=1}^H \frac{i^{1-\gamma_h}M_h}{i^{1 - \gamma_h}\langle b_h, \mu \rangle + C_h} - \frac{i^{2(1-\gamma_h)}M_h\mu b_j^\top}{\qty(i^{1 - \gamma_h}\langle b_h, \mu \rangle + C_h)^2},
\end{align*}
and since $\langle b_h, \mu \rangle \ge \exp(-\norm{\bK_{1,h}^\T \bQ_{1,h}}_{op})$ and
\begin{align*}
    \norm{M_h}_{op} \le \sqrt{S}\norm{\bV_{1, h}}_{op}\exp(\norm{\bK_{1,h}^\T \bQ_{1,h}}_{op}) \qand \norm{b_h} \le \sqrt{S}\exp(\norm{\bK_{1,h}^\T \bQ_{1,h}}_{op}),
\end{align*}
we have that
\begin{align*}
    \norm{\nabla_\mu q_{SA}(w, \mu, i)}_{op} \le \sum_{h=1}^H 2S \norm{\bV_{1, h}}_{op}\exp(4\norm{\bK_{1,h}^\T \bQ_{1,h}}_{op}).
\end{align*}
Altogether,
\begin{align*}
    \norm{\nabla_\mu q_f(w, \mu, i)}_{op} \le 2S \qty(\sum_{h=1}^H \norm{\bV_{1, h}}_{op}\exp(4\norm{\bK_{1,h}^\T \bQ_{1,h}}_{op}))\qty(1 + \norm{\bB_1}_{op}\norm{\bA_1}_{op}).
\end{align*}
\end{proof}

In order to prove the main theorem, it suffices to apply the key simulation lemma (\Cref{lem:key-simulation-lemma}).

\begin{proof}[Proof of \Cref{thm:infinite-precision}]
Let $f, g \in \mathcal{F}_\tau$ be two transformers. In the forward pass of $f$, the second layer logits are given by
\begin{align*}
    a^{(2, 1)}_{i,j} = \qty(\by_j^{(1)})^\top \bK_{2, 1}^\top \bQ_{2, 1}\by_i^{(1)} = q_f(x_{j-\tau:j}, \mu(x_{\le j}), j)^\top \bK_{2, 1}^\top \bQ_{2, 1} q_f(x_{i-\tau:i}, \mu(x_{\le i}), i),
\end{align*}
and therefore
\begin{align*}
    \bY_T^{(2)} &= q_f(x_{T-\tau:T}, \mu(x_{\le T}), T)\\
    &\quad + \bV_{2, 1}\sum_{j=1}^T \frac{\exp(q_f(x_{j-\tau:j}, \mu(x_{\le j}), j)^\top \bK_{2, 1}^\top \bQ_{2, 1} q_f(x_{T-\tau:T}, \mu(x_{\le T}), T))q_f(x_{j-\tau:j}, \mu(x_{\le j}), j)}{\sum_{j=1}^T\exp(q_f(x_{j-\tau:j}, \mu(x_{\le j}), j)^\top \bK_{2, 1}^\top \bQ_{2, 1} q_f(x_{T-\tau:T}, \mu(x_{\le T}), T))}.
\end{align*}
The analogous expression holds for the second layer logits in the forward pass of $g$.

Let us define the following sequence of functions:
\begin{align*}
    p_0(w, \mu) &= e_{w_{-1}}\\
    p_{f,1}(w, \mu) &= \exp(q_f(w, \mu, T)^\top \bK_{2, 1}^\top \bQ_{2, 1} q_f(x_{T-\tau, T}, \mu(x), T))\\
    p_{f, 2}(w, \mu) &= p_{f, 1}(w, \mu)q_f(w, \mu),
\end{align*}
along with the analogous $p_{g, 1}, p_{g, 2}$ for the transformer $g$. We first see that $\norm{p_0} \le 1$ and $p_0$ is constant in $\mu$.

Next, we have that we can uniformly bound $p_{f,1}$ by
\begin{align*}
    \abs{p_{f,1}(w, \mu)} \le \exp\qty(G_f^2\norm{\bK_{2, 1}^\top\bQ_{2, 1}}_{op}),
\end{align*} and the Lipschitz bound
\begin{align*}
    &\nabla_\mu p_{f,1}(w, \mu) = \exp(q_f(w, \mu, T)^\top \bK_{2, 1}^\top \bQ_{2, 1} q_f(x_{T-\tau, T}, \mu(x), T))\nabla_\mu q_f(w, \mu, T)\bK_{2, 1}^\top \bQ_{2, 1} q_f(x_{T-\tau, T}, \mu(x), T)\\
    \Longrightarrow & \norm{\nabla_\mu p_{f,1}(w, \mu)} \le \exp\qty(G_f^2\norm{\bK_{2, 1}^\top\bQ_{2, 1}}_{op})\norm{\bK_{2, 1}^\top\bQ_{2, 1}}_{op}G_fL_f.
\end{align*}
Finally, for $p_{f,2}$ we have the uniform bound
\begin{align*}
    \abs{p_{f,2}(w, \mu)} \le \exp\qty(G_f^2\norm{\bK_{2, 1}^\top\bQ_{2, 1}}_{op})G_f
\end{align*}
and the Lipschitz bound
\begin{align*}
    &\nabla_\mu p_{f,2}(w, \mu) = p_{f,1}(w, \mu)\nabla_\mu q_f(w, \mu, T) + \nabla_\mu p_1(w, \mu) q_f(w, \mu)\\
    \Longrightarrow & \norm{\nabla_\mu p_{f,2}(w, \mu)} \le \exp\qty(G_f^2\norm{\bK_{2, 1}^\top\bQ_{2, 1}}_{op})\norm{\bK_{2, 1}^\top\bQ_{2, 1}}_{op}G^2_fL_f.
\end{align*}
Define the quantity $M_f$ as
\begin{align*}
    M_f := \exp\qty(G_f^2\norm{\bK_{2, 1}^\top\bQ_{2, 1}}_{op})\norm{\bK_{2, 1}^\top\bQ_{2, 1}}_{op}G^2_fL_f,
\end{align*}
and analogously for $M_g$.

Let us define the function $p$ by
\begin{align*}
    p(w, \mu) = \begin{bmatrix}
        M_fM_g \cdot p_0(w, \mu)\\
        G_fM_g \cdot p_{f,1}(w, \mu)\\
        M_g \cdot p_{f,2}(w, \mu)\\
        M_fG_g \cdot p_{g,1}(w, \mu)\\
        M_f \cdot p_{g,2}(w, \mu)
    \end{bmatrix}
\end{align*}
$p$ is both uniformly bounded by and has a Lipschitz constant of $M_fM_g$. Therefore by \Cref{lem:key-simulation-lemma}, we have the following bounds:
\begin{align*}
    \norm{\frac{1}{T}\sum_{j=1}^Tp_0(x_{j-\tau:j}, \mu(x_{\le j})) - \frac{1}{\abs{z}}\sum_{j=1}^{\abs{z}}p_0(z_{j-\tau:j}, \mu(z_{\le j}))} & \lesssim \frac{\tau + 1}{n^{1/3}}\\
    \abs{\frac{1}{T}\sum_{j=1}^Tp_{f, 1}(x_{j-\tau:j}, \mu(x_{\le j})) - \frac{1}{\abs{z}}\sum_{j=1}^{\abs{z}}p_{f, 1}(z_{j-\tau:j}, \mu(z_{\le j}))} & \lesssim \frac{M_fG_f^{-1}(\tau + 1)}{n^{1/3}}\\
    \norm{\frac{1}{T}\sum_{j=1}^Tp_{f, 2}(x_{j-\tau:j}, \mu(x_{\le j})) - \frac{1}{\abs{z}}\sum_{j=1}^{\abs{z}}p_{f, 2}(z_{j-\tau:j}, \mu(z_{\le j}))} & \lesssim \frac{M_f(\tau + 1)}{n^{1/3}}\\
    \abs{\frac{1}{T}\sum_{j=1}^Tp_{g, 1}(x_{j-\tau:j}, \mu(x_{\le j})) - \frac{1}{\abs{z}}\sum_{j=1}^{\abs{z}}p_{g, 1}(z_{j-\tau:j}, \mu(z_{\le j}))} & \lesssim \frac{M_gG_g^{-1}(\tau + 1)}{n^{1/3}}\\
    \norm{\frac{1}{T}\sum_{j=1}^Tp_{g, 2}(x_{j-\tau:j}, \mu(x_{\le j})) - \frac{1}{\abs{z}}\sum_{j=1}^{\abs{z}}p_{g, 2}(z_{j-\tau:j}, \mu(z_{\le j}))} & \lesssim \frac{M_g(\tau + 1)}{n^{1/3}}.
\end{align*}

Let's first look at $p_0$. Observe that
\begin{align*}
    \frac{1}{T}\sum_{j=1}^Tp_0(x_{j-\tau:j}, \mu(x_{\le j})) = \frac{1}{T}\sum_{j=1}^T e_{x_j} = \mu(x),
\end{align*}
and therefore
\begin{align*}
    \norm{\mu(x) - \mu(z)} = \norm{\frac{1}{T}\sum_{j=1}^Tp_0(x_{j-\tau:j}, \mu(x_{\le j})) - \frac{1}{\abs{z}}\sum_{j=1}^{\abs{z}}p_0(z_{j-\tau:j}, \mu(z_{\le j}))} \lesssim \frac{\tau + 1}{n^{1/3}}.
\end{align*}

Next let's look at $p_{f,1}$. We have that
\begin{align*}
    &\abs{\frac{1}{T}\sum_{j=1}^T\exp(q_f(x_{j-\tau:j}, \mu(x_{\le j}), j)^\top \bK_{2, 1}^\top \bQ_{2, 1} q_f(x_{T-\tau:T}, \mu(x_{\le T}), T)) - \frac{1}{T}\sum_{j=1}^T p_{f,1}(x_{j-\tau:j}, \mu(x_{\le j}))}\\
    &\quad \le \frac{1}{T}\sum_{j=1}^T\Big|\exp(q_f(x_{j-\tau:j}, \mu(x_{\le j}), j)^\top \bK_{2, 1}^\top \bQ_{2, 1} q_f(x_{T-\tau:T}, \mu(x_{\le T}), T))\\
    &\qquad - \exp(q_f(x_{j-\tau:j}, \mu(x_{\le j}), T)^\top \bK_{2, 1}^\top \bQ_{2, 1} q_f(x_{T-\tau:T}, \mu(x_{\le T}), T))\Big|\\
    &\quad \le \frac{1}{T}\sum_{j=1}^T \exp(G_f^2 \norm{\bK_{2, 1}^\top\bQ_{2, 1}}_{op})\norm{\bK_{2, 1}^\top\bQ_{2, 1}}_{op}G_f\cdot H_f j^{-\gamma(f)}\\
    &\quad = \exp(G_f^2 \norm{\bK_{2, 1}^\top\bQ_{2, 1}}_{op})\norm{\bK_{2, 1}^\top\bQ_{2, 1}}_{op}G_f \cdot H_f  \xi_{\gamma(f)}(T),
\end{align*}
where we're letting $\xi_\gamma(T) := \frac{1}{T}\sum_{j=1}^T j^{-\gamma}$

Next, note that
\begin{align*}
    &\abs{\frac{1}{\abs{z}}\sum_{j=1}^{\abs{z}}\exp(q_f(z_{j-\tau:j}, \mu(z_{\le j}), j)^\top \bK_{2, 1}^\top \bQ_{2, 1} q_f(z_{\abs{z}-\tau:\abs{z}}, \mu(z), \abs{z})) - \frac{1}{\abs{z}}\sum_{j=1}^{\abs{z}} p_{f,1}(z_{j-\tau:j}, \mu(z_{\le j}))}\\
    &\le \frac{1}{\abs{z}}\sum_{j=1}^{\abs{z}}\exp(G_f^2 \norm{\bK_{2, 1}^\top\bQ_{2, 1}}_{op})\Big|q_f(z_{j-\tau:j}, \mu(z_{\le j}), j)^\top \bK_{2, 1}^\top \bQ_{2, 1} q_f(z_{\abs{z}-\tau:\abs{z}}, \mu(z), \abs{z})\\
    & \qquad - q_f(z_{j-\tau:j}, \mu(z_{\le j}), T)^\top \bK_{2, 1}^\top \bQ_{2, 1} q_f(x_{\abs{z}-\tau:\abs{z}}, \mu(x), T)\Big|\\
    &\le \frac{1}{\abs{z}}\sum_{j=1}^{\abs{z}}\exp(G_f^2 \norm{\bK_{2, 1}^\top\bQ_{2, 1}}_{op})G_f\norm{\bK_{2, 1}^\top\bQ_{2, 1}}_{op}\Big(\norm{q_f(z_{j-\tau:j}, \mu(z_{\le j}), j) - q_f(z_{j-\tau:j}, \mu(z_{\le j}), T)}\\
    &\qquad + \norm{q_f(z_{\abs{z}-\tau:\abs{z}}, \mu(z), \abs{z}) - q_f(x_{\abs{z}-\tau:\abs{z}}, \mu(x), T)}\Big)\\
    &\le \frac{1}{\abs{z}}\sum_{j=1}^{\abs{z}}\exp(G_f^2 \norm{\bK_{2, 1}^\top\bQ_{2, 1}}_{op})G_f\norm{\bK_{2, 1}^\top\bQ_{2, 1}}_{op}\qty(H_f j^{-\gamma(f)} + H_f \abs{z}^{-\gamma(f)} + L_f\norm{\mu(z) - \mu(x)})\\
    &\lesssim \exp(G_f^2 \norm{\bK_{2, 1}^\top\bQ_{2, 1}}_{op})G_f\norm{\bK_{2, 1}^\top\bQ_{2, 1}}_{op}\qty(H_f\xi_{\gamma(f)}(\abs{z}) + L_f(\tau + 1)n^{-1/3})
\end{align*}
Altogether,
\begin{align*}
    &\Bigg|\frac{1}{T}\sum_{j=1}^T\exp(q_f(x_{j-\tau:j}, \mu(x_{\le j}), j)^\top \bK_{2, 1}^\top \bQ_{2, 1} q_f(x_{T-\tau:T}, \mu(x_{\le T}), T))\\
    &\qquad - \frac{1}{\abs{z}}\sum_{j=1}^{\abs{z}}\exp(q_f(z_{j-\tau:j}, \mu(z_{\le j}), j)^\top \bK_{2, 1}^\top \bQ_{2, 1} q_f(z_{\abs{z}-\tau:\abs{z}}, \mu(z), \abs{z}))\Bigg|\\
    &\lesssim \exp(G_f^2 \norm{\bK_{2, 1}^\top\bQ_{2, 1}}_{op})G_f\norm{\bK_{2, 1}^\top\bQ_{2, 1}}_{op}\qty(H_f\xi_{\gamma(f)}(\abs{z}) + L_f(\tau + 1)n^{-1/3})\\
    &\qquad + \abs{\frac{1}{T}\sum_{j=1}^Tp_{f,1}(x_{j-\tau:j}, \mu(x_{\le j})) - \frac{1}{\abs{z}}\sum_{j=1}^{\abs{z}}p_{f,1}(z_{j-\tau:j}, \mu(z_{\le j}))}\\
    & \lesssim \frac{\exp\qty(G_f^2\norm{\bK_{2, 1}^\top\bQ_{2, 1}}_{op})L_f(\tau + 1)}{n^{1/3}}\\ 
    &\lesssim \exp(G_f^2 \norm{\bK_{2, 1}^\top\bQ_{2, 1}}_{op})G_f\norm{\bK_{2, 1}^\top\bQ_{2, 1}}_{op}\qty(H_f\xi_{\gamma(f)}(n) + L_f(\tau + 1)n^{-1/3}).
\end{align*}

Similarly, we can bound the numerators by
\begin{align*}
    &\Bigg|\frac{1}{T}\sum_{j=1}^T\exp(q_f(x_{j-\tau:j}, \mu(x_{\le j}), j)^\top \bK_{2, 1}^\top \bQ_{2, 1} q_f(x_{T-\tau:T}, \mu(x_{\le T}), T))q_f(x_{j-\tau:j}, \mu(x_{\le j}), j)\\
    &\quad - \frac{1}{\abs{z}}\sum_{j=1}^{\abs{z}}\exp(q_f(z_{j-\tau:j}, \mu(z_{\le j}), j)^\top \bK_{2, 1}^\top \bQ_{2, 1} q_f(z_{\abs{z}-\tau:\abs{z}}, \mu(z), \abs{z}))q_f(z_{j-\tau:j}, \mu(z_{\le j}), j)\Bigg|\\
    &\quad \lesssim \exp(G_f^2 \norm{\bK_{2, 1}^\top\bQ_{2, 1}}_{op})G^2_f\norm{\bK_{2, 1}^\top\bQ_{2, 1}}_{op}\qty(H_f\xi_{\gamma(f)}(n) + L_f(\tau + 1)n^{-1/3})
\end{align*}

Finally, we bound
\begin{align*}
    \norm{q_f(x_{T-\tau:T}, \mu(x), T) - q_f(z_{\abs{z}-\tau:\abs{z}}, \mu(z), \abs{z})} &\le L_f\norm{\mu(x) - \mu(z)} + H_f\abs{z}^{-\gamma(f)}\\
    &\lesssim L_f(\tau + 1)n^{-1/3} + H_f n^{-\gamma(f)}.
\end{align*}
Altogether, we can relate $\bY^{(2)}_T(x)$ and $\bY^{(2)}_{\abs{z}}(z)$ by
\begin{align*}
    \norm{\bY^{(2)}_T(x) - \bY^{(2)}_{\abs{z}}(z)} \lesssim \exp(4G_f^2 \norm{\bK_{2, 1}^\top\bQ_{2, 1}}_{op})G^2_f\norm{\bK_{2, 1}^\top\bQ_{2, 1}}_{op}\qty(1 + \norm{\bV_2}_{op})\qty(H_f\xi_{\gamma(f)}(n) + L_f(\tau + 1)n^{-1/3}).
\end{align*}
Finally, we have that
\begin{align*}
    \norm{f(x) - f(z)} \le \norm{\bU}\qty(1 + \norm{\bB_2}_{op}\norm{\bA_2}_{op})\norm{\bY^{(2)}_T(x) - \bY^{(2)}_{\abs{z}}(z)}.
\end{align*}
Plugging in the expressions for $G_f, L_f, H_f$, and noting that $\xi_\gamma(n) \lesssim n^{-(1/3 \land \gamma)}$, yields
\begin{align*}
    \norm{f(x) - f(z)} \lesssim C(f) n^{-(1/3 \land \gamma(f))},
\end{align*}
where
\begin{align*}
        C(f) := &\exp\qty(C\qty(1 + \sum_{h=1}^H \norm{\bV_{1, h}}_{op})^2(1 + \norm{\bB_1}_{op}\norm{\bA_1}_{op})^2\norm{\bK_{2,1}^\top\bQ_{2, 1}}_{op})\qty(1 + \norm{\bV_2}_{op})\\
        &\qquad \times \qty(\sum_{h=1}^H \norm{\bV_{1, h}}_{op}\exp(4\norm{\bK_{1, h}^\top\bQ_{1, h}}_{op}))\qty(1 + \norm{\bB_2}_{op}\norm{\bA_2}_{op})\norm{\bU}_{op}(\tau^2 + 1)S.
\end{align*}
Repeating the above argument for the transformer $g$, we have that
\begin{align*}
    \norm{g(x) - g(z)} \lesssim C(g) n^{-(1/3 \land \gamma(g))}.
\end{align*}
Combining these together implies the desired result.
\end{proof}

\subsubsection{Helper Lemmas}
\begin{lem}\label{lem:f-cont-in-T}
    Let $f(i)$ be of the form
    \begin{align*}
        f(i) = \frac{\sum_k A_k i^{-\gamma_k}}{\sum_k B_k i^{-\gamma_k}},
    \end{align*}
    where $A_k \in \R^d$ and $\sum_k B_k i^{-\gamma_k} \ge \delta$ for all $i$. Assume that $0 = \gamma_1 < \gamma_2 < \dots < \gamma_K$. Then, for $j < i$,
    \begin{align*}
        \norm{f(i) - f(j)} &\le \delta^{-2}j^{-\gamma_2}\qty(\sum_k \norm{A_k})\qty(\sum_k \abs{B_k})
    \end{align*}
\end{lem}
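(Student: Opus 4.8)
The plan is to write $f$ as a ratio of a vector-valued numerator and a scalar denominator, put $f(i)-f(j)$ over a common denominator, and reduce everything to a single pointwise estimate on the ``commutator'' factors $i^{-\gamma_k}j^{-\gamma_l}-j^{-\gamma_k}i^{-\gamma_l}$. Throughout I will use that in context $1\le j<i$ are integers (so all of $i,j,ij\ge 1$), and that $\delta>0$ since $\delta^{-2}$ appears in the bound.

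\textbf{Step 1 (reduce to the numerator).} Set $N(i):=\sum_k A_k i^{-\gamma_k}\in\R^d$ and $D(i):=\sum_k B_k i^{-\gamma_k}\in\R$, so that $f(i)=N(i)/D(i)$. Over a common denominator,
\[
f(i)-f(j)=\frac{N(i)D(j)-N(j)D(i)}{D(i)D(j)}.
\]
By hypothesis $D(i),D(j)\ge\delta>0$, so $|D(i)D(j)|\ge\delta^2$, and it remains to bound $\|N(i)D(j)-N(j)D(i)\|$.

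\textbf{Step 2 (expand and cancel the diagonal).} Expanding the product,
\[
N(i)D(j)-N(j)D(i)=\sum_{k,l}A_kB_l\bigl(i^{-\gamma_k}j^{-\gamma_l}-j^{-\gamma_k}i^{-\gamma_l}\bigr),
\]
and the terms with $k=l$ vanish.

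\textbf{Step 3 (the key pointwise bound).} The claim is that $\bigl|i^{-\gamma_k}j^{-\gamma_l}-j^{-\gamma_k}i^{-\gamma_l}\bigr|\le j^{-\gamma_2}$ for every pair $(k,l)$. The factor is antisymmetric in $(k,l)$, so assume WLOG $\gamma_k\le\gamma_l$; if $\gamma_k=\gamma_l$ it is $0$. If $\gamma_k=0$ (i.e. $k=1$) the factor equals $j^{-\gamma_l}-i^{-\gamma_l}$, which lies in $(0,j^{-\gamma_l}]\subseteq(0,j^{-\gamma_2}]$ because $j<i$, $\gamma_l>0$, and $\gamma_l\ge\gamma_2$. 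If $\gamma_k>0$, factor out $(ij)^{-\gamma_k}$:
\[
i^{-\gamma_k}j^{-\gamma_l}-j^{-\gamma_k}i^{-\gamma_l}=(ij)^{-\gamma_k}\bigl(j^{-(\gamma_l-\gamma_k)}-i^{-(\gamma_l-\gamma_k)}\bigr);
\]
the first factor is at most $j^{-\gamma_k}\le j^{-\gamma_2}$ (using $i\ge1$ and $\gamma_k\ge\gamma_2$), and the second lies in $(0,1)$ (using $1\le j<i$ and $\gamma_l-\gamma_k>0$). In all cases the bound $j^{-\gamma_2}$ holds.

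\textbf{Step 4 (assemble).} By the triangle inequality and Step 3,
\[
\bigl\|N(i)D(j)-N(j)D(i)\bigr\|\le\sum_{k,l}\|A_k\|\,|B_l|\,j^{-\gamma_2}=j^{-\gamma_2}\Bigl(\sum_k\|A_k\|\Bigr)\Bigl(\sum_k|B_k|\Bigr),
\]
and dividing by $|D(i)D(j)|\ge\delta^2$ gives the claimed inequality. The only point requiring care is the case split in Step 3: the pairs with $k=1$ inherit the exponent $\gamma_2$ from $\gamma_l\ge\gamma_2$, while every other pair inherits it from $\gamma_k\ge\gamma_2$, and one must invoke $i,j\ge1$ to discard the remaining negative powers — this is the (minor) crux of the argument; everything else is bookkeeping.
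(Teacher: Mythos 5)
Your proof is correct and follows essentially the same route as the paper's: put $f(i)-f(j)$ over a common denominator, lower-bound the denominator by $\delta^2$, note the diagonal terms cancel in the expanded numerator, and bound each off-diagonal cross term by $j^{-\gamma_2}$ to get the product of the norm sums. The only difference is that the paper asserts the final pointwise bound in one line, whereas you supply the case analysis ($\gamma_k=0$ versus $\gamma_k\ge\gamma_2$, using $1\le j<i$), which just fills in detail the paper leaves implicit.
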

\begin{proof}
    One can write
    \begin{align*}
        \norm{f(i) - f(j)} &= \frac{\norm{(\sum_k A_k i^{-\gamma_k})(\sum_k B_k j^{-\gamma_k}) - (\sum_k A_k j^{-\gamma_k})(\sum_k B_k i^{-\gamma_k})}}{\abs{(\sum_k B_k i^{-\gamma_k})(\sum_k B_k j^{-\gamma_k})}}\\
        &\le \delta^{-2}\norm{\sum_{l \neq k}(A_lB_k - A_kB_l)i^{-\gamma_l}j^{-\gamma_k}}\\
        &\le \delta^{-2}j^{-\gamma_2}\qty(\sum_k \norm{A_k})\qty(\sum_k \abs{B_k})
    \end{align*}
\end{proof}


    

\subsection{In-context $k$-gram construction}\label{sec:k-gram-construction}

Below, we sketch the in-context $k$-gram construction, which closely follows Construction 2 in \cite{nichani2024causal}. 

In the first layer, the $h$-th head will attend fully to the $(i-h)$-th token; this is done by setting $\phi_{1, h}(i-h, i)$ to be large, and the rest of the entries of $\phi$, along with $\bK_{1, h}$ and $\bQ_{1, h}$, equal to 0. By choosing an embedding dimension of $d \ge (\tau + 1)S$, the value matrices $\bV_{1, h}$ can be chosen such that $\bY^{(1)}_i = \bE_{x_i} \oplus \bE_{x_{i-1}} \oplus \cdots \oplus \bE_{x_{i-\tau}}$; this is accomplished via $\bV_{1, h}$ being a block identity matrix, which thus satisfies $\norm{\bV_{1, h}}_{op} = 1$. The first layer MLP is then set to identically zero, so that $\by^{(1)} = \bY^{(1)}$

In the second layer, $\bK_{2, 1}^\top\bQ_{2, 1}$ is equal to $\beta$ times another block-identity matrix, which compares the $\bE_{x_{i-1}} \oplus \cdots \oplus \bE_{x_{i-\tau}}$ subspace of $\by^{(1)}_i$ to the $\bE_{x_{T}} \oplus \cdots \oplus \bE_{x_{T-\tau+1}}$ subspace of $\by^{(1)}_T$. This places an attention weight of $e^{\tau \beta}$ on each token $i$ with $x_{i-\tau:i-1} = x_{T-\tau+1:T}$, and a weight of at most $e^{\tau(\beta - 1)}$ on all other tokens. Finally, the second value matrix $\bV_{2, 1}$ copies from the $\bE_{x_i}$ subspace of $\by_i$, while the second layer MLP is also zero.

Altogether, the output of the transformer is
\begin{align*}
    f(x_{1:T})_s = \frac{e^{\beta\tau}\sum_{i}\mathbf{1}( x_{i-\tau:i-1} = x_{T-\tau+1:T}, x_i = s) + E_{N, s}}{e^{\beta\tau}\sum_{i}\mathbf{1}(x_{i-\tau:i-1} = x_{T-\tau+1:T}) + E_D},
\end{align*}
where $\norm{E_N}, \abs{E_D} \le Te^{\beta(\tau-1)}$. Therefore
\begin{align*}
    \norm{f(x_{1:T}) - f^*(x_{1:T})} \lesssim \frac{T}{e^\beta \sum_{i}\mathbf{1}(x_{i-\tau:i-1} = x_{T-\tau+1:T})}
\end{align*}
On a ``typical" sequence $x$, $\sum_{i}\mathbf{1}(x_{i-\tau:i-1} = x_{T-\tau+1:T}) = \Theta(T)$, in which case 
\begin{align*}
    \norm{f(x_{1:T}) - f^*(x_{1:T})} \lesssim e^{-\beta} \le \e
\end{align*}
whenever $\beta \asymp \log(1/\e)$. Therefore $\norm{\bK_{2, 1}^\top\bQ_{2, 1}}_{op} = \Theta(\log(1/\e))$. Plugging in, this yields a complexity measure of
\begin{align*}
C(f) = \exp\qty(Ck^2\log(1/\e))k^2S = \e^{-\Theta(k^2)}.
\end{align*}

\section{Experimental Methodology}\label{sec:experimental-details}

\textbf{Data Generation:}
\begin{itemize}
    \item $\mathrm{SimpleTask}$: Each sequence $x_{1:T}$ is generated by first sampling a probability vector $\mathbf{p} \in \mathbb{R}^3$ uniformly at random over the simplex, then sampling each $x_i$ i.i.d, where $x_i = s$ with probability $\mathbf{p}_s$. This ensures that $\mathrm{Var}(f^*) = \Theta(1)$. We vary $\omega$ between $2$ and $5.5$ in intervals of $0.5$.
    \item $\mathrm{ModPTask}$: Each sequence $x_{1:T}$ is generated by first generating $q_0, \dots, q_{p-1}$ i.i.d uniformly from $[0, 1]$. Then, each $x_i$ is sampled from $\mathrm{Bernoulli}(p_k)$, where $k \equiv i \mod p$. This ensures that $\mathrm{Var}(f^*) = \Theta(1)$, and also that attending to incorrect positions mod $p$ cannot help the model. We vary $\Delta$ from $3$ to $8$
    \item \emph{In-context $k$-gram:} The data generation follows that of \cite{nichani2024causal}. Each sequence $x_{1:T}$ is generated by first sampling a $k$-wise transition tensor $\pi \in [S]^k$, where for any $z_{1:k-1}$ the distribution $\pi(\cdot \mid z_{1:k-1})$ is sampled uniformly at random over the simplex in $S$ dimensions. Next, $x_{1:k-1}$ are sampled uniformly at random. Finally, for $i \ge k$, we sample $x_i \sim \pi( \cdot \mid x_{i-k:i-1})$. To ensure that $x_{T-k+1:T}$ occurs at least once in the sequence, we randomly select an index $i \in [k:T-1]$, and replace $x_{i-k+1:i}$ with $x_{T-k+1:T}$. We fix $k=2$ and vary $S$ from $3$ to $8$, and also fix $S=2$ and vary $k$ from $2$ to $4$.
\end{itemize}

\textbf{Training Procedure:} 
\begin{itemize}
    \item Single-layer transformers: The model architecture is one layer of a single self-attention head followed by an MLP. The embedding dimension is $d=16$ and the MLP width is 256. We use the $\mu$P initialization \cite{yang2022tensor}, and train using the Adam optimizer with learning rate $\eta = 10^{-2}/d$ for the hidden layers and $\eta = 10^{-2}$ for the embedding layers. We train all of the models using online SGD (sampling a fresh batch of size 1024 at each step), until the training loss crosses below $10^{-5}$. All results are averaged over 8 random seeds.
    \item Two-layer transformers: The model architecture is a two-layer transformer, with $k-1$ heads in the first layer and one head in the second layer. The embedding dimension is either $d = 32$ (when $k=2$ is fixed and $S$ ranges from 3 to 8) or $d=16$ (when $S=2$ and $k$ ranges from 2 to 4). We use the $\mu$P initialization and train using the Adam optimizer with learning rate $\eta = 3 \cdot 10^{-2}/d$ for the hidden layers and $\eta = 10^{-2}$ for the embedding layers, on a fresh batch of size 1024 at each step for $2^{15}$ steps. The $k=2$ results are averaged over 8 random seeds, while the $S=2$ results are averaged over 14 random seeds.
\end{itemize}

\subsection{Expressivity of Synthetic Tasks}\label{sec:expressivity}
We sketch the constructions for each of the synthetic tasks in \Cref{sec: experiments}.

\textbf{SimpleTask:} Set $\bp_i = 0$, and let $\bE_0, \bE_1, \bE_2$ be orthogonal. Choose $\bK, \bQ$ so that $a_{i,j} = \infty$ when $j = 0,1$ and $a_{i,j} = 0$ when $j = 2$. The attention probabilities will then be uniform over all 0 and 1 tokens, and thus the output of self-attention becomes $\bY_T = \bE_{x_T} + \frac{c_0(x)}{c_0(x) + c_1(x)}\bV\bE_0 + \frac{c_1(x)}{c_0(x) + c_1(x)}\bV\bE_1$. We can then set $\bV\bE_0 = -\bV\bE_1$. It suffices to approximate the one-dimensional function $z \mapsto \sin(\omega z)$ with an MLP; it is well known \citep{barron1993universal} that this can be done with weight norms $\Theta(\omega)$, as desired.

\textbf{ModPTask:} Let $\{\bq_i\}_{i \in [\Delta]}$ be some fixed set of orthogonal embeddings, and let $\bp_i$ be equal to $\bq_j$, where $i \neq j \mod p$. These are periodic embeddings with periodicity $\Delta = p$. Choose $\bK, \bQ$ so that $a_{i,j}$ equals $\infty$ if $j \equiv k \mod p$ and $0$ otherwise. The attention probabilities will then be uniform over all positions which are $k \mod p$. Choosing $\bV$ so that $\bV\bq_j =0$ for all $j$, the output of self-attention becomes $\bY_T = \by_T + f^*(x_{1:T})\bV\bE_1 + (1 - f^*(x_{1:T})\bV\bE_0$. Choosing the readout layer appropriately, we can ensure that $T(x)_T = f^*(x_{1:T})$, as desired.

\end{document}